\DeclareMathOperator*{\argmax}{arg\,max} %
\theoremstyle{plain}
\newtheorem{theorem}{Theorem}[section]
\newtheorem{proposition}[theorem]{Proposition}
\crefname{proposition}{Proposition}{Propositions}
\theoremstyle{definition}
\theoremstyle{remark}
\newcommand{\rot}[1]{\makecell[c]{\rotatebox{90}{#1}}}
\title{Scalable Feature Learning on Huge Knowledge Graphs for Downstream Machine Learning}
\author{%
  Félix Lefebvre\\
  SODA Team, Inria Saclay\\
  \texttt{felix.lefebvre@inria.fr} \\
  \And
  Gaël Varoquaux \\
  SODA Team, Inria Saclay \\
  Probabl \\
}
\begin{document}

\maketitle

\begin{abstract}
Many machine learning tasks can benefit from external knowledge. Large knowledge graphs store such knowledge, and embedding methods can be used to distill it into ready-to-use vector representations for downstream applications. For this purpose, current models have however two limitations: they are primarily optimized for link prediction, via local contrastive learning, and their application to the largest graphs requires significant engineering effort due to GPU memory limits.
To address these, we introduce SEPAL: a Scalable Embedding Propagation ALgorithm for large knowledge graphs designed to produce high-quality embeddings for downstream tasks at scale.
The key idea of SEPAL is to ensure global embedding consistency by optimizing embeddings only on a small core of entities, and then propagating them to the rest of the graph with message passing.
We evaluate SEPAL on 7 large-scale knowledge graphs and 46 downstream machine learning tasks. Our results show that SEPAL significantly outperforms previous methods on downstream tasks. In addition, SEPAL scales up its base embedding model, enabling fitting huge knowledge graphs on commodity hardware. Our code is available at: \href{https://github.com/soda-inria/sepal}{https://github.com/soda-inria/sepal}.
\end{abstract}

\addtocontents{toc}{\protect\setcounter{tocdepth}{0}}

\section{Introduction: embedding knowledge for downstream tasks}

\paragraph{External knowledge for machine learning}
Bringing general knowledge to a machine-learning task revives an old promise of making it easier via this knowledge \citep{lenat2000thresholds}.
Indeed, data science is often about entities of the world---persons, places, organizations---that are well characterized in general-purpose knowledge graphs.
These graphs carry rich information, including numerical attributes and relationships between entities, and can be connected to string values in tabular data through entity linking techniques \citep{mendes2011dbpedia,foppiano2020entity,delpeuch2019opentapioca}.
A thorny challenge, however, is to transform this relational information into features for downstream tabular machine learning \citep{kanter2015deep,cappuzzo2024retrieve,robinson2024relbench}.
To that end, a scalable solution is offered by graph embedding methods that distill the graph information into node features readily usable by any downstream tabular learner \citep{grover2016node2vec,cvetkov2023relational,ruiz2024high}.

\paragraph{Knowledge graphs as general knowledge sources}
The rapid growth of general-purpose knowledge graphs brings the exciting prospect of a very \emph{general} feature enrichment.
Indeed, a richer knowledge graph provides more comprehensive coverage and context, thereby bringing greater value to the downstream analysis \citep{ruiz2024high}.
ConceptNet pioneered the distribution of general-knowledge embeddings, building on a graph of 8 million entities \citep{speer2017conceptnet}.
Since then, knowledge graphs have continued to expand rapidly. For instance, as of 2025, Wikidata \citep{vrandevcic2014wikidata} describes 115M entities and gains around 15M yearly \citep{wikidatagrowth}, and YAGO4 \citep{pellissier2020yago} gives a curated view on 67M entities.

\paragraph{Optimizing embeddings for the right task}
In parallel, the sophistication of knowledge-graph embedding (KGE) models is increasing \citep{bordes2013translating,yang2015embedding,balazevic2019multi}, capturing better the relational aspect of the data, important for downstream tasks \citep{cvetkov2023relational}.
However, most of the KGE literature prioritizes link prediction as the primary benchmark, despite recent findings showing that strong performance on this task does not correlate with improved performance on downstream predictive tasks \citep{ruffinelli2024beyond}.
One reason may be that, for link prediction, models typically optimize for local contrasts, resulting in embeddings that are not calibrated \citep{Tabacof2020Probability, arakelyan2023adapting}. While prior work has explored multi-hop reasoning to capture more complex graph patterns \citep{hamilton2018embedding, ren2020beta}, the standard evaluation paradigm for KGEs still revolves around \emph{internal} tasks, rather than how the learned embeddings can transfer knowledge to practical machine-learning tasks beyond the knowledge graph itself.

\paragraph{The importance of scalability}
To leverage the full potential of very large knowledge graphs, embedding methods \emph{must} be highly scalable. While many methods have been proposed to scale KGE models, doing so is not trivial. Sophisticated KGE models are typically demonstrated on small datasets like FB15k (15k entities) or WN18 (40k entities), which are orders of magnitude smaller than modern general-purpose or industrial knowledge graphs \citep{sullivan2020reintroduction}.
The common solution to this scalability challenge is either distributed computation across multiple GPUs or machines \citep{lerer2019pytorch, zhu2019graphvite, zheng2020dgl, dong2022het, zheng2024ge2}, or leveraging the full memory hierarchy (disk, CPU, and GPU) on a single machine \citep{mohoney2021marius, ren2022smore}.
These approaches require significant engineering effort to manage data partitioning, optimize data movement, and minimize synchronization overheads.

\paragraph{Contributions}
In this paper, we aim to bridge the gap between advances in embedding methods and the goal to create large and reusable general-knowledge embeddings for downstream applications.
We introduce SEPAL, a scalable algorithm that applies as a wrapper to many embedding models. Our contributions are:
\begin{enumerate}[itemsep=0pt, topsep=0pt, leftmargin=14pt]
    \item We propose a new embedding optimization strategy that enforces global consistency among positive triples. Instead of optimizing all embeddings with local contrastive learning, SEPAL first processes a small but dense \emph{core} of the graph, to learn relation and core-entity embeddings. It then propagates these embeddings to the remaining entities using relation-aware message passing. The absence of negative sampling at this propagation stage accelerates the embedding computation and makes them better suited for downstream tasks.
    \item We provide a theoretical analysis showing that SEPAL's propagation step, combined with DistMult, implicitly maximizes the alignment of embeddings within positive triples.
    \item We introduce BLOCS, a scalable graph-splitting algorithm that partitions huge, scale-free graphs into manageable, overlapping subgraphs. This enables fitting the embedding process on a single GPU, avoiding the engineering complexity of distributed systems. Here, the challenge lies in the scale-free and connectivity properties of a large knowledge graph: some nodes are connected to a significant fraction of the graph, while others are hard to reach.
    \item We conduct an extensive empirical study on 7 large knowledge graphs and 46 downstream tasks. Results show that SEPAL significantly outperforms standard methods on downstream tasks and is generally faster than existing large-scale systems. Moreover, it scales to ultra-large graphs with little computational resources: we embed WikiKG90Mv2---91M entities, 601M triples---with a single 32GB V100 GPU. Our experimental results also highlight that using such large knowledge graphs is beneficial for downstream tasks in real-world feature enrichment scenarios.
\end{enumerate}

We start by reviewing related work in \autoref{sec:related_work}. Then, \autoref{sec:sepal} describes our contributed method and \autoref{sec:theory} gives a theoretical analysis.
In \autoref{sec:experiments} we evaluate SEPAL's performance on knowledge graphs of increasing size between YAGO3 \citep[2.6M entities,][]{mahdisoltani2014yago3} and WikiKG90Mv2 \citep[91M entities,][]{hu2020open}; we study the use of the embeddings for feature enrichment on 46 downstream machine learning tasks, showing that SEPAL makes embedding methods more tractable while generating better embeddings for downstream tasks.
Finally, \autoref{sec:discussion} discusses the contributions and limitations of SEPAL.

\section{Related work: embedding optimization and scalability}
\label{sec:related_work}

Knowledge graphs are multi-relational graphs storing information as triples $(h,r,t)$, where $h$ is the head entity, $r$ is the relation, and $t$ is the tail entity.
We denote $\mathcal{V}$ and $\mathcal{R}$ respectively the set of entities and relations, and $\mathcal{K}$ the set of triples of a given knowledge graph ($\mathcal{K} \subset \mathcal{V} \times \mathcal{R} \times \mathcal{V}$).

\subsection{Optimizing knowledge-graph embeddings}

Here, we provide an overview of approaches to generate low-dimensional (typically $d=100$) vector representations for the entities, that can be used in downstream applications. These include both graph-embedding techniques, that leave aside the relations, and KGE methods accounting for relations.

\paragraph{Graph embedding}

A first simple strategy to get very cost-effective vector representations is to compute \emph{random projections}. This avoids relying on --potentially costly-- optimization, and provides embeddings preserving the pairwise distances to within an epsilon \citep{dasgupta2003elementary}. FastRP \citep{chen2019fast} proposes a scalable approach, with a few well-chosen very sparse random projections of the normalized adjacency matrix and its powers.

Another family of methods performs explicit \emph{matrix factorizations} on matrices derived from the adjacency matrix, for instance GraREP \citep{cao2015grarep} or NetMF \citep{qiu2018network}. These methods output close embedding vectors for nodes with similar neighborhoods.

Similarly, \emph{Skip-Gram Negative Sampling} (SGNS), behind word2vec \citep{mikolov2013efficient, mikolov2013distributed},  performs an implicit factorization \citep{levy2014neural}. It has been adapted to graphs: DeepWalk \citep{perozzi2014deepwalk} and node2vec \citep{grover2016node2vec} use random walks on the graph to generate ``sentences" fed to word2vec. LINE \citep{tang2015line} explores a similar strategy varying edge sampling.
Here, the loss function is typically a binary logistic regression objective:
\begin{align}
    \mathcal{L}_{\text{SGNS}} = - \log \sigma(\boldsymbol\theta_{w_c}^\top \boldsymbol\theta_{w_t}) - \sum_{i=1}^{p} \log \left(1 - \sigma(\boldsymbol\theta_{w_i}^\top \boldsymbol\theta_{w_t})\right)
\end{align}
with $\boldsymbol\theta_{w_t}$ and $\boldsymbol\theta_{w_c}$ the embeddings of the target and context nodes (or words), $w_i$ the $i$-th negative sample drawn from a noise distribution, $p$ the number of negative samples, and $\sigma$ the sigmoid function.

\begin{table}[t]
\begin{minipage}{.36\linewidth}
\caption{Expression of $\phi$ in some embedding models. $\odot$ denotes the Hadamard product, $\otimes$ the Hamilton product, and $\times_i$ the tensor product along mode $i$. The models we list here are all compatible with our proposed SEPAL approach.}
\label{tab:relational_models}
\end{minipage}%
\hfill%
\begin{minipage}{.6\linewidth}
\begin{center}
\begin{small}
    \begin{tabular}{lr}
        \toprule
        \textbf{Model} & \textbf{Relational operator $\phi$} \\
        \midrule
        TransE \citep{bordes2013translating} & $\boldsymbol\theta_h + \boldsymbol{w}_r$ \\
        MuRE \citep{balazevic2019multi} & $\boldsymbol\theta_h \odot \boldsymbol\rho_r - \boldsymbol{w}_r$ \\
        RotatE \citep{sun2019rotate} & $\boldsymbol\theta_h \odot \boldsymbol{w}_r$ \\
        QuatE \citep{zhang2019quaternion} & $\boldsymbol\theta_h \otimes \boldsymbol{w}_r$ \\
        DistMult \citep{yang2015embedding} & $\boldsymbol\theta_h \odot \boldsymbol{w}_r$ \\
        ComplEx \citep{trouillon2016complex} & $\boldsymbol\theta_h \odot \boldsymbol{w}_r$ \\
        TuckER \citep{balavzevic2019tucker} & $\boldsymbol{\mathcal{W}} \times_1 \boldsymbol\theta_h \times_2 \boldsymbol{w}_r$ \\
        \bottomrule
    \end{tabular}
\end{small}
\end{center}
\end{minipage}
\end{table}

\paragraph{Knowledge-graph embedding}

RDF2vec \citep{ristoski2016rdf2vec} adapts SGNS to multi-relational graphs by simply adding the relations to the generated sentences.

More advanced methods model relations as geometric transformations in the embedding space. These \emph{triple-based methods}, inspired by SGNS, represent the plausibility of a triple given the embeddings $\boldsymbol\theta_h$, $\boldsymbol{w}_r$, and $\boldsymbol\theta_t$ of the entities and relation with a scoring function $f(h,r,t)$ often written as
\begin{flalign}\label{eq:scoring_function}
    \text{Scoring function} && f(h, r, t) = -\mathrm{sim}(\phi(\boldsymbol\theta_h, \boldsymbol{w}_r), \boldsymbol\theta_t) &&
\end{flalign}
where $\phi$ is a model-specific relational operator, and $\mathrm{sim}$ a similarity function.
The embeddings are optimized by gradient descent to maximize the score of positive triples, and minimize that of negative ones.
A possible loss function is the binary cross-entropy loss \citep{ali2021bringing}
\begin{align}
\mathcal{L}_{\text{BCE}} = - \log \sigma(f(h,r,t)) - \sum_{i=1}^{p} \log \big( 1 - \sigma(f(h_i', r, t_i')) \big)
\end{align}
which boils down to SGNS for $f(h, r, t) = \boldsymbol\theta_{h}^\top \boldsymbol\theta_{t}$.
These models strive to align, for positive triples, the tail embedding $\boldsymbol\theta_t$ with the ``relationally'' transformed head embedding $\phi(\boldsymbol\theta_h, \boldsymbol{w}_r)$.  The challenge is to design a clever $\phi$ operator to model complex patterns in the data, like hierarchies, compositions, or symmetries. Indeed some relations are one-to-one (people only have one biological mother), well represented by a translation \citep{bordes2013translating}, while others are many-to-one (for instance many people were {\tt BornIn} Paris), calling for $\phi$ to contract distances \citep{wang2017knowledge}. Many models explore different parametrizations, among which MuRE \citep{balazevic2019multi}, RotatE \citep{sun2019rotate}, or QuatE \citep{zhang2019quaternion} have good performance \citep{ali2021bringing}.
This framework also includes models like DistMult \citep{yang2015embedding}, ComplEx \citep{trouillon2016complex}, or TuckER \citep{balavzevic2019tucker}, that implicitly perform tensor factorizations.

\paragraph{Embedding propagation}
To smooth computed embeddings, CompGCN~\citep{vashishth2020compositionbased} introduces the idea of propagating knowledge-graph embeddings using the relational operator $\phi$, but couples it with learnable weights and a non-linearity. REP~\citep{wang2022rep} simplifies this framework by removing weight matrices and non-linearities.
\citet{rossi2022unreasonable} also use feature propagation, but to impute missing node features in graphs.
\citet{albooyeh2020out} incorporate propagation \emph{within} the standard link prediction pipeline, with negative sampling and gradient descent on standard KGE loss functions.

\subsection{Techniques for scaling graph algorithms}

Various tricks help scale graph algorithms to the sizes we are interested in --millions of nodes. %

\paragraph{Graph partitioning}
Scaling up graph computations, for graph embedding or more generally, often relies on breaking down graphs in subgraphs. \cref{app:partitioning_prior_work} presents corresponding prior work.

\paragraph{Local subsampling} Other forms of data reduction can help to scale graph algorithms (\emph{e.g.} based on message passing). Algorithms may subsample neighborhoods, as GraphSAGE~\citep{hamilton2017inductive} that selects a fixed number of neighbors for each node on each layer, or GraphSAINT~\citep{zeng2020graphsaint} that samples overlapping subgraphs through random walks, for supervised GNN training via node classification. %
Cluster-GCN~\citep{chiang2019cluster} restricts the neighborhood search within clusters, obtained by classic clustering algorithms, to improve computational efficiency. %
MariusGNN~\citep{waleffe2023mariusgnn} uses an optimized data structure for neighbor sampling and GNN aggregation. TIGER \citep{wang2024tiger} proposes a slice-and-cache procedure to optimize the subgraph extraction for large-scale inductive GNN training on knowledge graphs.

\paragraph{Multi-level techniques}
Multi-level approaches, such as HARP \citep{chen2018harp}, GraphZoom \citep{deng2019graphzoom} or MILE \citep{liang2021mile},  coarsen the graph, compute embeddings on the smaller graph, and project them back to the original graph.

\subsection{Scaling knowledge-graph embedding}

\paragraph{Parallel training} Many approaches scale triple-level stochastic solvers by distributing training across multiple workers, starting from the seminal PyTorch-BigGraph (PBG) \citep{lerer2019pytorch} that splits the triples into buckets based on the partitioning of the entities. The challenge is then to limit overheads and communication costs coming from 1) the additional data movement incurred by embeddings of entities occurring in several buckets 2) the synchronization of global trainable parameters such as the relation embeddings. For this, DGL-KE \citep{zheng2020dgl} reduces data movement by using sparse relation embeddings and METIS graph partitioning  \citep{karypis1997metis} to distribute the triples across workers.
HET-KG \citep{dong2022het} further optimizes distributed training by preserving a copy of the most frequently used embeddings on each worker, to reduce communication costs. These ``hot-embeddings" are periodically synchronized to minimize inconsistency.
SMORE \citep{ren2022smore} leverages asynchronous scheduling to overlap CPU-based data sampling, with GPU-based embedding computations. Algorithmically, it contributes a rejection sampling strategy to generate the negatives at low cost.
GraphVite \citep{zhu2019graphvite} accelerates SGNS for graph embedding by both parallelizing random walk sampling on multiple CPUs, and negative sampling on multiple GPUs.
Marius \citep{mohoney2021marius} reduces synchronization overheads by opting for asynchronous training of entity embeddings with bounded staleness, and minimizes IO with partition caching and buffer-aware data ordering.
GE2 \citep{zheng2024ge2} improves data swap between CPU and multiple GPUs.
Finally, the LibKGE \citep{libkge} Python library also supports parallel training and includes GraSH \citep{kochsiek2022start}, an efficient hyperparameter optimization algorithm for large-scale KGE models.

\paragraph{Parameter-efficient methods}
Other approaches reduce GPU memory pressure by limiting the number of parameters. NodePiece \citep{galkin2021nodepiece} and EARL \citep{chen2023entity} embed a subset of entities and train an encoder to compute the embeddings of the other entities. However, their tokenization step is costly, and they have not been demonstrated on graphs larger than 2.5M nodes.%

\section{SEPAL: revisiting knowledge-graph embedding optimization}%
\label{sec:sepal}

\begin{figure*}[t]
    \centerline{\includegraphics[width=\textwidth]{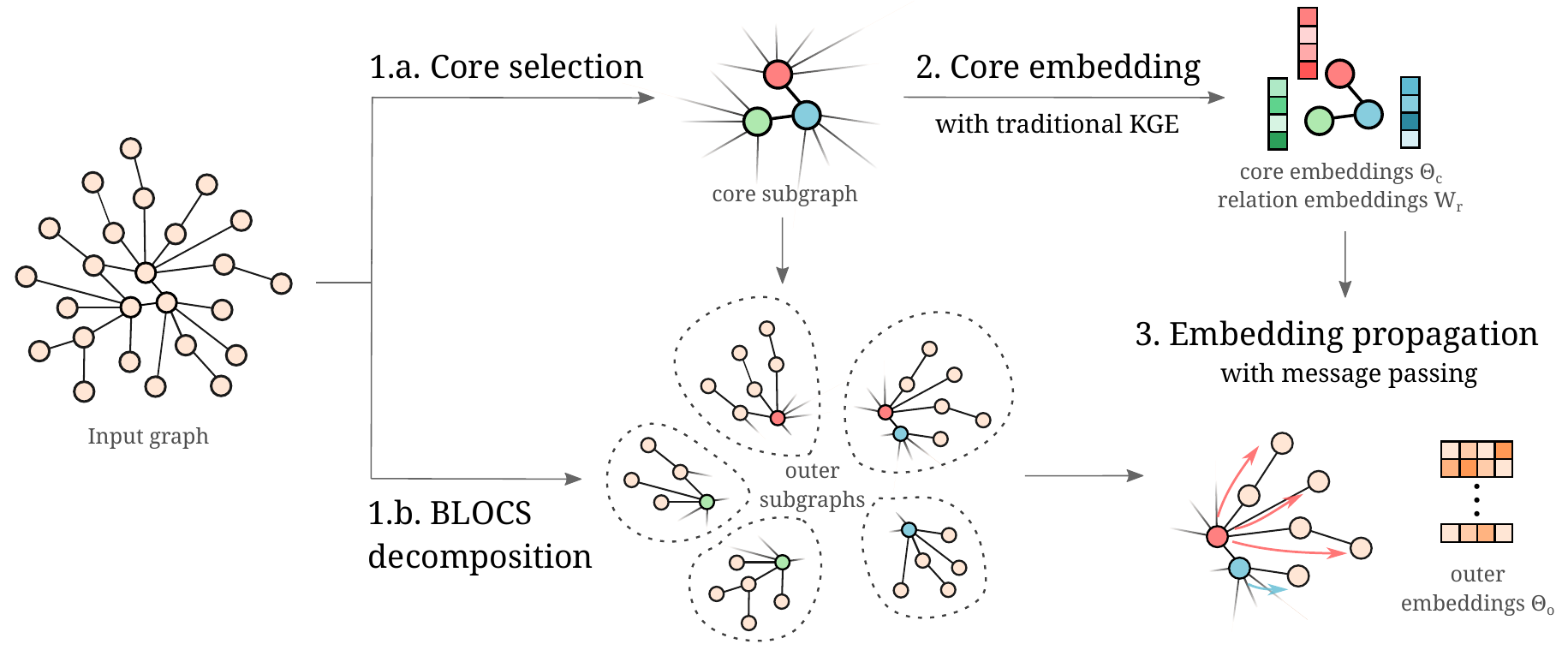}}%
    \caption{\textbf{SEPAL's embedding pipeline.} First, a core subgraph is extracted from the input knowledge graph (step \emph{1.a}). BLOCS then subdivides this input knowledge graph into outer subgraphs (step \emph{1.b}). Next, the core subgraph is embedded using traditional KGE models, which generate vector representations for both core entities and relations (step \emph{2.}). Finally, these embeddings are propagated with message passing to each outer subgraph successively (step \emph{3.}).}
    \label{fig:schema}
\end{figure*}

Most work on scaling knowledge-graph embedding has focused on efficient parallel computing to speed up stochastic optimization. We introduce a different approach, SEPAL, which changes how embeddings are computed, enforcing a more global structure, beneficial for downstream tasks, while avoiding much of the optimization cost.
To that end, SEPAL proceeds in three steps (\autoref{fig:schema}):
\begin{enumerate}[itemsep=0pt,topsep=0pt,leftmargin=14pt]
    \item separates the graph in a core and a set of connected overlapping outer subgraphs that cover the full graph;
    \item uses a classic KGE model to optimize the embeddings of the core entities and relations;
    \item propagates the embeddings from the core to the outer subgraphs, using a message-passing strategy preserving the relational geometry and ensuring embedding alignment within positive triples, with no further training.
\end{enumerate}
SEPAL's key idea is to use a boundary-conditioned message passing algorithm to compute embeddings that maximize the plausibility scores of positive triples. By doing so, SEPAL also reduces the computational cost since backpropagation is performed only on the core subgraph. It departs from existing embedding propagation methods \citep{vashishth2020compositionbased,wang2022rep} that compute embeddings on the full graph and use propagation as post-processing to smooth them.
SEPAL is compatible with any embedding model whose scoring function has the form given by \autoref{eq:scoring_function}, examples of which are provided in \autoref{tab:relational_models}.

\subsection{Splitting large graphs into manageable subgraphs}

Breaking up the graph into subgraphs is key to scaling up our approach memory-wise. Specifically, we seek a set of subgraphs that altogether cover the full graph but are individually small enough to fit on GPUs, to enable the subsequent GPU-based message passing.%

\paragraph{Core subgraph} SEPAL first defines the \textit{core} of a knowledge graph. The quality of the core embeddings is particularly important, as they serve as (fixed) boundary conditions during the propagation phase. Good relation embeddings are also key to structuring the propagation. To optimize this quality, two key factors must be considered during core selection: \emph{1)} ensuring a dense core subgraph by selecting the most central entities and \emph{2)} achieving full coverage of relation types. Yet, there can be a trade-off between these two objectives, hence, SEPAL offers two core selection procedures:

\textit{Degree-based selection:} This simple approach selects the top entities by degree---with proportion $\eta_n\in(0,1)$---and keeps only the largest connected component of the induced subgraph. The resulting core is dense, which boosts performance for entity-centric tasks like feature enrichment (\cref{app:core-selection}). However, it does not necessarily contain all the relation types.

\textit{Hybrid selection:} To ensure full relational coverage, this method combines two sampling strategies. First, it selects entities with the highest $\eta_n$ degrees. Second, for each relation type, it includes entities involved in edges with the highest $\eta_e$ degrees (where degree is the sum of the head and tail nodes' degrees).
The union of these two sets forms the core, and if disconnected, SEPAL reconnects it by adding the necessary entities (details in \cref{app:core-selection}).
Hyperparameters $\eta_n,\eta_e\in(0,1)$ are proportions of nodes and edges that control the core size.

Compared to degree-based selection, hybrid selection benefits tasks relying on relation embeddings, such as link prediction. However, its additional relation-specific edge sampling and reconnection steps can be computationally expensive for knowledge graphs with many relations.
For disconnected input graphs, all connected components other than the largest one are added to the core subgraph.

\paragraph{Outer subgraphs}
The next class of subgraphs that we generate---the \textit{outer} subgraphs---aim at covering the rest of the graph. The purpose of these subgraphs demands the following requirements:
\begin{description}[itemsep=1pt, parsep=1pt, topsep=0pt]
\item[R1: connected] the subgraphs must be connected, to propagate the embeddings;
\item[R2: bounded size] the subgraphs must have bounded sizes, to fit their embeddings in GPU memory;
\item[R3: coverage] the union of the subgraphs must be the full graph, to embed every entity;
\item[R4: scalability] extraction must run with available computing resources, in particular memory.
\end{description}
Extracting such subgraphs is challenging on large knowledge graphs. These are scale-free graphs with millions of nodes and no well-defined clusters \citep{leskovec2009community}. They pose difficulties to partitioning algorithms. For instance, algorithms based on propagation, eigenvalues, or power iterations of the adjacency matrix \citep{raghavan2007near,shi2000normalized,newman2006finding} struggle with the presence of extremely high-degree nodes that make the adjacency matrix ill-conditioned.
None of the existing partitioning algorithms satisfy our full set of constraints, and thus we devise our own algorithm, called BLOCS and described in detail in \cref{app:blocs}.
To satisfy the requirements despite these challenges, BLOCS creates \emph{overlapping} subgraphs.

We contribute BLOCS, an algorithm designed to break large graphs into \underline{B}alanced \underline{L}ocal \underline{O}verlapping \underline{C}onnected \underline{S}ubgraphs. The name summarizes the goals: 1) \textbf{Balanced}: BLOCS produces subgraphs of comparable sizes. $m$, the upper bound for subgraph sizes, is a hyperparameter. 2) \textbf{Local}: the subgraphs have small diameters.
This locality property is important for the efficiency of SEPAL's propagation phase, as it reduces the number of propagation iterations needed to converge to the global embedding structure. 3) \textbf{Overlapping}: a given node can belong to several subgraphs. This serves our purpose because it facilitates information transfer between the different subgraphs during the propagation.
4) \textbf{Connected}: all generated subgraphs are connected.

BLOCS uses three base mechanisms to grow the subgraphs: \emph{diffuse} (add all neighboring entities to the current subgraph), \emph{merge} (merge two overlapping subgraphs), and \emph{dilate} (add all unassigned neighboring entities to the current subgraph).
There are two different regimes during the generation of subgraphs. First, few entities are assigned, and the computationally effective diffusion quickly covers a large part of the graph, especially entities that are close to high-degree nodes. However, once these close entities have been assigned, the effectiveness of diffusion drops because it struggles to reach entities farther away. For this reason, BLOCS switches from diffusion to dilation once the proportion of assigned entities reaches a certain threshold $h$ (a dataset-dependent hyperparameter chosen $\approx .6$). By adding only unassigned neighbors to subgraphs, dilation drives subgraph growth towards unassigned distant entities. However, the presence of long chains can drastically slow down this regime because they make it add entities one by one. Some knowledge graphs have long chains, for instance YAGO4.5 (see Diameter in \autoref{tab:kg_stats}). To tackle them, BLOCS switches back to diffusion for a few steps, with seeds taken inside the long chains.

BLOCS works faster on graphs that have small diameters, where most entities can be reached during the diffusion regime and fewer dilation steps are required (\cref{app:time}).

\subsection{Core optimization with traditional KGE models}
Once the core subgraph is defined, SEPAL trains on GPU any compatible triple-based embedding model (DistMult, TransE, ...). This process generates embeddings for the core entities and relations, including inverse relations, added to ensure connectedness for the subsequent propagation step.

\subsection{Outside the core: relation-aware propagation}

Key to SEPAL's global consistency of embeddings and to computational efficiency is that it does not use contrastive learning and gradient descent for the outer entities. Instead, the final step involves an embedding propagation that is consistent with the KGE model (multiplication for DistMult, addition for TransE, ...) and preserves the relational geometry of the embedding space. To do so, SEPAL leverages the entity-relation composition function $\phi$ (given by \autoref{tab:relational_models}) used by the KGE model, and the embeddings of the relations $\boldsymbol{w}_r$ trained on the core subgraph. From \autoref{eq:scoring_function} one can derive, for a given triple $(h,r,t)$, the closed-form expression of the tail embedding that maximizes the scoring function $\argmax_{\boldsymbol\theta_t} f(h,r,t) = \phi(\boldsymbol\theta_h, \boldsymbol{w}_r)$. %
SEPAL uses this property to compute outer embeddings as consistent with the core as possible, by propagating from core entities with message passing.
\begin{flalign*}
\text{First, the embeddings are initialized with}
&&
    \boldsymbol\theta_u^{(0)} = \begin{cases}
        \boldsymbol\theta_u, & \text{if entity $u$ belongs the core subgraph,} \\
        \textbf{0}, & \text{otherwise.}
    \end{cases}
&&
\end{flalign*}
Then, each outer subgraph $\mathcal{S}\subset\mathcal{K}$ is loaded on GPU, merged with the core subgraph $\mathcal{C}$, and SEPAL performs $T$ steps of propagation ($T$ is a hyperparameter), with the following message-passing equations:
\begin{flalign}
    &&\text{Message:} &&\boldsymbol{m}_{v,u}^{(t+1)} &= \sum_{(v,r,u) \in
\mathcal{S} \cup \mathcal{C}}\phi(\boldsymbol\theta_v^{(t)}, \boldsymbol{w}_r) \qquad &&\label{eq:message}\\
    &&\text{Aggregation:} &&\boldsymbol{a}_u^{(t+1)} &= \sum_{v \in
\mathcal{N}(u)}\boldsymbol{m}_{v,u}^{(t+1)} \qquad &&\label{eq:aggregation}\\
    &&\text{Update:} &&\boldsymbol\theta_u^{(t+1)} &=
\frac{\boldsymbol\theta_u^{(t)} + \alpha
\boldsymbol{a}_u^{(t+1)}}{\left\|\boldsymbol\theta_u^{(t)} + \alpha
\boldsymbol{a}_u^{(t+1)}\right\|_2} &&\label{eq:update} 
\end{flalign}
where $\mathcal{N}(u)$ denotes the set of neighbors of outer entity $u$, $\mathcal{K}$ the set of positive triples of the graph, and $\alpha$ a hyperparameter similar to a learning rate. During updates, $\ell_2$ normalization projects embeddings on the unit sphere. With DistMult, this accelerates convergence by canceling the effect of neighbors that still have zero embeddings. Normalizing embeddings is a common practice in knowledge-graph embedding \citep{bordes2013translating, yang2015embedding}, and SEPAL acts accordingly.
During propagation, the core embeddings remain frozen.

\section{Theoretical analysis: embedding alignment}
\label{sec:theory}

\paragraph{SEPAL minimizes a global energy via gradient descent}
\cref{prop:implicit-sgd} shows that SEPAL with DistMult minimizes an energy that only accounts for the positive triples. The more aligned the embeddings within positive triples, the lower this energy.
In self-supervised learning, negative sampling is needed to prevent embeddings from collapsing to a single point \citep{hafidi2022negative}.
However, in our case, this oversmoothing is avoided thanks to the boundary conditions of fixed core-entities and relations embeddings, which act as ``anchors'' in the embedding space.

\begin{proposition}[Implicit Gradient Descent]
\label{prop:implicit-sgd}
Let $\mathcal{E}$ be the ``alignment energy'' defined as
\begin{align}
    \label{eq:energy}%
    \mathcal{E} = - \sum_{(h,r,t)\in\mathcal{K}} \left\langle \boldsymbol\theta_t, \phi(\boldsymbol\theta_h, \boldsymbol{w}_r) \right\rangle,
\end{align}
with $\phi(\boldsymbol\theta_h, \boldsymbol{w}_r) = \boldsymbol\theta_h\odot\boldsymbol{w}_r$ being the DistMult relational operator.
Then, SEPAL's propagation step amounts to a mini-batch projected gradient step descending $\mathcal{E}$ under the following conditions:
\begin{enumerate}[itemsep=1pt, parsep=1pt, topsep=0pt]
    \item SEPAL uses DistMult as base KGE model;
    \item the embeddings of relations and core entities remain fixed, and serve as boundary conditions;
    \item the outer subgraphs act as mini-batches.
\end{enumerate}
As a consequence, SEPAL converges towards a stationary point of $\mathcal{E}$ on the unit sphere.
\end{proposition}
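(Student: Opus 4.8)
The plan is to show that one round of SEPAL's message passing (\autoref{eq:message}--\autoref{eq:update}), restricted to the outer entities, is literally the iteration $\boldsymbol\theta_u^{(t+1)} = \Pi_{S^{d-1}}\!\big(\boldsymbol\theta_u^{(t)} - \alpha\,\nabla_{\boldsymbol\theta_u}\mathcal{E}\big)$ of projected gradient descent on the alignment energy $\mathcal{E}$ of \autoref{eq:energy}, where $\Pi_{S^{d-1}}$ denotes Euclidean projection onto the unit sphere. First I would fix an outer entity $u$ and invoke condition 2: since the relation embeddings $\boldsymbol{w}_r$ and the core-entity embeddings are frozen, they are constants of the problem and the only optimization variables are the outer-entity embeddings; hence differentiating $\mathcal{E}$ reduces to differentiating the bilinear DistMult terms $\langle\boldsymbol\theta_t,\boldsymbol\theta_h\odot\boldsymbol{w}_r\rangle$ in which $u$ occurs.

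Next I would compute this gradient explicitly. Because the DistMult score $\langle\boldsymbol\theta_t,\boldsymbol\theta_h\odot\boldsymbol{w}_r\rangle = \sum_i \theta_{t,i}\theta_{h,i}w_{r,i}$ is bilinear and symmetric in head and tail, each term in which $u$ appears contributes, after differentiation, a vector of the form $\boldsymbol\theta_v\odot\boldsymbol{w}_r$ for the neighbor $v$ of $u$ in that triple. Summing over all such triples gives $-\nabla_{\boldsymbol\theta_u}\mathcal{E} = \sum_{(v,r,u)}\boldsymbol\theta_v\odot\boldsymbol{w}_r$, which is precisely the aggregated message $\boldsymbol{a}_u^{(t+1)}$ of \autoref{eq:message}--\autoref{eq:aggregation} once one recognizes $\phi(\boldsymbol\theta_v,\boldsymbol{w}_r)=\boldsymbol\theta_v\odot\boldsymbol{w}_r$. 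The point making this identification clean is the inverse-relation augmentation used during propagation: every edge incident to $u$ --whether $u$ was originally a head or a tail-- is represented as an incoming triple $(v,r,u)$, so a single sum over incoming messages recovers the full contribution of $u$ to the gradient. Substituting this into the pre-normalization quantity in \autoref{eq:update} yields $\boldsymbol\theta_u^{(t)}+\alpha\boldsymbol{a}_u^{(t+1)}=\boldsymbol\theta_u^{(t)}-\alpha\,\nabla_{\boldsymbol\theta_u}\mathcal{E}$, i.e. an ordinary gradient-descent step of size $\alpha$.

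It then remains to interpret the two remaining operations. The $\ell_2$ normalization in \autoref{eq:update} is exactly $\Pi_{S^{d-1}}(\boldsymbol y)=\boldsymbol y/\|\boldsymbol y\|_2$, the Euclidean projection onto the unit sphere, so the complete update is a projected gradient step; this also explains why SEPAL optimizes over the constraint set $\{\|\boldsymbol\theta\|_2=1\}$ rather than all of $\mathbb{R}^d$. For condition 3, I would observe that replacing the global triple set $\mathcal{K}$ in $\mathcal{E}$ by the triples of the currently loaded subgraph $\mathcal{S}\cup\mathcal{C}$ restricts each sum to a subset of its terms, so each outer subgraph yields a partial (mini-batch) gradient and the sequence of subgraphs plays the role of mini-batches; since all outer entities are updated simultaneously from the previous-iterate values $\boldsymbol\theta_v^{(t)}$, the step is a Jacobi-style full gradient evaluated at the current iterate, which is standard.

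The main obstacle I anticipate is the bookkeeping in the gradient computation --reconciling the single incoming-message sum of the aggregation step with the gradient of $\mathcal{E}$, which a priori collects contributions both from the triples where $u$ is the head and from those where $u$ is the tail. The resolution hinges on the inverse-relation construction making the propagation graph symmetric, so that head-appearances of $u$ reappear as incoming edges; here one must be careful about potential double counting (a benign constant that can be absorbed into the learning rate $\alpha$), about whether inverse relations carry tied or independent embeddings, and about self-loops, which I would either assume away or treat as a separate term. Everything else is routine differentiation together with the elementary identity for metric projection onto the sphere.
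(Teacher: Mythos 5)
Your proposal is correct and follows essentially the same route as the paper's proof: differentiate the bilinear DistMult terms of $\mathcal{E}$ with the relation and core embeddings frozen, identify the negative gradient with the aggregated message $\boldsymbol{a}_u^{(t+1)}$, read the $\ell_2$ normalization as projection onto the unit sphere, and let the subgraphs $\mathcal{S}\cup\mathcal{C}$ play the role of mini-batches. You even anticipate the one delicate point the paper addresses --- that the gradient contains a tail-to-head term absent from the incoming-message sum, which is reconciled by the inverse-relation augmentation --- so nothing is missing.
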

\begin{proof}[Proof sketch]
For an outer entity $u$, the gradient update of its embedding $\boldsymbol\theta_u^{(t+1)} = \boldsymbol\theta_u^{(t)} - \eta \frac{\partial \mathcal{E}}{\partial \boldsymbol\theta_u^{(t)}}$ boils down to SEPAL's propagation equation, for a learning rate $\eta=\alpha$. See \cref{app:proof} for detailed derivations.
\end{proof}

\paragraph{Analogy to eigenvalue problems}

\cref{app:eigenvalue_problem} shows that SEPAL's propagation, when combined with DistMult, behaves like an Arnoldi iteration. This suggests that outer entity embeddings align with the dominant eigenvectors of an operator that captures both the graph structure and relational semantics (via fixed relation embeddings). In spectral graph theory, the eigenvectors of the Laplacian define the graph Fourier transform, and the leading ones correspond to the low frequencies. Similarly, in the SEPAL framework, the propagation computes `low-frequency' representations for the outer entities, with the boundary conditions of fixed core embeddings.

\paragraph{Queriability of embeddings}
Assume that each entity $u \in \mathcal{V}$ is associated with a set of scalar features $\boldsymbol{x}_u \in \mathbb{R}^m$, such that $\boldsymbol{x}_{u,i}$ denotes the $i$-th property of $u$. We say that the embeddings support \emph{queriability} with respect to property $i \in {1, \dots, m}$ if,
\begin{align}
    \exists f_i \in \mathcal{F} \quad \text{such that} \quad f_i(\boldsymbol{\theta}_u) \approx \boldsymbol{x}_{u,i},
\end{align}
where $\mathcal{F}$ denotes a class of functions from $\mathbb{R}^d$ to $\mathbb{R}$ constrained by some regularity conditions.%
This queriability property is key to the embeddings' utility to downstream machine learning tasks.

In knowledge graphs, properties $\boldsymbol{x}_{u,i}$ that are explicitly represented by triples $(u, r_i, v_i)$ can be recovered via the scoring function $\phi$ used in the model, under the condition of well-aligned embeddings. Specifically, when such a relation $r_i$ exists, a natural querying function is $f_i : \boldsymbol{x} \mapsto \phi(\boldsymbol{x}, \boldsymbol{w}_{r_i})$,
where $\boldsymbol{w}_{r_i}$ is the embedding of relation $r_i$ (we assume scalar embeddings for simplicity).
This property holds only if the score of positive triples is maximized, implying that the corresponding facts have successfully been captured by the embeddings.

\cref{prop:implicit-sgd} shows that SEPAL minimizes an energy promoting alignment between embeddings within positive triples, which we argue leads to high queriability, since the scores of positive triples are maximized. In contrast, classic KGE methods use negative sampling to incorporate a supplementary constraint of local contrast between positive and negative triples. This adds discriminative power to the model, useful for link prediction, but can hinder queriability, as minimizing the score of negative triples may come at the expense of maximizing that of positive ones.

\section{Experimental study: utility to downstream tasks}%
\label{sec:experiments}

\paragraph{Knowledge graph datasets}
To compare large knowledge graphs of different sizes,
we use Freebase \citep{bollacker2008freebase}, WikiKG90Mv2 \citep[(an extract of Wikidata)][]{hu2020open}, and
three generations of YAGO: YAGO3 \citep{mahdisoltani2014yago3}, YAGO4 \citep{pellissier2020yago}, and YAGO4.5 \citep{suchanek2024yago}. We expand YAGO4 and YAGO4.5 into a larger version containing also the taxonomy, i.e., types and classes --which algorithms will treat as entities-- and their relations. We discard numerical attributes and keep only the largest connected component (\cref{app:KG stats}). To perform an ablation study of SEPAL without BLOCS for which we need smaller datasets, we also introduce Mini YAGO3, a subset of YAGO3 built with the 5\% most frequent entities. Knowledge graph sizes are reported in \autoref{fig:rw-hbar}.

\paragraph{On real-world downstream regression tasks}
We evaluate the embeddings as node features in downstream tasks \citep{grover2016node2vec,cvetkov2023relational,robinson2024relbench, ruiz2024high}. Specifically, we incorporate the embeddings in tables as extra features and measure the prediction improvement of a standard estimator in regression tasks. This setup allows us to compare the utility of knowledge graphs of varying sizes. Indeed, for a task, a suboptimal embedding of a larger knowledge graph may be more interesting than a high-quality embedding of a smaller knowledge graph because the larger graph brings richer information, on more entities.
We benchmark 4 downstream regression tasks \citep[adapted from][]{cvetkov2023relational}: Movie revenues, US accidents, US elections, and housing prices. Details are provided in \cref{app:downstream-tables}.

\begin{wrapfigure}[39]{r}{0.48\linewidth}%
  \centering%
  \vspace{-1.6em}%
  \includegraphics[width=\linewidth]{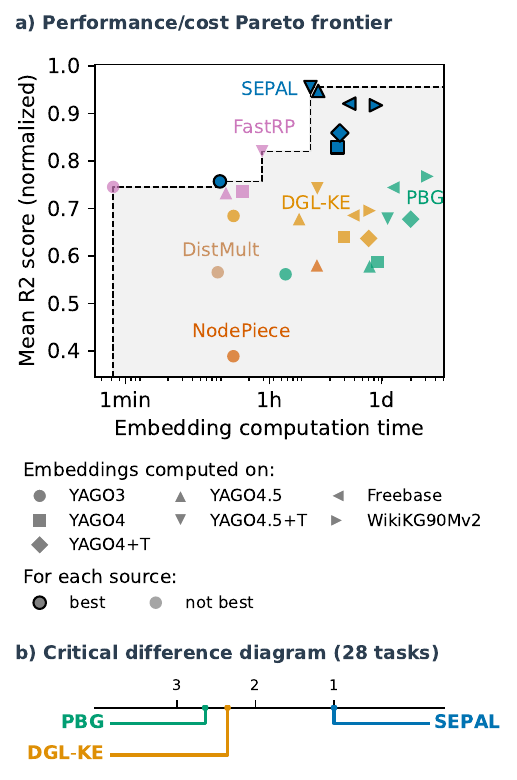}%
  \caption{\textbf{Statistical performance on real-world tables.} \textbf{a) Pareto frontiers} of averaged normalized prediction scores with respect to embedding times (log-scale). \textbf{b) Critical difference diagrams} \citep{terpilowski2019scikit} of average ranks among the three methods (SEPAL, PBG and DGL-KE) that scale to every knowledge-graph dataset. The ranks are averaged over all tasks; a task being defined as the combination of a downstream table and a source knowledge graph. SEPAL gets the best average downstream performance for each of the 7 source knowledge graphs. \autoref{fig:rw-hbar} gives the detailed results for each table. Appendix \ref{app:dt_setup} details the metric used.%
  \label{fig:rw-pareto}}%
  \vspace{5em}
\end{wrapfigure}

Larger knowledge graphs do bring value (\autoref{fig:rw-pareto}), as they cover more entities of the downstream tasks (\autoref{fig:coverage}). 
For each source graph, SEPAL provides the best embeddings and is much more scalable (details in \autoref{fig:rw-hbar}).
Considering performance/cost Pareto optimality across methods and source graphs (\autoref{fig:rw-pareto}a), SEPAL achieves the best performance with reduced cost, but the simple baseline FastRP also gives Pareto-optimal results, for smaller costs. Although FastRP discards the type of relation, it performs better than most dedicated KGE methods. Its iterations also solve a more global problem, like SEPAL (\cref{app:eigenvalue_problem}).

We used DistMult as base model as it is a classic and good performer
\citep{Ruffinelli2020You,Kadlec2017KnowledgeBC,jain2020knowledge}. \autoref{fig:train-time-comparison} shows that SEPAL can speed up DistMult over 20 times for a given training configuration.
For other scoring functions like RotatE and TransE, \autoref{fig:full-downstream} shows that SEPAL also improves the downstream performance of its base model.

\paragraph{On WikiDBs tables}

We also evaluate SEPAL on tables from WikiDBs \citep{vogel2024wikidbs}, a corpus of databases extracted from Wikidata. We build 42 downstream tables (26 classification tasks, and 16 regression tasks), described in \cref{app:downstream-tables}. Four of them are used as validation tasks, to tune hyperparameters, and the remaining 38 are used as test tables. \autoref{fig:wkdb-pareto} presents the aggregated results on these 38 test tables, showing that, here also, applying SEPAL to very large graphs provides the best embeddings for downstream node regression and classification, and that SEPAL brings a decreased computational cost. Regarding the performance-cost tradeoff, FastRP is once again Pareto-optimal for small computation times, highlighting the benefits of global methods.

\begin{figure}[ht]
    \centering%
    \includegraphics[width=\textwidth]{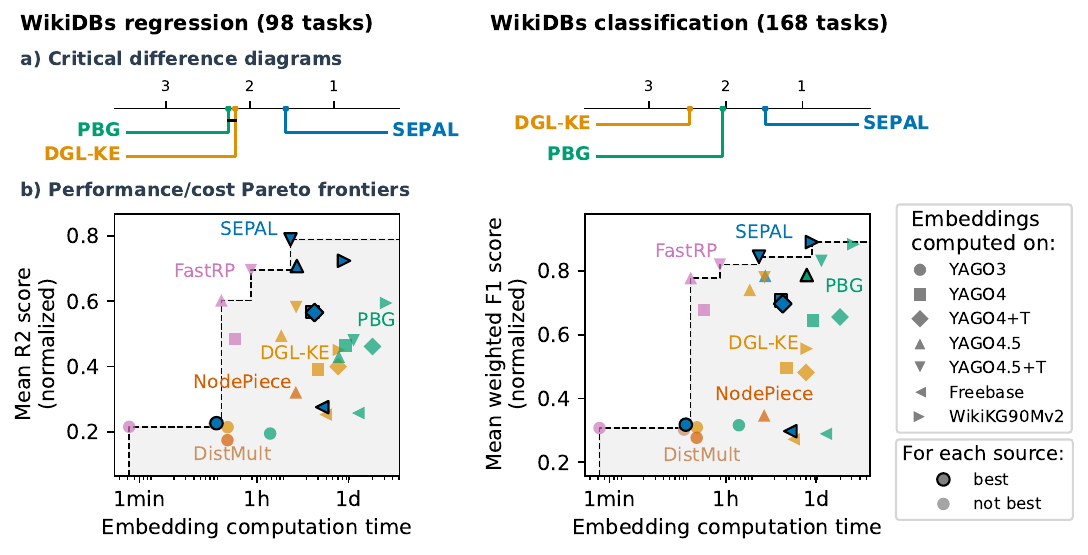}%
    \caption{\textbf{Statistical performance on WikiDBs tables. a) Critical difference diagrams} of scalable methods. Black lines connect methods that are not significantly different. \textbf{b) Pareto frontiers} of averaged normalized prediction scores with respect to embedding times (log-scale). \autoref{fig:wkdb-violin} in \cref{app:wkdb-results} provides the detailed results for each of the 38 test tables.}%
    \label{fig:wkdb-pareto}%
\end{figure}

\section{Discussion and conclusion}
\label{sec:discussion}%

\paragraph{Benefits of larger graphs} We have studied how to build general-knowledge feature enrichment from huge knowledge graphs. For this purpose, we have introduced a scalable method that captures more of the global structure than the classic KGE methods.
Our results show the benefit of embedding larger graphs. There are two reasons to this benefit: (1) larger knowledge graphs can result in larger coverage of downstream entities (another important factor for this is the age of the dataset: recent ones have better coverage) (2) for two knowledge graphs with equal coverage, the larger one can result in richer representations because the covered entities have more context to learn from.

\paragraph{Limitations}\label{sec:limitations}
Our work focuses on embeddings for feature enrichment of downstream tables, an active research field \citep{cvetkov2023relational, ruiz2024high, robinson2024relbench}. Another popular use case for embeddings is knowledge graph completion. However, this task is fundamentally different from feature enrichment: embeddings optimized for link prediction may not perform well for feature enrichment, and vice versa \citep{ruffinelli2024beyond}. Nevertheless, we also evaluate SEPAL on knowledge graph completion, for which we expect lower performances given that SEPAL does not locally optimize the contrast between positive and negative triples scores. Results reported in \cref{app:link-prediction} show that SEPAL sometimes performs lower than existing methods (DGL-KE, PBG) on this task, although no method is consistently better than the others for all datasets, and statistical tests show no significant differences (\autoref{fig:lp-cdd}).

\paragraph{Conclusion: embeddings for downstream machine learning}
In this paper, we show how to optimize knowledge-graph embeddings for downstream machine learning.
We propose a highly scalable method, SEPAL, and conduct a comprehensive evaluation on 7 knowledge graphs and 46 downstream tables showing that SEPAL both: (1) markedly improves predictive performance on downstream tasks and (2) brings computational-performance benefits --multiple-fold decreased train times and bounded memory usage-- when embedding large-scale knowledge graphs.
Our theoretical analysis suggests that SEPAL's strong performance on downstream tasks stems from its global optimization approach, resulting in better-aligned embeddings compared to classic methods based on negative sampling.
SEPAL improves the quality of the generated node features when used for data enrichment in external (downstream) tasks, a setting that can strongly benefit from pre-training embeddings on knowledge bases as large as possible.
It achieves this without requiring heavy engineering, such as distributed computing, and can easily be adapted to most KGE models.

Insights brought by our experiments go further than SEPAL. First, the method successfully exploits the asymmetry of information between ``central'' entities and more peripheral ones. Power-law distributions are indeed present on many types of objects, from words \citep{piantadosi2014zipf} to geographical entities \citep{giesen2011zipf} and should probably be exploited for general-knowledge representations such as knowledge-graph embeddings. Second, and related, breaking up large knowledge graphs in communities is surprisingly difficult: some entities just belong in many (all?) communities, and others are really hard to reach. Our BLOCS algorithm can be useful for other graph engineering tasks, such as scaling message-passing algorithms or simply generating partitions.
Finally, the embedding propagation in SEPAL appears powerful, and we conjecture it will benefit further approaches. First, it can be combined with much of the prior art to scale knowledge-based embedding. Second, it can naturally adapt to continual learning settings \citep{vandeven2022three,hadsell2020embracing,biswas2023knowledge}, which are important in knowledge-graph applications since knowledge graphs, such as Wikidata, are often continuously updated with new information (\cref{app:continual_learning}).

\section*{Acknowledgements}
GV acknowledges support from ANR via grant TaFoMo (ANR-25-CE23-1822). This work is partly supported by Hi! PARIS and ANR/France 2030 program (ANR-23-IACL-0005).

\bibliographystyle{unsrtnat}
\bibliography{references}

\newpage
\section*{NeurIPS Paper Checklist}

\begin{enumerate}

\item {\bf Claims}
    \item[] Question: Do the main claims made in the abstract and introduction accurately reflect the paper's contributions and scope?
    \item[] Answer: \answerYes{} %
    \item[] Justification: The main claims made in the abstract and introduction accurately reflect the paper's contributions and scope. Both abstract and introduction clearly articulate the paper's contributions.
    \item[] Guidelines:
    \begin{itemize}
        \item The answer NA means that the abstract and introduction do not include the claims made in the paper.
        \item The abstract and/or introduction should clearly state the claims made, including the contributions made in the paper and important assumptions and limitations. A No or NA answer to this question will not be perceived well by the reviewers. 
        \item The claims made should match theoretical and experimental results, and reflect how much the results can be expected to generalize to other settings. 
        \item It is fine to include aspirational goals as motivation as long as it is clear that these goals are not attained by the paper. 
    \end{itemize}

\item {\bf Limitations}
    \item[] Question: Does the paper discuss the limitations of the work performed by the authors?
    \item[] Answer: \answerYes{} %
    \item[] Justification: \cref{sec:limitations} discusses limitations.
    \item[] Guidelines:
    \begin{itemize}
        \item The answer NA means that the paper has no limitation while the answer No means that the paper has limitations, but those are not discussed in the paper. 
        \item The authors are encouraged to create a separate "Limitations" section in their paper.
        \item The paper should point out any strong assumptions and how robust the results are to violations of these assumptions (e.g., independence assumptions, noiseless settings, model well-specification, asymptotic approximations only holding locally). The authors should reflect on how these assumptions might be violated in practice and what the implications would be.
        \item The authors should reflect on the scope of the claims made, e.g., if the approach was only tested on a few datasets or with a few runs. In general, empirical results often depend on implicit assumptions, which should be articulated.
        \item The authors should reflect on the factors that influence the performance of the approach. For example, a facial recognition algorithm may perform poorly when image resolution is low or images are taken in low lighting. Or a speech-to-text system might not be used reliably to provide closed captions for online lectures because it fails to handle technical jargon.
        \item The authors should discuss the computational efficiency of the proposed algorithms and how they scale with dataset size.
        \item If applicable, the authors should discuss possible limitations of their approach to address problems of privacy and fairness.
        \item While the authors might fear that complete honesty about limitations might be used by reviewers as grounds for rejection, a worse outcome might be that reviewers discover limitations that aren't acknowledged in the paper. The authors should use their best judgment and recognize that individual actions in favor of transparency play an important role in developing norms that preserve the integrity of the community. Reviewers will be specifically instructed to not penalize honesty concerning limitations.
    \end{itemize}

\item {\bf Theory assumptions and proofs}
    \item[] Question: For each theoretical result, does the paper provide the full set of assumptions and a complete (and correct) proof?
    \item[] Answer: \answerYes{} %
    \item[] Justification: \cref{app:proof} provides a complete proof of the main theoretical result, and discusses the full set of assumptions.
    \item[] Guidelines:
    \begin{itemize}
        \item The answer NA means that the paper does not include theoretical results. 
        \item All the theorems, formulas, and proofs in the paper should be numbered and cross-referenced.
        \item All assumptions should be clearly stated or referenced in the statement of any theorems.
        \item The proofs can either appear in the main paper or the supplemental material, but if they appear in the supplemental material, the authors are encouraged to provide a short proof sketch to provide intuition. 
        \item Inversely, any informal proof provided in the core of the paper should be complemented by formal proofs provided in appendix or supplemental material.
        \item Theorems and Lemmas that the proof relies upon should be properly referenced. 
    \end{itemize}

    \item {\bf Experimental result reproducibility}
    \item[] Question: Does the paper fully disclose all the information needed to reproduce the main experimental results of the paper to the extent that it affects the main claims and/or conclusions of the paper (regardless of whether the code and data are provided or not)?
    \item[] Answer: \answerYes{} %
    \item[] Justification: All the code necessary to reproduce the results is provided in an anonymized zip file.
    \item[] Guidelines:
    \begin{itemize}
        \item The answer NA means that the paper does not include experiments.
        \item If the paper includes experiments, a No answer to this question will not be perceived well by the reviewers: Making the paper reproducible is important, regardless of whether the code and data are provided or not.
        \item If the contribution is a dataset and/or model, the authors should describe the steps taken to make their results reproducible or verifiable. 
        \item Depending on the contribution, reproducibility can be accomplished in various ways. For example, if the contribution is a novel architecture, describing the architecture fully might suffice, or if the contribution is a specific model and empirical evaluation, it may be necessary to either make it possible for others to replicate the model with the same dataset, or provide access to the model. In general. releasing code and data is often one good way to accomplish this, but reproducibility can also be provided via detailed instructions for how to replicate the results, access to a hosted model (e.g., in the case of a large language model), releasing of a model checkpoint, or other means that are appropriate to the research performed.
        \item While NeurIPS does not require releasing code, the conference does require all submissions to provide some reasonable avenue for reproducibility, which may depend on the nature of the contribution. For example
        \begin{enumerate}
            \item If the contribution is primarily a new algorithm, the paper should make it clear how to reproduce that algorithm.
            \item If the contribution is primarily a new model architecture, the paper should describe the architecture clearly and fully.
            \item If the contribution is a new model (e.g., a large language model), then there should either be a way to access this model for reproducing the results or a way to reproduce the model (e.g., with an open-source dataset or instructions for how to construct the dataset).
            \item We recognize that reproducibility may be tricky in some cases, in which case authors are welcome to describe the particular way they provide for reproducibility. In the case of closed-source models, it may be that access to the model is limited in some way (e.g., to registered users), but it should be possible for other researchers to have some path to reproducing or verifying the results.
        \end{enumerate}
    \end{itemize}

\item {\bf Open access to data and code}
    \item[] Question: Does the paper provide open access to the data and code, with sufficient instructions to faithfully reproduce the main experimental results, as described in supplemental material?
    \item[] Answer: \answerYes{} %
    \item[] Justification: All the code is provided in an anonymized zip file. After publication, the code of the method has been released at this address: \href{https://github.com/soda-inria/sepal}{https://github.com/soda-inria/sepal}.
    \item[] Guidelines:
    \begin{itemize}
        \item The answer NA means that paper does not include experiments requiring code.
        \item Please see the NeurIPS code and data submission guidelines (\url{https://nips.cc/public/guides/CodeSubmissionPolicy}) for more details.
        \item While we encourage the release of code and data, we understand that this might not be possible, so “No” is an acceptable answer. Papers cannot be rejected simply for not including code, unless this is central to the contribution (e.g., for a new open-source benchmark).
        \item The instructions should contain the exact command and environment needed to run to reproduce the results. See the NeurIPS code and data submission guidelines (\url{https://nips.cc/public/guides/CodeSubmissionPolicy}) for more details.
        \item The authors should provide instructions on data access and preparation, including how to access the raw data, preprocessed data, intermediate data, and generated data, etc.
        \item The authors should provide scripts to reproduce all experimental results for the new proposed method and baselines. If only a subset of experiments are reproducible, they should state which ones are omitted from the script and why.
        \item At submission time, to preserve anonymity, the authors should release anonymized versions (if applicable).
        \item Providing as much information as possible in supplemental material (appended to the paper) is recommended, but including URLs to data and code is permitted.
    \end{itemize}

\item {\bf Experimental setting/details}
    \item[] Question: Does the paper specify all the training and test details (e.g., data splits, hyperparameters, how they were chosen, type of optimizer, etc.) necessary to understand the results?
    \item[] Answer: \answerYes{} %
    \item[] Justification: The experimental setting is presented in detail in appendix.
    \item[] Guidelines:
    \begin{itemize}
        \item The answer NA means that the paper does not include experiments.
        \item The experimental setting should be presented in the core of the paper to a level of detail that is necessary to appreciate the results and make sense of them.
        \item The full details can be provided either with the code, in appendix, or as supplemental material.
    \end{itemize}

\item {\bf Experiment statistical significance}
    \item[] Question: Does the paper report error bars suitably and correctly defined or other appropriate information about the statistical significance of the experiments?
    \item[] Answer: \answerYes{} %
    \item[] Justification: Critical difference diagrams associated with statistical significance tests are provided with the results.
    \item[] Guidelines:
    \begin{itemize}
        \item The answer NA means that the paper does not include experiments.
        \item The authors should answer "Yes" if the results are accompanied by error bars, confidence intervals, or statistical significance tests, at least for the experiments that support the main claims of the paper.
        \item The factors of variability that the error bars are capturing should be clearly stated (for example, train/test split, initialization, random drawing of some parameter, or overall run with given experimental conditions).
        \item The method for calculating the error bars should be explained (closed form formula, call to a library function, bootstrap, etc.)
        \item The assumptions made should be given (e.g., Normally distributed errors).
        \item It should be clear whether the error bar is the standard deviation or the standard error of the mean.
        \item It is OK to report 1-sigma error bars, but one should state it. The authors should preferably report a 2-sigma error bar than state that they have a 96\% CI, if the hypothesis of Normality of errors is not verified.
        \item For asymmetric distributions, the authors should be careful not to show in tables or figures symmetric error bars that would yield results that are out of range (e.g. negative error rates).
        \item If error bars are reported in tables or plots, The authors should explain in the text how they were calculated and reference the corresponding figures or tables in the text.
    \end{itemize}

\item {\bf Experiments compute resources}
    \item[] Question: For each experiment, does the paper provide sufficient information on the computer resources (type of compute workers, memory, time of execution) needed to reproduce the experiments?
    \item[] Answer: \answerYes{} %
    \item[] Justification: The experimental setup is fully described in appendix.
    \item[] Guidelines:
    \begin{itemize}
        \item The answer NA means that the paper does not include experiments.
        \item The paper should indicate the type of compute workers CPU or GPU, internal cluster, or cloud provider, including relevant memory and storage.
        \item The paper should provide the amount of compute required for each of the individual experimental runs as well as estimate the total compute. 
        \item The paper should disclose whether the full research project required more compute than the experiments reported in the paper (e.g., preliminary or failed experiments that didn't make it into the paper). 
    \end{itemize}
    
\item {\bf Code of ethics}
    \item[] Question: Does the research conducted in the paper conform, in every respect, with the NeurIPS Code of Ethics \url{https://neurips.cc/public/EthicsGuidelines}?
    \item[] Answer: \answerYes{} %
    \item[] Justification: The research conducted in the paper conforms with the NeurIPS Code of Ethics.
    \item[] Guidelines:
    \begin{itemize}
        \item The answer NA means that the authors have not reviewed the NeurIPS Code of Ethics.
        \item If the authors answer No, they should explain the special circumstances that require a deviation from the Code of Ethics.
        \item The authors should make sure to preserve anonymity (e.g., if there is a special consideration due to laws or regulations in their jurisdiction).
    \end{itemize}

\item {\bf Broader impacts}
    \item[] Question: Does the paper discuss both potential positive societal impacts and negative societal impacts of the work performed?
    \item[] Answer: \answerYes{} %
    \item[] Justification: \cref{app:impact} discusses potential societal impacts of the work performed.
    \item[] Guidelines:
    \begin{itemize}
        \item The answer NA means that there is no societal impact of the work performed.
        \item If the authors answer NA or No, they should explain why their work has no societal impact or why the paper does not address societal impact.
        \item Examples of negative societal impacts include potential malicious or unintended uses (e.g., disinformation, generating fake profiles, surveillance), fairness considerations (e.g., deployment of technologies that could make decisions that unfairly impact specific groups), privacy considerations, and security considerations.
        \item The conference expects that many papers will be foundational research and not tied to particular applications, let alone deployments. However, if there is a direct path to any negative applications, the authors should point it out. For example, it is legitimate to point out that an improvement in the quality of generative models could be used to generate deepfakes for disinformation. On the other hand, it is not needed to point out that a generic algorithm for optimizing neural networks could enable people to train models that generate Deepfakes faster.
        \item The authors should consider possible harms that could arise when the technology is being used as intended and functioning correctly, harms that could arise when the technology is being used as intended but gives incorrect results, and harms following from (intentional or unintentional) misuse of the technology.
        \item If there are negative societal impacts, the authors could also discuss possible mitigation strategies (e.g., gated release of models, providing defenses in addition to attacks, mechanisms for monitoring misuse, mechanisms to monitor how a system learns from feedback over time, improving the efficiency and accessibility of ML).
    \end{itemize}
    
\item {\bf Safeguards}
    \item[] Question: Does the paper describe safeguards that have been put in place for responsible release of data or models that have a high risk for misuse (e.g., pretrained language models, image generators, or scraped datasets)?
    \item[] Answer: \answerNA{} %
    \item[] Justification: The paper poses no such risks.
    \item[] Guidelines:
    \begin{itemize}
        \item The answer NA means that the paper poses no such risks.
        \item Released models that have a high risk for misuse or dual-use should be released with necessary safeguards to allow for controlled use of the model, for example by requiring that users adhere to usage guidelines or restrictions to access the model or implementing safety filters. 
        \item Datasets that have been scraped from the Internet could pose safety risks. The authors should describe how they avoided releasing unsafe images.
        \item We recognize that providing effective safeguards is challenging, and many papers do not require this, but we encourage authors to take this into account and make a best faith effort.
    \end{itemize}

\item {\bf Licenses for existing assets}
    \item[] Question: Are the creators or original owners of assets (e.g., code, data, models), used in the paper, properly credited and are the license and terms of use explicitly mentioned and properly respected?
    \item[] Answer: \answerYes{} %
    \item[] Justification: All original papers are cited.
    \item[] Guidelines:
    \begin{itemize}
        \item The answer NA means that the paper does not use existing assets.
        \item The authors should cite the original paper that produced the code package or dataset.
        \item The authors should state which version of the asset is used and, if possible, include a URL.
        \item The name of the license (e.g., CC-BY 4.0) should be included for each asset.
        \item For scraped data from a particular source (e.g., website), the copyright and terms of service of that source should be provided.
        \item If assets are released, the license, copyright information, and terms of use in the package should be provided. For popular datasets, \url{paperswithcode.com/datasets} has curated licenses for some datasets. Their licensing guide can help determine the license of a dataset.
        \item For existing datasets that are re-packaged, both the original license and the license of the derived asset (if it has changed) should be provided.
        \item If this information is not available online, the authors are encouraged to reach out to the asset's creators.
    \end{itemize}

\item {\bf New assets}
    \item[] Question: Are new assets introduced in the paper well documented and is the documentation provided alongside the assets?
    \item[] Answer: \answerNA{} %
    \item[] Justification: The paper does not release new assets.
    \item[] Guidelines:
    \begin{itemize}
        \item The answer NA means that the paper does not release new assets.
        \item Researchers should communicate the details of the dataset/code/model as part of their submissions via structured templates. This includes details about training, license, limitations, etc. 
        \item The paper should discuss whether and how consent was obtained from people whose asset is used.
        \item At submission time, remember to anonymize your assets (if applicable). You can either create an anonymized URL or include an anonymized zip file.
    \end{itemize}

\item {\bf Crowdsourcing and research with human subjects}
    \item[] Question: For crowdsourcing experiments and research with human subjects, does the paper include the full text of instructions given to participants and screenshots, if applicable, as well as details about compensation (if any)? 
    \item[] Answer: \answerNA{} %
    \item[] Justification: The paper does not involve crowdsourcing nor research with human subjects.
    \item[] Guidelines:
    \begin{itemize}
        \item The answer NA means that the paper does not involve crowdsourcing nor research with human subjects.
        \item Including this information in the supplemental material is fine, but if the main contribution of the paper involves human subjects, then as much detail as possible should be included in the main paper. 
        \item According to the NeurIPS Code of Ethics, workers involved in data collection, curation, or other labor should be paid at least the minimum wage in the country of the data collector. 
    \end{itemize}

\item {\bf Institutional review board (IRB) approvals or equivalent for research with human subjects}
    \item[] Question: Does the paper describe potential risks incurred by study participants, whether such risks were disclosed to the subjects, and whether Institutional Review Board (IRB) approvals (or an equivalent approval/review based on the requirements of your country or institution) were obtained?
    \item[] Answer: \answerNA{} %
    \item[] Justification: The paper does not involve crowdsourcing nor research with human subjects.
    \item[] Guidelines:
    \begin{itemize}
        \item The answer NA means that the paper does not involve crowdsourcing nor research with human subjects.
        \item Depending on the country in which research is conducted, IRB approval (or equivalent) may be required for any human subjects research. If you obtained IRB approval, you should clearly state this in the paper. 
        \item We recognize that the procedures for this may vary significantly between institutions and locations, and we expect authors to adhere to the NeurIPS Code of Ethics and the guidelines for their institution. 
        \item For initial submissions, do not include any information that would break anonymity (if applicable), such as the institution conducting the review.
    \end{itemize}

\item {\bf Declaration of LLM usage}
    \item[] Question: Does the paper describe the usage of LLMs if it is an important, original, or non-standard component of the core methods in this research? Note that if the LLM is used only for writing, editing, or formatting purposes and does not impact the core methodology, scientific rigorousness, or originality of the research, declaration is not required.
    \item[] Answer: \answerNA{} %
    \item[] Justification: The core method development in this research does not involve LLMs as any important, original, or non-standard components.
    \item[] Guidelines:
    \begin{itemize}
        \item The answer NA means that the core method development in this research does not involve LLMs as any important, original, or non-standard components.
        \item Please refer to our LLM policy (\url{https://neurips.cc/Conferences/2025/LLM}) for what should or should not be described.
    \end{itemize}

\end{enumerate}

\appendix

\newpage

\addtocontents{toc}{\protect\setcounter{tocdepth}{2}}

\renewcommand*\contentsname{\Large Appendix - Table of Contents}

\tableofcontents

\newpage

\section{Datasets}
\subsection{Statistics on knowledge graph datasets}
\label{app:KG stats}

\begin{table}[tb]%
    \centering%
    \caption{Additional statistics on the knowledge graph datasets used. MSPL stands for Mean Shortest Path Length. The LCC column gives the percentage of entities of the graph that are in the largest connected component.}%
    \begin{small}%
    \begin{tabular}{lrrrrrr}%
        \toprule
          & \textbf{Maximum degree} & \textbf{Average degree} & \textbf{MSPL} & \textbf{Diameter} & \textbf{Density} & \textbf{LCC} \\
        \midrule
        Mini YAGO3 & 65 711 & 12.6 & 3.3 & 11 & 1e-4 & 99.98\% \\
        YAGO3 & 934 599 & 4.0 & 4.2 & 23 & 2e-6 & 97.6\% \\ %
        YAGO4.5 & 6 434 121 & 4.5 & 5.0 & 502 & 1e-7 & 99.7\% \\ %
        YAGO4.5+T & 6 434 122 & 5.0 & 4.0 & 5 & 1e-7 & 100\% \\
        YAGO4 & 8 606 980 & 12.9 & 4.5 & 28 & 3e-7 & 99.0\% \\ %
        YAGO4+T & 32 127 569 & 9.4 & 3.4 & 6 & 1e-7 & 100\% \\
        Freebase & 10 754 238 & 4.9 & 4.7 & 100 & 6e-8 & 99.1\% \\
        WikiKG90Mv2 & 37 254 176 & 12.8 & 3.6 & 98 & 1e-7 & 100.0\% \\
        \bottomrule
    \end{tabular}%
    \end{small}\label{tab:kg_stats}%
\end{table}%

More statistics on the knowledge graph datasets are given in \autoref{tab:kg_stats}.
Maximum and average degree figures highlight the scale-free nature of real-world knowledge graphs.
The values for mean shortest path length (MSPL) and diameter (the diameter is the longest shortest path) are provided for the largest connected component (LCC). They are remarkably small, given the number of entities in the graphs. Contrary to other datasets, YAGO4.5, Freebase, and WikiKG90Mv2 contain `long chains' of nodes, which account for their larger diameters.

The density $D$ is the ratio between the number of edges $|E|$ and the maximum possible number of edges:
\begin{align*}
    D = \frac{|E|}{|\mathcal{V}|(|\mathcal{V}|-1)}
\end{align*}
where $|\mathcal{V}|$ denotes the number of nodes.

The LCC statistics show that for each knowledge graph, the largest connected component regroups almost all the entities.

\subsection{Downstream tables}
\label{app:downstream-tables}

\paragraph{Real-world tables}

We use 4 real-world downstream tasks adapted from \citet{cvetkov2023relational} who also investigate knowledge-graph embeddings to facilitate machine learning.
The specific target values predicted for each dataset are the following:
\begin{description}
    \item[US elections:] predict the number of votes per party in US counties;
    \item[Housing prices:] predict the average housing price in US cities;
    \item[US accidents:] predict the number of accidents in US cities;
    \item[Movie revenues:] predict the box-office revenues of movies.
\end{description}

For each table, a log transformation is applied to the target values as a preprocessing step. \autoref{tab:rows} contains the sizes of these real-world downstream tables.

\begin{table}[tb]%
\centering%
\caption{Number of rows in the downstream tables.\label{tab:rows}}%
\begin{small}%
\begin{tabular}{lrrrr}%
\toprule
 &  \textbf{US elections} &  \textbf{Housing prices} &  \textbf{US accidents} &  \textbf{Movie revenues} \\
\midrule
Number of rows   &         13 656 &           22 250 &         20 332 &            7 398 \\
\bottomrule
\end{tabular}%
\end{small}%
\end{table}%

\paragraph{WikiDBs tables}

WikiDBs contains 100,000 databases (collections of related tables), which altogether include 1,610,907 tables. We extracted 42 of those tables to evaluate embeddings. Here we describe the procedure used for table selection and processing.

\textit{Table selection:} Most of the WikiDBs tables are very small --typically a few dozen samples-- so our first filtering criterion was the table size, which must be large enough to enable fitting an estimator. Therefore, we randomly sampled 100 tables from WikiDBs with sufficient sizes ($N_{\text{rows}} > 1,000$). Then we looked at each sampled table individually and kept those that could be used to define a relevant machine learning task (either regression or classification). We ended up with 16 regression tasks and 26 classification tasks.

\textit{Table processing:} We removed rows with missing values, and reduced the size of large tables to keep evaluation tractable.
For regression tables, we simply sampled 3,000 rows randomly (if the table had more than 3,000). We applied a log transformation to the target values to remove the skewness of their distributions. Before that, dates were converted into floats (by first converting them to fractional years, and then applying the transform $t \mapsto 2025-t$).
For classification tables, to reduce the dataset sizes while preserving both class diversity and balance, we downsampled the tables with the following procedure:
\begin{enumerate}
    \item Class filtering: we set a threshold $r = \min(50, 0.9 N_2)$, where $N_2$ is the cardinality of the second most populated class, and retained only the classes with more than $r$ occurrences.
    \item Limit number of classes: if more than 30 classes remained after filtering, only the 30 most frequent were kept.
    \item Downsampling: if the resulting table contained more than 3,000 rows, we sampled rows such that: (a) if $r \cdot N_{\text{classes}} \le 3,000$, at least $r$ rows were sampled per class, (b) if $r \cdot N_{\text{classes}} > 3,000$, we sampled an approximately equal number of rows per class, fitting within the 3,000-row limit.
\end{enumerate}

The specifications of the 42 tables extracted are given in \autoref{tab:reg_wkdb_tables} and \autoref{tab:cls_wkdb_tables}.

\begin{table}[tb]
    \centering%
    \small%
    \caption{\textbf{Regression tables from WikiDBs.} The `DB number' is the number of the database in WikiDBs from which the table was taken. Among these 16 regression tables, 2 are used for validation, and 14 for test.}%
    \begin{tabular}{lrrrr}%
        \toprule
        \textbf{Table name} & \textbf{DB number} & \textbf{Value to predict} & \textbf{$\textbf{N}_{\text{rows}}$} & \textbf{Set} \\
        \midrule
        Historical Figures & 62 826 & Birth date & 3 000 & Val \\
        Geopolitical Regions & 66 610 & Land area & 2 324 & Val \\
        
        Eclipsing Binary Star Instances & 3 977 & Apparent magnitude & 3 000 & Test \\
        Research Article Citations & 14 012 & Publication date & 3 000 & Test \\
        Drawings Catalog & 14 976 & Artwork height & 3 000 & Test \\
        Municipal District Capitals & 19 664 & Population count & 2 846 & Test \\
        Twinned Cities & 28 146 & Population & 1 194 & Test \\
        Ukrainian Village Instances & 28 324 & Elevation (meters) & 3 000 & Test \\
        Dissolved Municipality Records & 46 159 & Dissolution date & 3 000 & Test \\
        Research Articles & 53 353 & Publication date & 3 000 & Test \\
        Territorial Entities & 82 939 & Population count & 3 000 & Test \\
        Artworks Inventory & 88 197 & Artwork width & 3 000 & Test \\
        Business Entity Locations & 89 039 & Population count & 3 000 & Test \\
        WWI Personnel Profiles & 89 439 & Birth date & 3 000 & Test \\
        Registered Ships & 90 930 & Gross tonnage & 3 000 & Test \\
        Poet Profiles & 94 062 & Death date & 3 000 & Test \\
        \bottomrule
    \end{tabular}
    \label{tab:reg_wkdb_tables}
\end{table}

\begin{table}[tb]
    \centering%
    \small%
    \caption{\textbf{Classification tables from WikiDBs.} The `DB number' is the number of the database in WikiDBs from which the table was taken. Among these 26 classification tables, 2 are used for validation, and 24 for test.}%
    \begin{tabular}{lrrrrr}%
        \toprule
        \textbf{Table name} & \textbf{DB number} & \textbf{Class to predict} & \textbf{$\textbf{N}_{\text{rows}}$} & \textbf{$\textbf{N}_{\text{classes}}$} & \textbf{Set} \\
        \midrule
        Creative Commons Authors & 9 510 & Gender & 2 999 & 2 & Val \\
        Historical Figures & 73 376 & Profession & 1 044 & 5 & Val \\
        
        Historic Buildings & 473 & Country & 2 985 & 30 & Test \\
        Striatum Scientific Articles & 2 053 & Journal name & 2 986 & 30 & Test \\
        Researcher Profile & 7 136 & Affiliated institution & 237 & 7 & Test \\
        Decommissioned Transport Stations & 7 310 & Country & 2 983 & 30 & Test \\
        Artist Copyright Representation & 7 900 & Artist occupation & 2 986 & 27 & Test \\
        Forward Players & 15 542 & Team & 2 985 & 30 & Test \\
        Rafael Individuals & 29 832 & Nationality & 2 966 & 12 & Test \\
        Artworks Catalog & 30 417 & Artwork type & 2 991 & 17 & Test \\
        Magic Narrative Motifs & 36 100 & Cultural origin & 2 993 & 12 & Test \\
        Geographer Profiles & 42 562 & Language & 2 992 & 14 & Test \\
        Surname Details & 47 746 & Language & 1 420 & 5 & Test \\
        Sculpture Instances & 56 474 & Material used & 2 985 & 30 & Test \\
        Spring Locations & 63 797 & Country & 2 981 & 30 & Test \\
        Noble Individuals & 64 477 & Role & 2 987 & 30 & Test \\
        Defender Profiles & 65 102 & Defender position & 2 998 & 5 & Test \\
        Kindergarten Locations & 66 643 & Country & 2 998 & 4 & Test \\
        Sub Post Office Details & 67 195 & Administrative territory & 2 986 & 30 & Test \\
        State School Details & 70 780 & Country & 2 995 & 12 & Test \\
        Notable Trees Information & 70 942 & Tree species & 2 992 & 19 & Test \\
        Parish Church Details & 87 283 & Country & 2 993 & 15 & Test \\
        Museum Details & 90 741 & Country & 2 986 & 30 & Test \\
        Island Details & 92 415 & Country & 2 986 & 30 & Test \\
        Philosopher Profiles & 97 229 & Language & 2 985 & 29 & Test \\
        Music Albums Published in the US & 97 297 & Music Genre & 2 984 & 30 & Test \\
        \bottomrule
    \end{tabular}
    \label{tab:cls_wkdb_tables}
\end{table}

\subsection{Entity coverage of downstream tables}

\begin{figure}%
    \centering%
    \includegraphics[width=\linewidth]{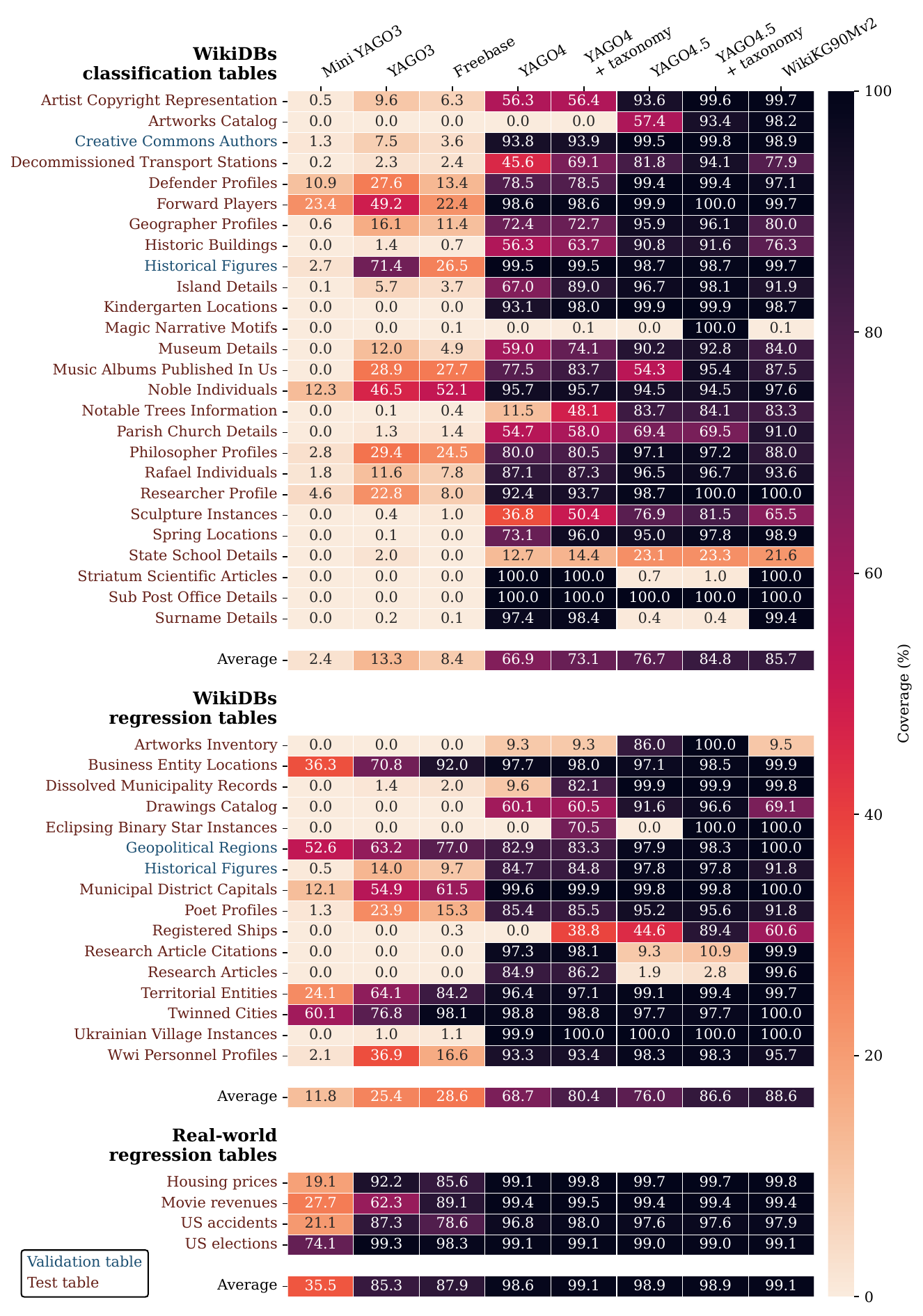}%
    \caption{\textbf{Entity coverage of downstream tables.} Over the 46 downstream tables 4 are used for validation (in blue), and 42 are used for test (in maroon).}%
    \label{fig:coverage}%
\end{figure}

\paragraph{Entity coverage} We define the coverage of table $T$ by knowledge graph $K$ as the proportion of downstream entities in $T$ that are described in $K$. \autoref{fig:coverage} gives the empirically measured coverage of the 46 downstream tables by the 8 different knowledge graphs used in our experimental study. It shows that larger and more recent knowledge graphs yield greater coverage.

\paragraph{Entity matching} Leveraging knowledge-graph embeddings to enrich a downstream tabular prediction task requires mapping the table entries to entities of the knowledge graph. We call this process \textit{entity matching}.

For the 4 real-world tables, we performed the entity matching `semi-automatically', following \citet{cvetkov2023relational}. The entries of these tables are well formatted and a small set of simple rules is sufficient to match the vast majority of entities. Human supervision was required for some cases of homonymy, for instance.

For the WikiDBs tables, we used the Wikidata Q identifiers (QIDs) included in the WikiDBs dataset to straightforwardly match the entities to WikiKG90Mv2\footnote{Entity mapping for WikiKG90Mv2 is provided at \href{https://groups.google.com/g/open-graph-benchmark/c/R0SKtj9qQyE}{https://groups.google.com/g/open-graph-benchmark/c/R0SKtj9qQyE}.}, YAGO4, and YAGO4.5, which all provide the QIDs for every entity. YAGO4 also offers a mapping to the Freebase entities, which we used to match the Freebase entities to the WikiDBs tables. Additionally, both YAGO3 and YAGO4 provide mappings to DBpedia, which enabled us to obtain the matching for YAGO3.

\section{Evaluation methodology}

\subsection{Downstream tasks}
\label{app:dt_setup}

\paragraph{Setting}
For each dataset, we use scikit-learn's Histogram-based Gradient Boosting Regression (\textit{resp.} Classification) Tree~\citep{pedregosa2011scikit} as regression (\textit{resp.} classification) estimator to predict the target value. The embeddings are the only features fed to the estimator, except for the US elections dataset for which we also include the political party. For embedding models outputting complex embeddings, such as RotatE, we simply concatenate real and imaginary parts before feeding them to the estimator. \autoref{fig:downstream_enrichment_schema} illustrates our evaluation setting.

\begin{figure}[t]
    \centering
    \includegraphics[width=0.8\linewidth]{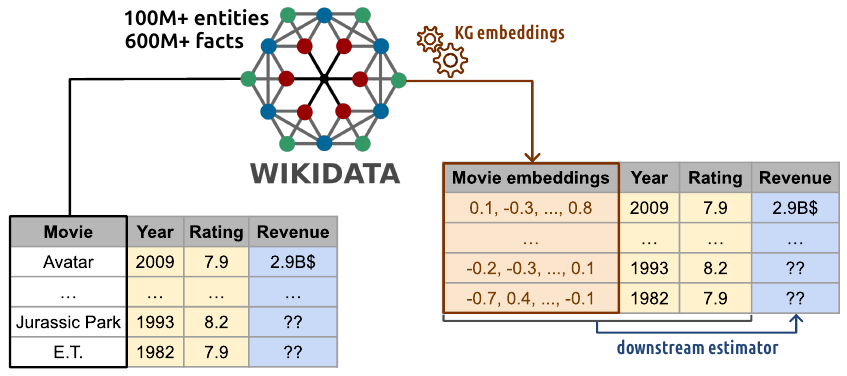}
    \caption{\textbf{Illustration of our evaluation setting for downstream tasks.} Embeddings computed on an external knowledge graph are used to enrich the features of a downstream table. The utility of these embeddings is measured by the improvement they bring to the predictions of a downstream estimator.}
    \label{fig:downstream_enrichment_schema}
\end{figure}

The rows of the tables corresponding to entities not found in the knowledge graph are filled with NaNs as features for the estimator. This enables to compare the scores between different knowledge graphs (see \autoref{fig:rw-pareto} and \autoref{fig:wkdb-pareto}) of different sizes to see the benefits obtained from embedding larger graphs, with better coverage of downstream entities (\autoref{fig:coverage}).

\begin{figure}
    \centering%
    \includegraphics[width=0.7\linewidth]{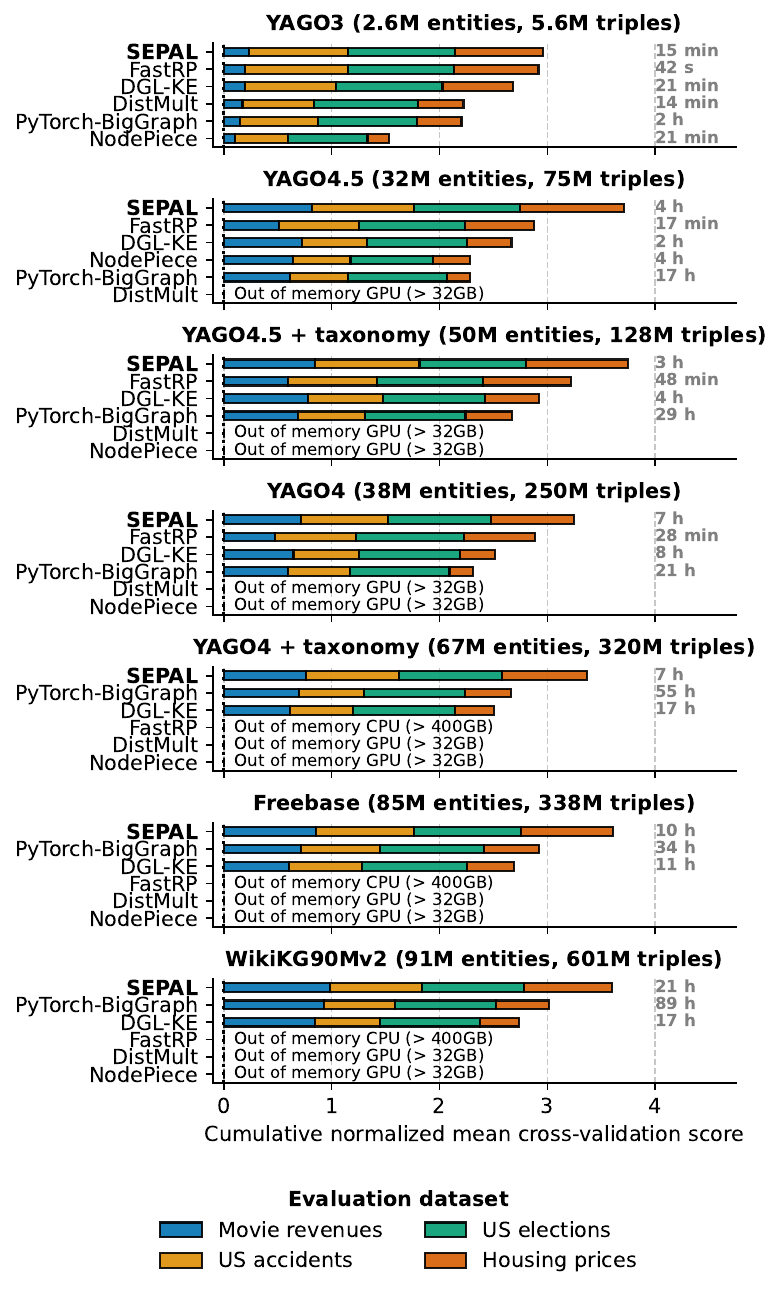}%
    \caption{\textbf{Detailed results on real-world tables.} The "\textit{Cumulative normalized mean cross-validation score}" reported is obtained by summing the normalized mean cross-validation scores.  For an evaluation dataset, 1 corresponds to the best R2 score across all models; as there are 4 evaluation datasets, the highest possible score for a model is 4 (getting a score of 4 means that the model beats every model on every evaluation dataset). SEPAL, PyTorch-BigGraph, DGL-KE, and NodePiece use DistMult as base model. Embedding computation times are provided on the right-hand side of the figure. \autoref{fig:full-downstream} extends this figure with other embedding models.}%
    \label{fig:rw-hbar}%
\end{figure}

\begin{figure}
    \centering%
    \includegraphics[width=\linewidth]{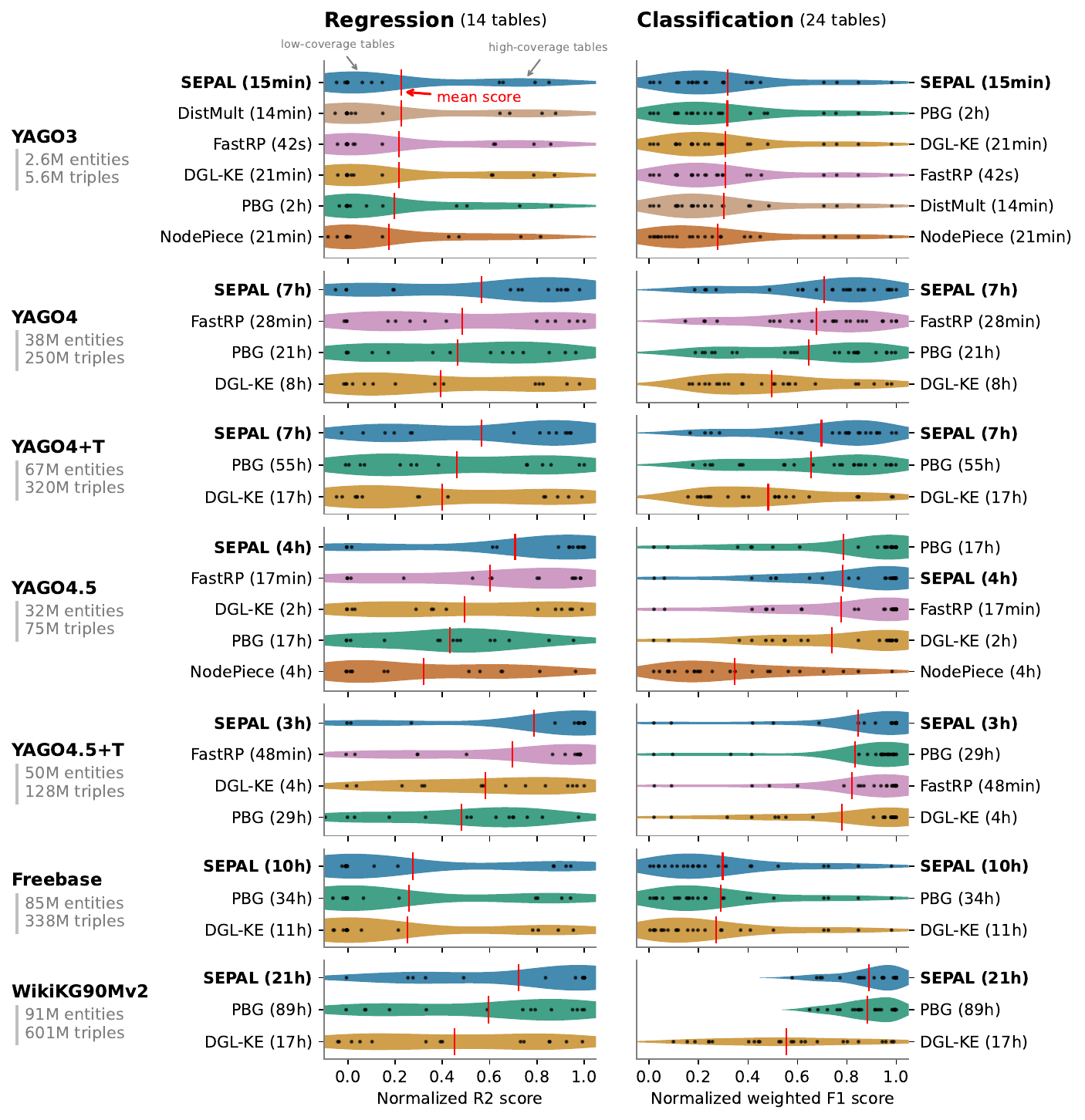}%
    \caption{\textbf{Detailed results on WikiDBs tables.} Each black dot represents a downstream table. Red vertical lines indicate the mean score over all the tables. The methods appear in decreasing order of average score.}%
    \label{fig:wkdb-violin}%
\end{figure}

\paragraph{Metrics}
The metric used for regression is the R2 score, defined as:
\begin{align*}
    R^2 = 1 - \frac{\sum_{i=1}^N (y_i - \hat{y}_i)^2}{\sum_{i=1}^N (y_i - \bar{y})^2}
\end{align*}
where $N$ is the number of samples (rows) in the target table, $y_i$ is the target value of sample $i$, $\hat{y}_i$ is the value predicted by the estimator, and $\bar{y}$ is the mean value of the target variable.

For classification, we use the weighted F1 score, defined as:
\begin{align*}
    F1_{\text{weighted}} = \sum_{i=1}^{K} \frac{n_i}{N} \cdot F1_i
\end{align*}
where $K$ is the number of classes, $n_i$ is the number of true instances for class $i$, $N = \sum_{i=1}^{K} n_i $ is the total number of samples, and $F1_i$ is the F1 score for class $i$.

To get the "\textit{Mean score (normalized)}" reported on \autoref{fig:rw-pareto} and \autoref{fig:wkdb-pareto}, we proceed as follows:
\begin{enumerate}
    \item \textbf{Mean cross-validation score}: for each model\footnote{We define ``\textit{model}'' as the combination of a method (\textit{e.g.} DistMult, DGL-KE, etc.) and a knowledge graph on which it is trained.} and evaluation table, the scores (R2 or weighted F1, depending on the task) are averaged over 5 repeats of 5-fold cross-validations.
    \item \textbf{Normalized}: for each evaluation table, we divide the scores of the different models by the score of the best-performing model on this table. This makes the scores more comparable between the different evaluation tables.
    \item \textbf{Average}: for each model, we average its scores across every evaluation table. The highest possible score for a model is 1. Getting a score of 1 means that the model beats every other model on every evaluation table.
\end{enumerate}

\paragraph{Validation/test split and hyperparameter tuning}
We use 4 of the 42 WikiDBs tables as validation data—2 for regression and 2 for classification tasks (see \autoref{fig:coverage}). The remaining 38 WikiDBs tables, along with the 4 real-world tables, are used exclusively for testing.

The validation tables are used to tune hyperparameters and select the best-performing configuration for each method and each knowledge graph. Unless otherwise specified, all reported results are obtained on the test tables using these optimized configurations.

\subsection{Knowledge graph completion}
\label{app:kg_setup}

We detail our experimental setup for the link prediction task:

\begin{description}[itemsep=1pt, parsep=1pt, topsep=0pt]
\item[Setting:] We evaluate models under the transductive setting: the missing links to be predicted connect entities already seen in the train graph. The task is to predict the tail entity of a triple, given its head and relation.
\item[Stratification:]  We randomly split each dataset into training (90\%), validation (5\%), and test (5\%) subsets of triples. During stratification, we ensure that the train graph remains connected by moving as few triples as required from the validation/test sets to the training set.
\item[Sampling:] Given the size of our datasets, sampling is required to keep link prediction tractable. For each evaluation triple, we sample 10,000 negative entities uniformly to produce 10,000 candidate negative triples by corrupting the positive.
\item[Filtering:] For tractability reasons, we report unfiltered results: we do not remove triples already existing in the dataset (which may score higher than the test triple) from the candidates.
\item[Ranking:] If several triples have the same score, we report realistic ranks (i.e., the expected ranking value over all permutations respecting the sort order; see PyKEEN documentation \citep{ali2021pykeen}).
\item[Metrics:] We use three standard metrics for link prediction: the mean reciprocal rank (MRR), hits at $k$ (for $k\in\{1,10,50\}$) and mean rank (MR). Given the rankings $r_1,\ldots,r_n$ of the $n$ evaluation (validation or test) triples:
\begin{align*}
    &\mathrm{MRR} = \frac{1}{n} \sum_{i=1}^n \frac{1}{r_i},
    &\mathrm{Hits@}k = \frac{1}{n} \sum_{i=1}^n 1_{r_i\le k},&
    &\mathrm{MR} = \frac{1}{n} \sum_{i=1}^n r_i.
\end{align*}
\end{description}

\subsection{Experimental setup}
\label{app:setup}

\paragraph{Baseline implementations}
In our empirical study, we compare SEPAL to DistMult, NodePiece, PBG, DGL-KE, and FastRP. We use the PyKEEN \citep{ali2021pykeen} implementation for DistMult and NodePiece, and the implementations provided by the authors for the others. PBG was trained on 20 cores, and DGL-KE was allocated 20 CPU cores and 3 GPUs; both methods were run on a single machine.
The version of NodePiece we use for datasets larger than Mini YAGO3 is the ablated version, where nodes are tokenized only from their relational context (otherwise, the method does not scale on our hardware).
For PBG, DGL-KE, NodePiece, and FastRP we used the hyperparameters provided by the authors for datasets of similar sizes. SEPAL and DistMult's hyperparameters were tuned on the validation sets presented in \cref{app:dt_setup} and \cref{app:kg_setup}.

For all the baseline clustering algorithms, we used the implementations from the igraph package \citep{csardi2013package} except for METIS, HDRF and Spectral Clustering. For METIS, we used the torch-sparse implementation, for Spectral Clustering, the scikit-learn \citep{pedregosa2011scikit} implementation, and for HDRF, we used the C++ implementation from this \href{https://github.com/dickynovanto1103/HDRF}{repository}.

\begin{figure}
    \centering%
    \includegraphics[width=\linewidth]{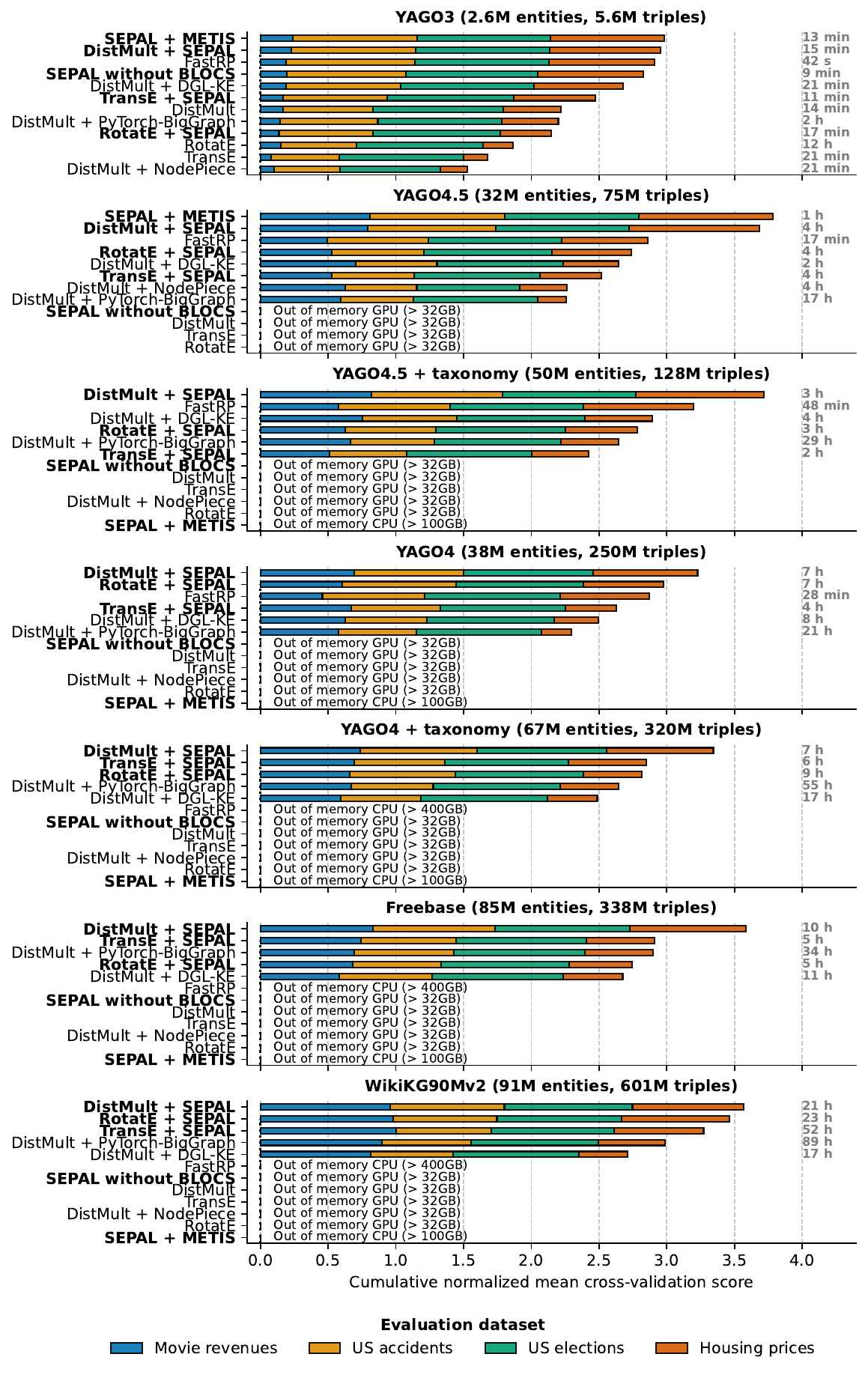}%
    \caption{\textbf{Performance on real-world downstream tables.}}%
    \label{fig:full-downstream}%
\end{figure}

\paragraph{Computer resources}

For PBG and FastRP, experiments were carried out on a machine with 48 cores and 504 GB of RAM. DistMult, DGL-KE, NodePiece, and SEPAL were trained on Nvidia V100 GPUs with 32 GB of memory, and 20 cores with 252 GB of RAM. The clustering benchmark was run on a machine with 88 cores and 504 GB of RAM.

\begin{table}[tb]%
\centering%
\small%
\caption{Results for link prediction. Best in bold, second underlined.}%
\label{tab:link-prediction}%
\begin{tabular}{lrrrrrrr|c}%
\toprule
  &   \rot{YAGO3} & \rot{YAGO4.5} & \rot{YAGO4.5+T} &    \rot{YAGO4} & \rot{YAGO4+T} & \rot{Freebase} & \rot{WikiKG90Mv2} & Average \\
\midrule
\multicolumn{8}{c}{\textbf{a. MRR}} \\[-.6ex]
\midrule
DistMult & \textbf{0.8049} & - & - & - & - & - & - & - \\
NodePiece & 0.2596 & 0.4456 & - & - & - & - & - & - \\
PBG & 0.5581 & \underline{0.5539} & \underline{0.5688} & \textbf{0.6406} & \textbf{0.6224} & \textbf{0.7389} & \textbf{0.6325} & \textbf{0.6165} \\
DGL-KE & \underline{0.7284} & \textbf{0.6200} & \textbf{0.6469} & 0.2372 & 0.2570 & 0.3017 & 0.3202 & 0.4445 \\
SEPAL & 0.6501 & 0.5537 & 0.5646 & \underline{0.4726} & \underline{0.477} & \underline{0.5378} & \underline{0.5291} & \underline{0.5407} \\
\midrule
\multicolumn{8}{c}{\textbf{b. Hits@1}} \\[-.6ex]
\midrule
DistMult & \textbf{0.7400} & - & - & - & - & - & - \\
NodePiece & 0.1735 & 0.3449 & - & - & - & - & - \\
PBG & 0.5000 & \underline{0.4939} & \underline{0.4977} & \textbf{0.5494} & \textbf{0.5416} & \textbf{0.7015} & \textbf{0.5568} & \textbf{0.5487} \\
DGL-KE & \underline{0.6663} & \textbf{0.5511} & \textbf{0.5733} & 0.1642 & 0.1892 & 0.2498 & 0.2416 & 0.3765 \\
SEPAL & 0.5412 & 0.4913 & 0.4922 & \underline{0.3746} & \underline{0.3755} & \underline{0.4824} & \underline{0.4502} & \underline{0.4582} \\
\midrule
\multicolumn{8}{c}{\textbf{c. Hits@10}} \\[-.6ex]
\midrule
DistMult & \textbf{0.9059} & - & - & - & - & - & - & - \\
NodePiece & 0.4388 & 0.6379 & - & - & - & - & - & - \\
PBG & 0.6562 & 0.6642 & \underline{0.7021} & \textbf{0.803} & \textbf{0.7662} & \textbf{0.8053} & \textbf{0.7693} & \textbf{0.7380} \\
DGL-KE & 0.8293 & \textbf{0.7446} & \textbf{0.7821} & 0.3786 & 0.3842 & 0.3964 & 0.4694 & 0.5692 \\
SEPAL & \underline{0.8394} & \underline{0.6650} & 0.6871 & \underline{0.6573} & \underline{0.6778} & \underline{0.6398} & \underline{0.6739} & \underline{0.6915} \\
\midrule
\multicolumn{8}{c}{\textbf{d. Hits@50}} \\[-.6ex]
\midrule
DistMult & \textbf{0.9504} & - & - & - & - & - & - & - \\
NodePiece & 0.7358 & \underline{0.8112} & - & - & - & - & - & - \\
PBG & 0.7259 & 0.7734 & \underline{0.8136} & \textbf{0.8891} & \textbf{0.8514} & \textbf{0.8541} & \textbf{0.8531} & \textbf{0.8229} \\
DGL-KE & 0.8873 & \textbf{0.8171} & \textbf{0.8748} & 0.6037 & 0.5968 & 0.5547 & 0.654 & 0.7126 \\
SEPAL & \underline{0.9204} & 0.7698 & 0.7786 & \underline{0.7661} & \underline{0.8068} & \underline{0.7476} & \underline{0.7805} & \underline{0.7957} \\
\midrule
\multicolumn{8}{c}{\textbf{e. MR}} \\[-.6ex]
\midrule
DistMult & \textbf{64.19} & - & - & - & - & - & - & - \\
NodePiece & 154.7 & \textbf{263.2} & - & - & - & - & - & - \\
PBG & 820.9 & 408.0 & 300.3 & \textbf{117.1} & \textbf{203.5} & \underline{243.4} & \underline{227.3} & 331.5 \\
DGL-KE & 187.7 & 624.1 & \underline{219.9} & \underline{224.9} & \underline{271.5} & \textbf{227.0} & \textbf{186} & \textbf{277.3} \\
SEPAL & \underline{95.87} & \underline{270.4} & \textbf{206.0} & 553.0 & 363.2 & 357.3 & 365.5 & \underline{315.9} \\
\bottomrule
\end{tabular}
\end{table}

\section{Additional evaluation of SEPAL}

\subsection{Table-level downstream results on real-world tables}

\autoref{fig:rw-hbar} shows the detailed prediction performance on the downstream tasks. SEPAL not only scales well to very large graphs (computing times markedly smaller than Pytorch-BigGraph), but also creates more valuable node features for downstream tasks.

\subsection{Table-level downstream results on WikiDBs tables}
\label{app:wkdb-results}

\autoref{fig:wkdb-violin} shows the detailed downstream tasks results for the WikiDBs test tables. For regression, SEPAL is the best performer on average for all of the 7 knowledge graphs. For classification, SEPAL is the best method on 6 of 7 knowledge graphs (narrowly beaten by PBG on YAGO4.5). Interestingly, FastRP, despite being very simple and not even accounting for relations, is a strong performer and beats more sophisticated methods like PBG and DGL-KE on many tasks. This is consistent with our \textit{queriability} analysis in \autoref{sec:theory} concluding that global methods, such as FastRP and SEPAL, are more suitable for downstream tasks, than local methods (such as PBG and DGL-KE).

For some knowledge graphs (especially Freebase and YAGO3), we can see two modes in the scores distribution, corresponding to the tables with low and high coverages (\autoref{fig:coverage}).

\subsection{SEPAL combined with more embedding models}

\autoref{fig:full-downstream} extends the results of \autoref{fig:rw-hbar} by adding TransE and RotatE, alone and combined with SEPAL, as well as ablation studies results of SEPAL combined with METIS or ablated from BLOCS. These results show that SEPAL systematically improves upon its base model, whether it be DistMult RotatE or TransE.
For a fair comparison, we ran RotatE with embedding dimension $d=50$, as it outputs complex embeddings having twice as many parameters. For other models, we use $d=100$.

\subsection{More baselines on the Freebase dataset}

The Freebase dataset has been used in several previous works. To further validate SEPAL's performance, we compare it to additional baselines from the large-scale KGE literature: GraSH \citep{kochsiek2022start}, an efficient hyperparameter optimization framework for large-scale KGEs, and SMORE \citep{ren2022smore}, a scalable KGE method supporting single-GPU training and multi-hop reasoning.

We train both of these baselines with DistMult as the base embedding method, on Freebase, using the authors' released configurations for this dataset. \autoref{tab:additional-baselines} reports the normalized scores on four real-world regression tasks, along with total training time.

\begin{table}[t]
    \caption{\textbf{Comparison to additional baselines on Freebase.} Normalized mean cross-validation scores on real-world downstream tasks, along with total training time. SEPAL outperforms all baselines while being the fastest to train.}
    \label{tab:additional-baselines}
    \centering
    \begin{tabular}{lccccc}
        \toprule
        \textbf{Method} & \textbf{Housing prices} & \textbf{Movie revenues} & \textbf{US accidents} & \textbf{US elections} & \textbf{Time} \\
        \midrule
        PBG & 0.513 & 0.723 & 0.742 & 0.957 & 33h 42m \\
        DGL-KE & 0.445 & 0.610 & 0.686 & 0.962 & 10h 50m \\
        SMORE & 0.160 & 0.332 & 0.410 & 0.926 & 6h 15m \\
        GraSH & 0.601 & 0.862 & 0.810 & 0.961 & 32h 33m \\
        SEPAL & \textbf{0.868} & \textbf{0.880} & \textbf{0.953} & \textbf{1.000} & \textbf{5h 58m} \\
        \bottomrule
    \end{tabular}
\end{table}

We trained SMORE for 1 million iterations on one GPU, following the authors' configuration. Its relatively low performance here suggests that longer training could improve results, but under a 6-hour budget, SEPAL is substantially better.

GraSH optimizes hyperparameters for link prediction using successive halvings to discard unpromising configurations at low cost. Results show that, after 33 hours of hyperparameter search on one GPU, GraSH produces better embeddings than other baselines. However, SEPAL remains the best performer on all tasks, and also the fastest method.

\subsection{Evaluation on prior benchmark}
\citet{ruffinelli2024beyond} propose a benchmark for evaluating KGE methods on downstream classification and regression. It includes several KGE baselines, with varying training procedures, and the graph neural network KE-GCN for entity classification. For each knowledge graph (FB15k-237, YAGO3-10, Wikidata5M), they evaluate embeddings on downstream tasks created from entity attributes of the knowledge graph.

We put the evaluation on this benchmark in appendix because our goal in this paper is to evaluate external, real-world tasks (\autoref{fig:rw-pareto}) that are independent of a specific knowledge graph, to compare the benefits of diverse knowledge graphs. In contrast, the Ruffinelli et al. tasks are created artificially and associated with specific knowledge graphs, that are orders of magnitude smaller than those of our main study.

However, evaluating SEPAL on these datasets complements our main experiments and enables direct comparison with prior work. Thus, we used 128-dimensional embeddings to match one of the dimensions in Ruffinelli et al.'s hyperparameter search space. We also used the authors' released evaluation script for comparable results. For each knowledge graph, \autoref{tab:ruffinelli-benchmark} reports:
\begin{enumerate}
    \item The best KGE method for classification from \citet{ruffinelli2024beyond};
    \item The best KGE method for regression from \citet{ruffinelli2024beyond};
    \item KE-GCN results from \citet{ruffinelli2024beyond};
    \item SEPAL results.
\end{enumerate}

\begin{table}[t]
    \caption{\textbf{Evaluation on Ruffinelli et al.'s benchmark.} Classification and regression results on FB15k-237 (14k entities, 272k triples), YAGO3-10 (123k entities, 1M triples), and Wikidata5M (4.8M entities, 21M triples). Best results for each task are in bold.}
    \label{tab:ruffinelli-benchmark}
    \centering
    \begin{tabular}{llcc}
        \toprule
        \textbf{Dataset} & \textbf{Method} & \textbf{Classification weighted F1 $\uparrow$} & \textbf{Regression RSE $\downarrow$} \\
        \midrule
        \multirow{4}{*}{FB15k-237} & ComplEx (MTT) & 0.858 & \textbf{0.394} \\
                                   & RotatE (MTT) & \textbf{0.890} & 0.573 \\
                                   & KE-GCN & 0.829 & 0.501 \\
                                   & SEPAL & 0.853 & 0.492 \\
        \cmidrule(lr){1-4}
        \multirow{4}{*}{YAGO3-10} & DistMult (MTT) & 0.746 & 0.472 \\
                                  & TransE (MTT) & 0.723 & 0.441 \\
                                  & KE-GCN & 0.700 & 0.398 \\
                                  & SEPAL & \textbf{0.762} & \textbf{0.386} \\
        \cmidrule(lr){1-4}
        \multirow{2}{*}{Wikidata5M} & TransE (STD) & -- & 0.596 \\
                                    & SEPAL & -- & \textbf{0.568} \\
        \bottomrule
    \end{tabular}
\end{table}

The results show that, on YAGO3-10 and Wikidata5M, SEPAL achieves the best performance for both regression and classification, consistent with our main results on much larger knowledge graphs. On FB15k-237, SEPAL has weaker results, but this may be due to the dataset size (FB15k-237 has 14k entities, 185 to 6,500 times smaller than the graphs considered in our main evaluation). For these tiny graphs, which fall outside SEPAL's intended scope, SEPAL may not be adapted because the core becomes too small to learn good representations for the relations and core entities.

\subsection{Evaluation on knowledge graph completion}\label{app:link-prediction}
\subsubsection{Link prediction results}

\autoref{tab:link-prediction} provides the link prediction results for the different metrics. It shows that, depending on the dataset, SEPAL is competitive with existing methods (DGL-KE, PBG) or not. However, none of the methods is consistently better than the others for all datasets. Following the analysis in \autoref{sec:theory}, these results are expected given that SEPAL does not enforce local contrast between positive and negative triples through contrastive learning with negative sampling, like other KGE methods do, but rather focuses on global consistency of the embeddings. Link prediction is, by essence, a local task, asking to discriminate efficiently between positives and negatives. For this purpose, it seems that the negative sampling, absent from SEPAL's propagation, plays a crucial role.

\begin{figure}[tb]
    \centering%
    \includegraphics[width=\linewidth]{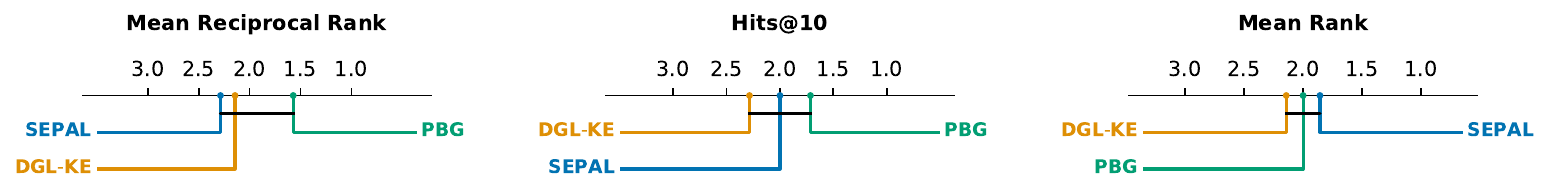}%
    \caption{\textbf{Critical difference diagrams on link prediction metrics.} Statistical tests show no significant difference between methods at significance level $\alpha=0.05$.}%
    \label{fig:lp-cdd}%
\end{figure}

\begin{figure}[t]
    \centering%
    \includegraphics[width=\linewidth]{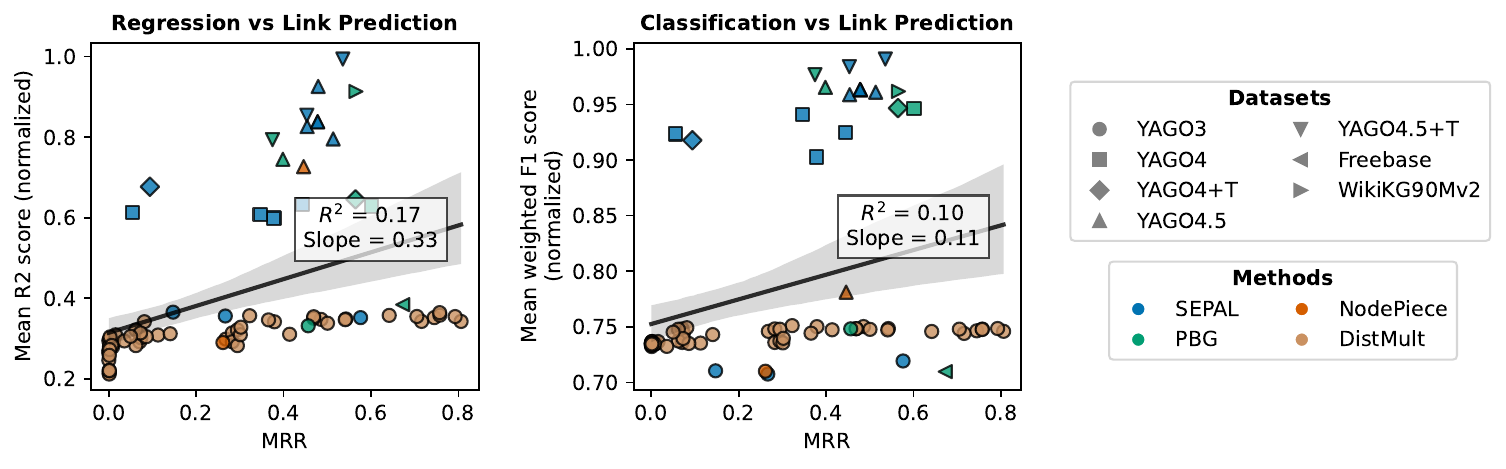}%
    \caption{\textbf{Downstream task performance against link prediction performance}, on validation sets. Linear regression, with a 95\% confidence interval. We plot the performance of models that share the same hyperparameters on the train graph (used for link prediction) and the full graph (used for downstream tasks).}%
    \label{fig:lp_vs_dt}%
\end{figure}

Nevertheless, \autoref{fig:lp-cdd} shows that a Friedman test followed by Conover’s post-hoc analysis \citep{conover1979multiple, conover1999practical} reveal no statistically significant differences (at significance level $\alpha=0.05$) among the three methods that scale to all the knowledge graphs (SEPAL, PBG, and DGL-KE), as indicated by the critical difference diagrams where all methods are connected by a black line.

From a hyperparameter perspective, contrary to downstream tasks, the hybrid core selection strategy (with both node and relation sampling) yields better results than its simpler degree-based counterpart. This highlights different trade-offs between downstream tasks and link prediction. For link prediction, good relational coverage seems to count most, whereas for downstream tasks, the core density matters most (\autoref{fig:core-selection}).

\subsubsection{Downstream and link prediction performance weakly correlate}

\autoref{fig:lp_vs_dt} shows that embeddings performing well on downstream tasks do not necessarily perform well on link prediction, and vice versa. There is only a small positive correlation between these two performances: $R^2 \sim 0.1$.

\section{Theoretical analysis}

\paragraph{Notations} We use the following notations:
\begin{itemize}
  \item $\boldsymbol{\Theta}^{(t)} \in \mathbb{R}^{n \times d}$ is the embedding matrix at step $t$, where each row is the embedding of an entity. Without loss of generality, we assume that the entities are ordered core first, then outer, so we can write $\boldsymbol\Theta^{(t)} = \begin{bmatrix} \boldsymbol\Theta_c \\ \boldsymbol\Theta_o^{(t)} \end{bmatrix}$ with $\boldsymbol{\Theta}_c \in \mathbb{R}^{n_c \times d}$ the embedding matrix of core (fixed) entities, and $\boldsymbol{\Theta}_o^{(t)} \in \mathbb{R}^{n_o \times d}$ the embedding matrix of outer (updated) entities at step $t$. $n_c$ and $n_o$ denote the number of core and outer entities, respectively, and $n_c+n_o=n$ is the total number of entities.
  \item $\boldsymbol{w}_r \in \mathbb{R}^d$: embedding of relation $r\in\mathcal{R}$ (fixed).
  \item $\boldsymbol{x}^{(t)} = \operatorname{vec}(\boldsymbol{\Theta}^{(t)}) = \left[\boldsymbol{\Theta}^{(t)}_{1,1}, \ldots, \boldsymbol{\Theta}^{(t)}_{n,1}, \boldsymbol{\Theta}^{(t)}_{1,2}, \ldots, \boldsymbol{\Theta}^{(t)}_{n,2}, \ldots, \boldsymbol{\Theta}^{(t)}_{1,d}, \ldots, \boldsymbol{\Theta}^{(t)}_{n,d}\right]^\top \in \mathbb{R}^{nd}$: vectorization of the embedding matrix.
  \item $\boldsymbol{P} \in \mathbb{R}^{nd \times nd}$: global linear propagation matrix.
  \item $\boldsymbol{Q} \in \mathbb{R}^{nd \times nd}$: permutation matrix to reorder $\boldsymbol{x}$ into core and outer blocks.
  \item $\boldsymbol{y}^{(t)} = \boldsymbol{Q} \boldsymbol{x}^{(t)} = \left[\boldsymbol{\theta}_1^\top; \ldots; \boldsymbol{\theta}_n^\top\right]^\top \in \mathbb{R}^{nd}$: reordered vector of embeddings. $\boldsymbol{y}^{(t)}$ can also be written as $\boldsymbol{y}^{(t)} = \begin{bmatrix} \boldsymbol{y}_c \\ \boldsymbol{y}_o^{(t)} \end{bmatrix}$.
  \item $\boldsymbol{M} = \boldsymbol{Q} (\boldsymbol{I} + \alpha \boldsymbol{P}) \boldsymbol{Q}^{-1} \in \mathbb{R}^{nd \times nd}$: reordered update matrix. $\boldsymbol{M}$ can be written by block $\boldsymbol{M} = \begin{bmatrix} \boldsymbol{M}_{cc} & \boldsymbol{M}_{co} \\ \boldsymbol{M}_{oc} & \boldsymbol{M}_{oo} \end{bmatrix}$ where $\boldsymbol{M}_{oo}$ and $\boldsymbol{M}_{oc}$ are submatrices representing outer-to-outer and core-to-outer influences.
\end{itemize}

\subsection{Analysis of SEPAL's dynamic and analogies to eigenvalue
problems}
\label{app:eigenvalue_problem}

This section presents a theoretical analysis of the SEPAL propagation algorithm. We provide a series of intuitive and structural analogies to classic iterative methods in numerical linear algebra. Our goal is to contextualize the algorithm's behavior under various assumptions and shed light on its dynamic properties.
The analysis is carried out in the case of DistMult, which simplifies the propagation rule due to its element-wise multiplication structure.

\paragraph{SEPAL as power iteration (no normalization, no boundary conditions)}

We begin by analyzing the case without normalization or fixed embeddings. In this setting, with the vectorized embedding matrix $\boldsymbol{x}^{(t)} \in \mathbb{R}^{nd}$, the propagation equation (\autoref{eq:update}) becomes:
\begin{align}
\label{eq:simplified_prop}
    \boldsymbol{x}^{(t+1)} = (\boldsymbol{I} + \alpha \boldsymbol{P}) \boldsymbol{x}^{(t)},
\end{align}
where $\boldsymbol{P} \in \mathbb{R}^{nd \times nd}$ encodes the linear update based on the knowledge graph structure and DistMult composition rule. The matrix $\boldsymbol{P}$ is block diagonal:
\begin{align}
\boldsymbol{P} = \begin{bmatrix}
\boldsymbol{P}^{(1)} & \boldsymbol{0} & \cdots & \boldsymbol{0} \\
 \boldsymbol{0} & \boldsymbol{P}^{(2)} & \cdots & \boldsymbol{0} \\
\vdots & \vdots & \ddots & \vdots \\
\boldsymbol{0} & \boldsymbol{0} & \cdots & \boldsymbol{P}^{(d)}
\end{bmatrix}
\in \mathbb{R}^{nd \times nd},
\end{align}
where each block $P^{(k)}$ corresponds to the $k$-th embedding dimension and has:
\begin{align}
    \boldsymbol{P}^{(k)}_{u,v} = \sum_{(v,r,u)\in\mathcal{K}} [\boldsymbol{w}_r]_k.
\end{align}
$\boldsymbol{P}^{(k)}$ can be seen as a weighted adjacency matrix of the graph, whose weights are the $k$-th coefficients of the relation embeddings of the corresponding edges. Here, each $(v, r, u) \in \mathcal{K}$ contributes a rank-1 update to $\boldsymbol{P}$ based on $\boldsymbol{w}_r$.

Therefore, in this setting, the problem is separable with respect to the dimensions, so each dimension can be studied independently. %

The recurrence in \autoref{eq:simplified_prop} defines a classical \emph{power iteration}. In general, it diverges unless the spectral radius $\rho(\boldsymbol{\boldsymbol{I} + \alpha \boldsymbol{P}}) < 1$. 
In our setting, norms can grow arbitrarily because $\boldsymbol{P}$ contains non-normalized adjacency submatrices whose eigenvalues are only bounded by the maximum degree of the graph. In practice, the normalization (studied below) prevents the algorithm from diverging.

\paragraph{With boundary conditions: non-homogeneous recurrence}

Now, we consider the SEPAL's setting where core entity embeddings are fixed and only outer embeddings evolve, still without normalization.
We use the reordered vector of embeddings $\boldsymbol{y}^{(t)} = \boldsymbol{Q} \boldsymbol{x}^{(t)} \in \mathbb{R}^{nd}$, which can be written as follows:
\begin{align}
    \boldsymbol{y}^{(t)} = \begin{bmatrix}
    \boldsymbol{\theta}_1^{(t)} \\
    \boldsymbol{\theta}_2^{(t)} \\
    \vdots \\
    \boldsymbol{\theta}_n^{(t)}
    \end{bmatrix} = \begin{bmatrix} \boldsymbol{y}_c \\ \boldsymbol{y}_o^{(t)} \end{bmatrix},
\end{align}
where $\boldsymbol{\theta}_u^{(t)}$ is the embedding of entity $u$. The reordered update matrix $\boldsymbol{M} = Q(\boldsymbol{I} + \alpha \boldsymbol{P})Q^{-1}$ has the following block structure:
\begin{align}
    \boldsymbol{M} = \begin{bmatrix}
    \boldsymbol{M}_{11} & \cdots & \boldsymbol{M}_{1n} \\
    \vdots & \ddots & \vdots \\
    \boldsymbol{M}_{n1} & \cdots & \boldsymbol{M}_{nn}
    \end{bmatrix}
    \in \mathbb{R}^{nd \times nd},
\end{align}
where each block $\boldsymbol{M}_{uv} \in \mathbb{R}^{d \times d}$ encodes how the embedding of entity $v$ contributes to the update of entity $u$.
In the case of the DistMult scoring function, this block is diagonal and takes the form:
\begin{align}
    \boldsymbol{M}_{uv} = \begin{cases}
        \sum_{(v, r, u) \in \boldsymbol{\mathcal{K}}} \alpha \cdot \operatorname{diag}(\boldsymbol{w}_r) & \text{if $u\neq v$,} \\
        \boldsymbol{I}_d + \sum_{(v, r, u) \in \boldsymbol{\mathcal{K}}} \alpha \cdot \operatorname{diag}(\boldsymbol{w}_r) & \text{otherwise,}
    \end{cases}
\end{align}
where $\operatorname{diag}(\boldsymbol{w}_r)$ is the diagonal matrix with the relation embedding $\boldsymbol{w}_r \in \mathbb{R}^d$ on the diagonal.

Thus, $\boldsymbol{M}$ is a sparse block matrix, with each non-zero block being diagonal, and it linearly propagates information across entity embeddings via dimension-wise interactions determined by the DistMult model. Grouping core and outer entities together, we can also write $\boldsymbol{M} = \begin{bmatrix} \boldsymbol{M}_{cc} & \boldsymbol{M}_{co} \\ \boldsymbol{M}_{oc} & \boldsymbol{M}_{oo} \end{bmatrix}$.

This allows us to rewrite the propagation equation to account for the boundary conditions. We obtain:
\begin{align}
    \boldsymbol{y}_o^{(t+1)} = \boldsymbol{M}_{oo} \boldsymbol{y}_o^{(t)} + \boldsymbol{M}_{oc} \boldsymbol{y}_c,
\end{align}
where $\boldsymbol{M}_{oo}$ represents signal propagation between outer nodes, and $\boldsymbol{M}_{oc}$ encodes injection from the core nodes.

This is a \emph{non-homogeneous linear recurrence}. If $\rho(\boldsymbol{M}_{oo}) \geq 1$, the outer embeddings diverge in norm. Nonetheless, the structure is analogous to forced linear systems such as:
\begin{align*}
    \boldsymbol{y}_{t+1} = \boldsymbol{A} \boldsymbol{y}_t + \boldsymbol{b},
\end{align*}
where the long-term behavior is driven by the balance between eigenvalues of $\boldsymbol{A}$ and direction of $\boldsymbol{b}$.

\paragraph{Normalization and Arnoldi-type analogy}

SEPAL applies $\ell_2$ normalization after each update:
\begin{align}
    \boldsymbol{\theta}_u^{(t+1)} = \frac{\boldsymbol{\theta}_u^{(t)} + \alpha \boldsymbol{a}_u^{(t+1)}}{\| \boldsymbol{\theta}_u^{(t)} + \alpha \boldsymbol{a}_u^{(t+1)} \|_2},
\end{align}
which constrains every embedding to the unit sphere. This \emph{couples dimensions} and prevents simple linear analysis.

However, the recurrence
\begin{align}
    \boldsymbol{y}_o^{(t+1)} = \boldsymbol{M}_{oo} \boldsymbol{y}_o^{(t)} + \boldsymbol{M}_{oc} \boldsymbol{y}_c,
\end{align}
followed by blockwise normalization of $\boldsymbol{y}^{(t)}$ (with a $\ell_{2,\infty}$ mixed norm), shares structural similarities with the Arnoldi iteration \citep{arnoldi1951principle}:
\begin{itemize}
  \item Successive embeddings span a Krylov subspace: after iteration $t$, without normalization, $\boldsymbol{y}_o^{(t)}$ belongs to $\mathcal{K}_t(\boldsymbol{A},\boldsymbol{b}) = \operatorname{span} \, \{ \boldsymbol{b}, \boldsymbol{A}\boldsymbol{b}, \boldsymbol{A}^2\boldsymbol{b}, \ldots, \boldsymbol{A}^{t-1}\boldsymbol{b} \}$, with $\boldsymbol{A} = \boldsymbol{M}_{oo}$ and $\boldsymbol{b} = \boldsymbol{M}_{oc} \boldsymbol{y}_c$, given that $\boldsymbol{y}_o^{(0)} = \boldsymbol{0}$.
  \item Core embeddings define the forcing direction ($\boldsymbol{b} = \boldsymbol{M}_{oc} \boldsymbol{y}_c$).
  \item Normalization serves as a regularizer, preventing divergence in norm.
\end{itemize}

The Arnoldi iteration is used to compute numerical approximations of the eigenvectors of general matrices, for instance, in ARPACK \citep{lehoucq1998arpack}.
This analogy suggests that the direction of outer embeddings stabilizes over time and aligns with a form of dominant generalized eigenvector of the propagation operator $\boldsymbol{M}$, conditioned on the core.
Therefore, the embeddings produced by SEPAL's propagation encapsulate \emph{global} structural information on the knowledge graph.

\subsection{Formal proof of \cref{prop:implicit-sgd}}
\label{app:proof}

Gradient descent on $\mathcal{E}$ updates the embedding parameters $\boldsymbol\theta_u$ of an entity $u$ with
\begin{align}\label{eq:SGD}
    \boldsymbol\theta_u^{(t+1)} = \boldsymbol\theta_u^{(t)} - \eta \frac{\partial \mathcal{E}}{\partial \boldsymbol\theta_u^{(t)}}
\end{align}
where $\eta$ is the learning rate.

The mini-batch gradient satisfies
\begin{align*}
    \frac{\partial \mathcal{E}}{\partial \boldsymbol\theta_u^{(t)}} &= - \sum_{(h,r,t)\in\mathcal{B}} \frac{\partial \left\langle \boldsymbol\theta_t^{(t)}, \phi(\boldsymbol\theta_h^{(t)}, \boldsymbol{w}_r) \right\rangle}{\partial \boldsymbol\theta_u^{(t)}}\\
    &= - \sum_{\substack{(h,r,t)\in\mathcal{B} \\ h=u}} \frac{\partial \left\langle \boldsymbol\theta_t^{(t)}, \phi(\boldsymbol\theta_u^{(t)}, \boldsymbol{w}_r) \right\rangle}{\partial \boldsymbol\theta_u^{(t)}} - \sum_{\substack{(h,r,t)\in\mathcal{B} \\ t=u}} \frac{\partial \left\langle \boldsymbol\theta_u^{(t)}, \phi(\boldsymbol\theta_h^{(t)}, \boldsymbol{w}_r) \right\rangle}{\partial \boldsymbol\theta_u^{(t)}},
\end{align*}
where $\mathcal{B}$ is the mini-batch.
Note that the previous identity is still true if the knowledge graph contains self-loops, due to DistMult's scoring function being a product. Plugging the DistMult relational operator function $\phi(\boldsymbol\theta_h, \boldsymbol{w}_r) = \boldsymbol\theta_h\odot\boldsymbol{w}_r$ in the previous equation gives
\begin{align*}
    \frac{\partial \mathcal{E}}{\partial \boldsymbol\theta_u^{(t)}} &= - \sum_{\substack{(h,r,t)\in\mathcal{B} \\ h=u}} \frac{\partial {\boldsymbol\theta_u^{(t)}}^\top(\boldsymbol{w}_r\odot\boldsymbol\theta_t^{(t)})}{\partial \boldsymbol\theta_u^{(t)}} - \sum_{\substack{(h,r,t)\in\mathcal{B} \\ t=u}} \frac{\partial {\boldsymbol\theta_h^{(t)}}^\top(\boldsymbol{w}_r\odot\boldsymbol\theta_u^{(t)})}{\partial \boldsymbol\theta_u^{(t)}}\\
    &= - \sum_{\substack{(h,r,t)\in\mathcal{B} \\ h=u}} \boldsymbol{w}_r\odot\boldsymbol\theta_t^{(t)} - \sum_{\substack{(h,r,t)\in\mathcal{B} \\ t=u}} \boldsymbol\theta_h^{(t)}\odot \boldsymbol{w}_r\\
    &= - \sum_{\substack{(h,r,t)\in\mathcal{B} \\ h=u}} \boldsymbol{w}_r\odot\boldsymbol\theta_t^{(t)} - \sum_{\substack{(h,r,t)\in\mathcal{B} \\ t=u}} \phi(\boldsymbol\theta_h^{(t)}, \boldsymbol{w}_r).
\end{align*}
Therefore, going back to \autoref{eq:SGD}, we get that
\begin{align}
    \underbrace{\boldsymbol\theta_u^{(t+1)} =  \boldsymbol\theta_u^{(t)} + \eta \sum_{\substack{(h,r,t)\in\mathcal{B} \\ t=u}} \phi(\boldsymbol\theta_h^{(t)}, \boldsymbol{w}_r)}_{\text{embedding propagation update for }\eta=\alpha\text{ and } \mathcal{B} = \mathcal{S} \cup \mathcal{C}} + \eta \sum_{\substack{(h,r,t)\in\mathcal{B} \\ h=u}} \boldsymbol{w}_r\odot\boldsymbol\theta_t^{(t)}.
\end{align}
We can see that the embedding propagation update only differs by a term that corresponds to the message passing from the tails to the heads. We did not include this term in our message-passing framework because we wanted SEPAL to adapt to any model whose scoring function has the form given by \autoref{eq:scoring_function}. In practice, the propagation direction \textit{tail} to \textit{head} is already handled by the addition of inverse relations.

Therefore, a parallel can be drawn between:  1) the outer subgraphs and mini-batches, 2) the number of propagation steps $T$ and the number of epochs, 3) the hyperparameter $\alpha$ and the learning rate.

After each gradient step, we normalize the entity embeddings to enforce the unit norm constraint. This procedure corresponds to \emph{projected gradient descent} on the sphere \citep{bertsekas1997nonlinear}, where each update is followed by a projection (via $\ell^2$ normalization) back onto the feasible set.
The energy function $\mathcal{E}$ (\autoref{eq:energy}) is composed of inner products and element-wise multiplications of smooth functions, thus it is smooth, and its gradient is Lipschitz continuous on the unit sphere.
Under these conditions, the algorithm is guaranteed to converge to a stationary point of the constrained optimization problem \citep[Proposition 2.3.2]{bertsekas1997nonlinear}. The limit points of the optimization thus satisfy the first-order optimality conditions on the sphere.

\begin{algorithm}[b!]\small%
\caption{BLOCS}%
\label{alg:blocs}%
\begin{algorithmic}%
\STATE \textbf{Input:} Graph $\mathcal{G}=(V,E)$ with nodes $V$ and edges $E$, hyperparameters $h$ and $m$
\STATE \textbf{Output:} List of overlapping connected subgraphs
\STATE $\mathbb{S} \gets \emptyset$ \COMMENT{list of subgraphs}
\STATE $U \gets V$ \COMMENT{set of unassigned nodes}

\STATE \textbf{Step 1: Create subgraphs from super-spreaders' neighbors}
\FOR{each node $v \in V$}
    \IF{$deg(v) > 0.2 \, m$}
        \STATE $\mathbb{S}, U \gets \texttt{SplitNeighbors}(v, \, \text{max\_size}=0.2 \, m)$ %
    \ENDIF
\ENDFOR

\STATE \textbf{Step 2: Assign nodes to subgraphs by diffusion}
\WHILE{$|U| > (1-h)|V|$}
    \STATE $k \gets 0$
    \quad ; \quad
    $\mathcal{S}_0 \gets \{\argmax_{v\in U} deg(v)\}$ \COMMENT{start with unassigned node $v$ with highest degree}
    \WHILE{$|\mathcal{S}_k| < 0.8 \, m$}
        \STATE $\mathcal{S}_{k+1} \gets \texttt{Diffuse}(\mathcal{S}_k)$
        \quad ; \quad
        $k \gets k+1$
    \ENDWHILE
    \STATE Append $\mathcal{S}_{k-1}$ to $\mathbb{S}$, and update $U$ \COMMENT{$\mathcal{S}_{k-1}$ is the last subgraph smaller than $0.8 \, m$}
\ENDWHILE

\STATE \textbf{Step 3: Merge small overlapping subgraphs}
\STATE $\mathbb{S}, U \gets \texttt{MergeSmallSubgraphs}(\mathbb{S}, \, \text{min\_size}=m/2)$

\STATE \textbf{Step 4: Dilation and diffusion until all entities are assigned}
\STATE $p \gets 0$\COMMENT{create new subgraphs by diffusion every 5 rounds, to tackle long chains}
\WHILE{$|U|>0$}
    \IF{5 divides $p$ and $p>0$}
        \STATE $i \gets 0$
        \REPEAT
            \STATE $k \gets 0$
            \quad ; \quad
            $\mathcal{S}_0 \gets \{\argmax_{v\in U} deg(v)\}$
            \WHILE{$|\mathcal{S}_k| < 0.8 \, m$}
                \STATE $\mathcal{S}_{k+1} \gets \texttt{Diffuse}(\mathcal{S}_k)$
                \quad ; \quad
                $k \gets k+1$
            \ENDWHILE
            \STATE Append $\mathcal{S}_{k-1}$ to $\mathbb{S}$, and update $U$
            \quad ; \quad
            $i \gets i+1$
        \UNTIL{$i=10$}
    \ENDIF
    \STATE $\mathbb{S} \gets \texttt{Dilate}(\mathbb{S})$ \quad ; \quad $p \gets p+1$
\ENDWHILE

\STATE \textbf{Step 5: Merge small overlapping subgraphs again}
\STATE $\mathbb{S}, U \gets \texttt{SystematicMerge}(\mathbb{S}, \, \text{min\_size}=0.4 \, m)$

\STATE \textbf{Step 6: Split subgraphs larger than $m$}
\STATE $\mathbb{S}, U \gets \texttt{SplitLargeSubgraphs}(\mathbb{S}, \, \text{max\_size}=m)$
\STATE $\mathbb{S}, U \gets \texttt{MergeSmallSubgraphs}(\mathbb{S}, \, \text{min\_size}=m/2)$

\STATE \textbf{Return:} $\mathbb{S}$, the set of overlapping subgraphs covering $\mathcal{G}$%
\end{algorithmic}%
\end{algorithm}%

\section{Presentation and analysis of BLOCS}
\label{app:blocs}

\subsection{Prior work on graph partitioning}
\label{app:partitioning_prior_work}

Scaling up computation on graph, for graph embedding or more generally, often relies on breaking down graphs in subgraphs.
METIS \citep{karypis1997metis}, a greedy node-merging algorithm, is a popular solution. A variety of algorithms have also been developed to detect ``communities'', groups of nodes more connected together, often with applications on social networks: \textit{Spectral Clustering} (SC) \citep{shi2000normalized}, the \textit{Leading Eigenvector} (LE) method \citep{newman2006finding}, the \textit{Label Propagation Algorithm} (LPA) \citep{raghavan2007near}, the Louvain method \citep{blondel2008fast}, the \textit{Infomap} method \citep{rosvall2008maps}, and the \textit{Leiden} method \citep{traag2019louvain} which guarantees connected communities. LDG \citep{stanton2012streaming} and FENNEL \citep{tsourakakis2014fennel} are streaming algorithms for very large graphs.
Some algorithms have also been specifically tailored for power-law graphs: DBH \citep{xie2014distributed} leverages the skewed degree distributions to reduce the communication costs, HDRF \citep{petroni2015hdrf} is a streaming partitioning method that replicates high-degree nodes first, and Ginger \citep{chen2019powerlyra} is a hybrid-cut algorithm that uses heuristics for more efficient partitioning on skewed graphs.

\subsection{Detailed algorithm and pseudocode}

\begin{figure}[tb]
    \centering%
    \includegraphics[width=\textwidth]{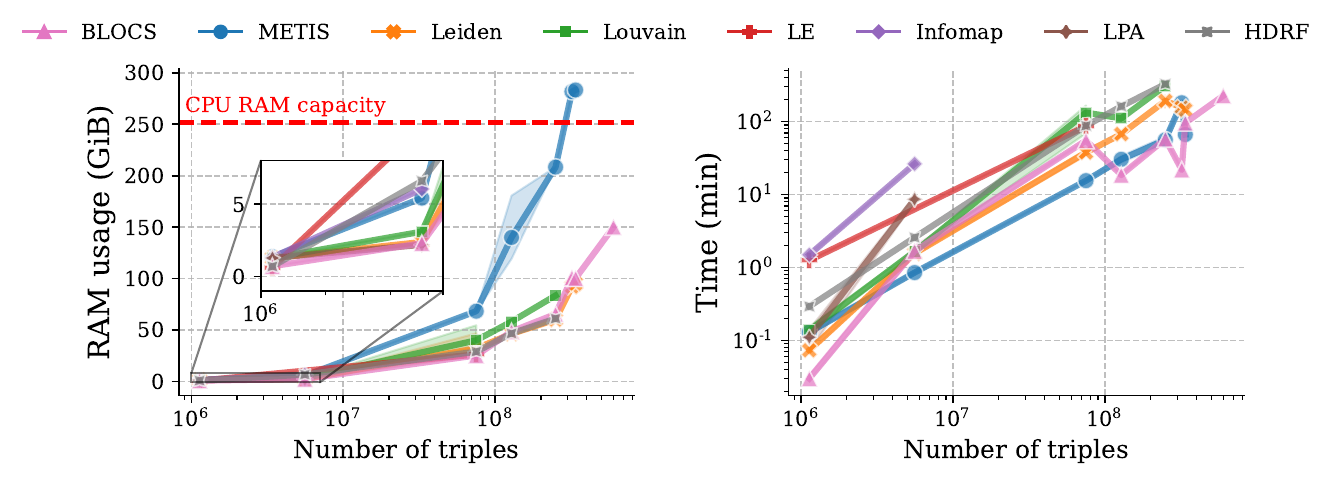}%
    \caption{\textbf{Scalability of partitioning methods.}
    Memory usage and time for 8 knowledge graphs of various sizes.
    The ``CPU RAM capacity'' dashed line represents the CPU RAM of the machine we used to run SEPAL (we used a different machine with more RAM to run this benchmark, see \cref{app:setup}). Partitioning methods going beyond this limit thus cannot be combined with SEPAL on our hardware.
    BLOCS is the only method to scale up to WikiKG90Mv2, a knowledge graph with 601M triples. Leiden and METIS both caused memory errors on WikiKG90Mv2, while HDRF was too long for graphs larger than YAGO4 (our time limit was set to 333 minutes).}%
    \label{fig:partition-scalability}%
\end{figure}

\begin{figure}[tb]
    \centering%
    \includegraphics[width=\textwidth]{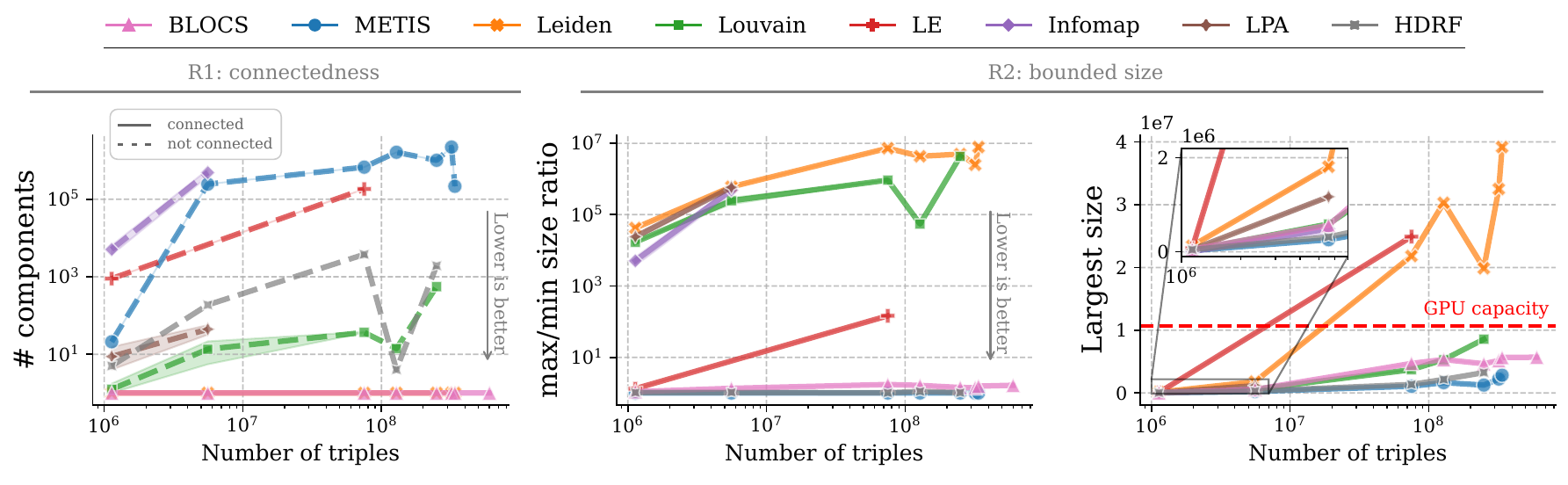}%
    \caption{\textbf{Quality of partitioning methods.}
    Maximum number of connected components in one partition, ratio between the largest and smallest partition sizes, and size (number of entities) of the largest partition produced for knowledge graphs of various sizes.
    The ``GPU capacity'' dashed line represents the typical number of entities that can be loaded onto the GPU before causing a memory error. Methods producing partitions larger than this cannot be combined with SEPAL, since the partition's embeddings must fit in GPU memory.
    BLOCS and Leiden are the only methods to  consistently return connected partitions (requirement R1).
    BLOCS, METIS, and HDRF are the only methods to control the partition size (requirement R2).}%
    \label{fig:partition-quality}%
\end{figure}

\cref{alg:blocs} describes BLOCS pseudocode, and \autoref{fig:blocs_schema} illustrates its base mechanisms. Below, we provide more details on subparts of the algorithm.

\begin{figure}[t]
    \centering
    \includegraphics[width=\linewidth]{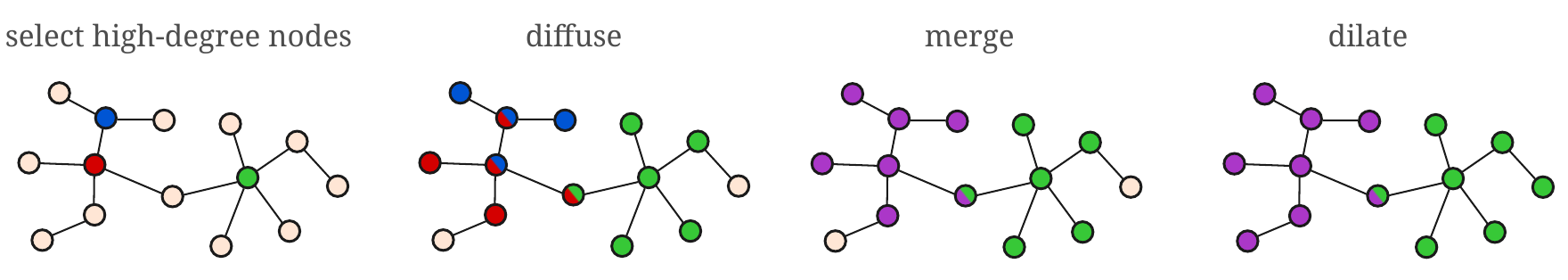}
    \caption{\textbf{Base mechanisms of the BLOCS algorithm.} BLOCS grows subgraphs from high-degree unassigned nodes using three main operations: \emph{diffuse} adds all neighbors to the current subgraph; \emph{merge} combines two overlapping subgraphs; and \emph{dilate} adds only unassigned neighbors.}
    \label{fig:blocs_schema}
\end{figure}

The function \texttt{SplitNeighbors} is designed to manage high-degree nodes by distributing their neighboring entities into smaller subgraphs. For each node whose degree exceeds a certain threshold, it groups its neighbors into multiple subgraphs smaller than this threshold. These new subgraphs also include the original high-degree node to maintain connectedness. By assigning the neighbors of the very high-degree nodes first, BLOCS ensures that subgraphs grow more progressively in the subsequent diffusion step.

The function \texttt{MergeSmallSubgraphs} takes a list of subgraphs and a minimum size threshold as input. It identifies subgraphs that are smaller than this given minimum size and merges them into larger subgraphs if they share nodes and if the size of their union remains below the maximum size $m$.

The function \texttt{SystematicMerge} is very similar to \texttt{MergeSmallSubgraphs}, but it does not check that the resulting subgraphs are smaller than $m$. Its objective is to eliminate all the subgraphs whose size is smaller than a given threshold (set to $0.4\,m$ in \cref{alg:blocs}). The subgraphs produced that are larger than $m$ are then handled by the function \texttt{SplitLargeSubgraphs}.

The function \texttt{SplitLargeSubgraphs} processes the list of subgraphs to break down overly large subgraphs while preserving connectivity. The function iterates over the subgraphs having more than $m$ nodes, subtracts the core, and computes the connected components. Then, it creates new subgraphs by grouping these connected components until the size limit $m$ is reached. As a result, outer subgraphs can be disconnected at this stage, but merging them with the core ensures their connectedness during embedding propagation.

\subsection{Benchmarking BLOCS against partitioning methods}

First, we compare BLOCS to other graph partitioning, clustering, and community detection methods. \autoref{fig:partition-scalability} and \autoref{fig:partition-quality} report empirical evaluation on eight knowledge graphs.
BLOCS, METIS, and Leiden are the only approaches that scale to the largest knowledge graphs. Others fail due to excessive runtimes --our limit was set to $2\cdot10^4$ seconds. Compared to METIS, BLOCS is more efficient in terms of RAM usage while having similar computation times (\autoref{fig:partition-scalability}).
Experimental results also show that classic partitioning methods fail to meet the connectedness and size requirements.
Indeed, knowledge graphs are prone to yield disconnected partitions due to their scale-free nature: they contain very high-degree nodes. Such a node is hard to allocate to a single subgraph, and subgraphs without it often explode into multiple connected components. Our choice of overlapping subgraphs avoids this problem.

\paragraph{Classic methods do not meet the requirements of SEPAL}

Here, we provide qualitative observations on the partitions produced by the different methods. We explain why they fail to meet our specific requirements.

\begin{description}[itemsep=1pt, parsep=1pt, topsep=0pt]
\item[METIS] is based on a multilevel recursive bisection approach, which coarsens the graph, partitions it, and then refines the partitions. It produces partitions with the same sizes; however, due to the graph structure, they often explode into multiple connected components, which is detrimental to the embedding propagation (see \cref{app:metis}).
\item[Louvain] is based on modularity optimization. It outputs highly imbalanced communities, often with one community containing almost all the nodes and a few very small communities. This imbalance is incompatible with our approach, which requires strict control over the size of the subgraphs so that their embedding fits in GPU memory. Moreover, some of the communities are disconnected. On the two largest graphs, YAGO4 + taxonomy and Freebase, Louvain exceeds the time limit ($2\cdot10^4$ seconds).
\item[Leiden] modifies Louvain to guarantee connected communities and more stable outputs. It has very good scaling capabilities (\autoref{fig:partition-scalability}) but shares with Louvain the issue of producing highly-imbalanced communities.
\item[LE] is a recursive algorithm that splits nodes based on the sign of their corresponding coefficient in the leading eigenvector of the modularity matrix. If these signs are all the same, the algorithm does not split the network further. Experimentally, LE returns only one partition (containing all the nodes) for YAGO3, YAGO4, YAGO4 + taxonomy, and Freebase. For YAGO4.5 + taxonomy, it hits our pre-set time limit ($2\cdot10^4$ seconds). Therefore, we only report its performance for Mini YAGO3 and YAGO4.5, for which it outputs 2 and 4 partitions, respectively. It is important to note that the more communities it returns, the longer it takes to run because the algorithm proceeds recursively.
\item[Infomap] uses random walks and information theory to group nodes into communities. Experimentally, it produces a lot of small communities with no connectedness guarantee. Additionally, it is too slow to be used on large graphs.
\item[LPA] propagates labels across the network iteratively, allowing densely connected nodes to form communities. Similarly to Louvain and Leiden, the downside is that it does not control the size of the detected communities. It is also too slow to run on the largest graphs.
\item[SC] uses the smallest eigenvectors of the graph Laplacian to transform the graph into a low-dimensional space and then applies k-means to group nodes together. However, the expensive eigenvector computation is a bottleneck that does not allow this approach to be used on huge graphs.
\item[HDRF] is a streaming algorithm that produces balanced edge partitions (vertex cut) and minimizes the replication factor. However, it produces disconnected partitions, and it is too slow to run on the largest graphs.
\end{description}

\begin{figure}[b]
    \centering%
    \includegraphics[width=0.8\textwidth]{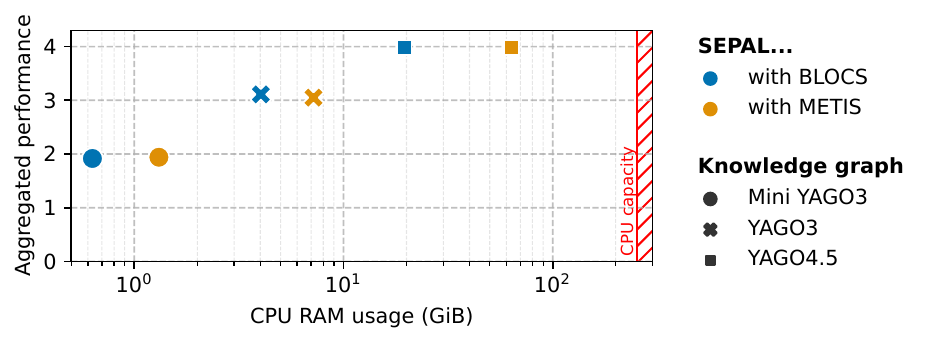}%
    \caption{\textbf{Ablation study: replacing BLOCS with METIS.} Normalized R2 scores (same as \autoref{fig:rw-hbar}) aggregated across evaluation datasets (movie revenues, US accidents, US elections, housing prices) for SEPAL with BLOCS and METIS are plotted against CPU RAM usage. BLOCS necessitates significantly less memory than METIS. We were not able to run SEPAL + METIS on knowledge graphs larger than YAGO4.5, hitting CPU RAM limits during the partitioning stage.}%
    \label{fig:metis-vs-blocs}%
\end{figure}

Therefore, none of the above methods readily produces subgraphs suitable for SEPAL. Indeed \autoref{fig:partition-quality} shows that BLOCS is the only method that returns subgraphs that are both connected and bounded in size, while being competitive in terms of scalability (\autoref{fig:partition-scalability}). 

\paragraph{BLOCS cannot be replaced with METIS}
\label{app:metis}

To demonstrate the benefits of BLOCS over existing methods, we try to replace BLOCS with METIS in our framework. The results are presented in \autoref{fig:metis-vs-blocs}.

Two important points differentiate these methods:
\begin{enumerate}
    \item Contrary to BLOCS, METIS outputs disconnected partitions (see \autoref{fig:partition-quality}). Given the structure of SEPAL, this results in zero-embeddings for entities not belonging to the core connected component at propagation time. Interestingly, the presence of zero-embeddings affects downstream scores very little, likely because most downstream entities belong to the core connected component and are thus not impacted by this.
    \item METIS does not scale as well as BLOCS in terms of CPU memory. On our hardware, SEPAL + METIS could not scale to graphs larger than YAGO4.5 (32M entities), and therefore, BLOCS is indispensable for very large knowledge graphs.
\end{enumerate}

\subsection{Effect of BLOCS' stopping diffusion threshold}

\begin{figure}[t]
    \centering
    \includegraphics[width=\textwidth]{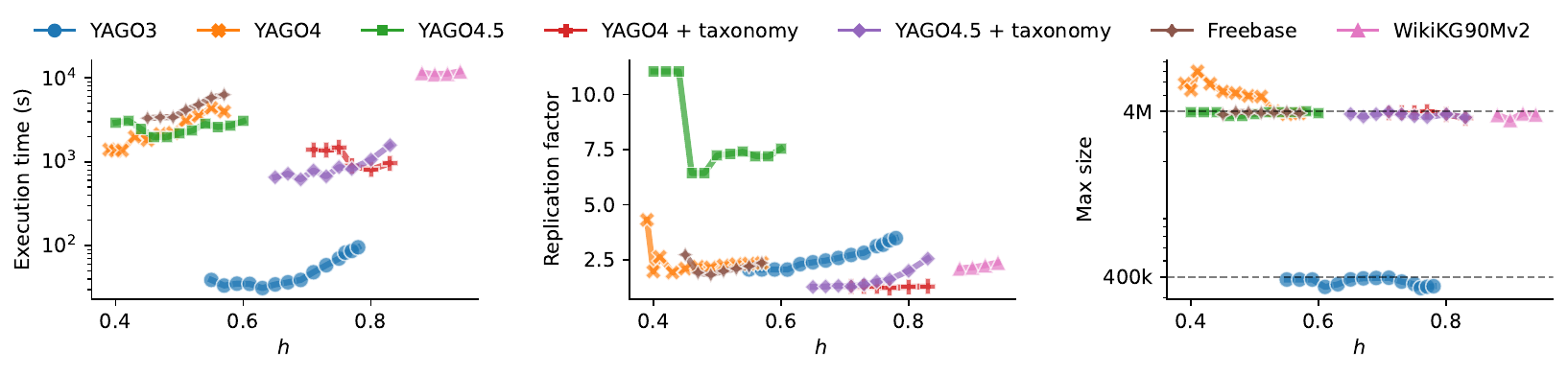}%
    \caption{\textbf{Sensitivity analysis to parameter $h$.} Effect of varying $h$ on BLOCS' execution time, replication factor (average number of outer subgraphs containing a given entity), and maximum subgraph size. A lower bound of the domain of $h$ values to explore for a dataset is given by the proportion of nodes that are neighbors to super-spreaders. Indeed, as BLOCS' first step is to assign super-spreaders neighbors, if $h$ is smaller than this value, BLOCS completely skips the diffusion phase. For YAGO4.5 and YAGO4, this results in sharp variations of the replication factor or the maximum subgraph size, respectively. The maximum size parameter $m$ was set to 400k for YAGO3, and 4M for others.}%
    \label{fig:blocs-hpp}%
\end{figure}

\begin{figure}[b]
    \centering
    \includegraphics[width=\linewidth]{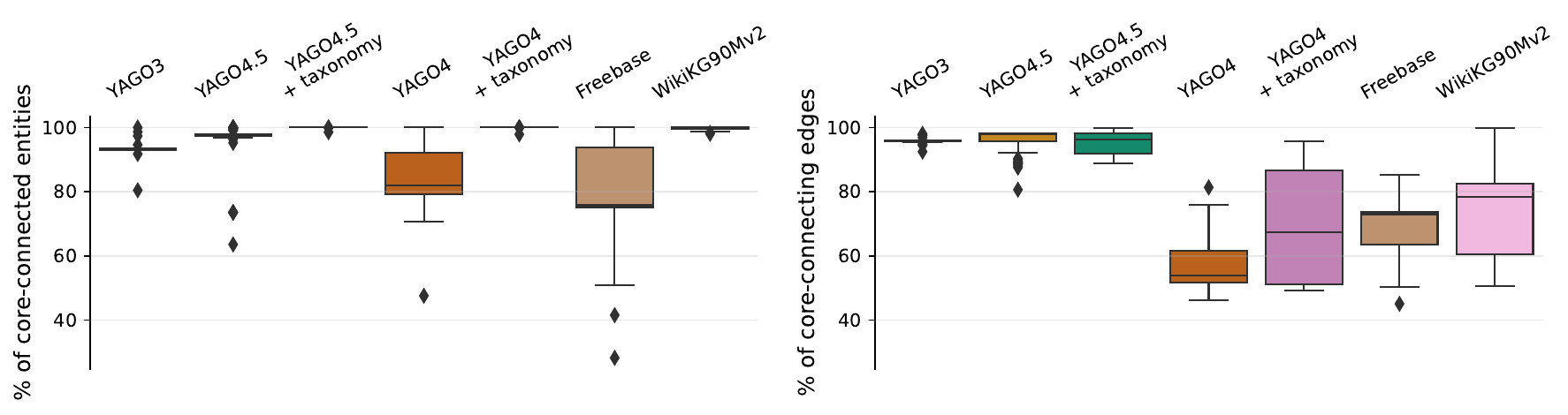}
    \caption{\textbf{Outer subgraphs are well connected to the core.} The left plot gives the percentage of outer entities that are directly connected to the core subgraph. The right plot gives the percentage of edges that come from the core subgraph among all the edges coming to a given outer subgraph, showing the amount of information transferred from the core to outer subgraphs relative to outer-outer communication. %
    }
    \label{fig:blocs-connectivity}
\end{figure}

In the BLOCS algorithm, the hyperparameter $h$ controls the moment of the switch from diffusion to dilation. For $h\in(0,1)$, BLOCS stops diffusion once the proportion of entities of the graph assigned to a subgraph is greater than or equal to $h$.

\autoref{fig:blocs-hpp} shows that increasing $h$ tends to increase the execution time and the overlap between subgraphs. Higher overlaps can be preferable to enable the information to travel between outer subgraphs during embedding propagation. However, a high overlap also incurs additional communication costs because the embeddings are moved several times from CPU to GPU, increasing SEPAL's overall execution time.

\subsection{How distant from the core are the outer entities?}

\autoref{fig:blocs-connectivity} shows that outer entities are in average very close to the core subgraph. This is due to the fact that the core contains the most central entities, and to the scale-free nature of knowledge graphs.

\section{Further analysis of SEPAL}

\subsection{Execution time breakdown}
\label{app:time}

Here, we present the contribution of each part of the pipeline to the total execution time. Specifically, we break down our method into four parts:
\begin{enumerate}[itemsep=0pt, topsep=0pt, leftmargin=14pt]
    \item core subgraph extraction;
    \item outer subgraphs generation (BLOCS);
    \item core embedding;
    \item embedding propagation.
\end{enumerate}

\begin{figure}[t]
    \centering
    \includegraphics[width=\linewidth]{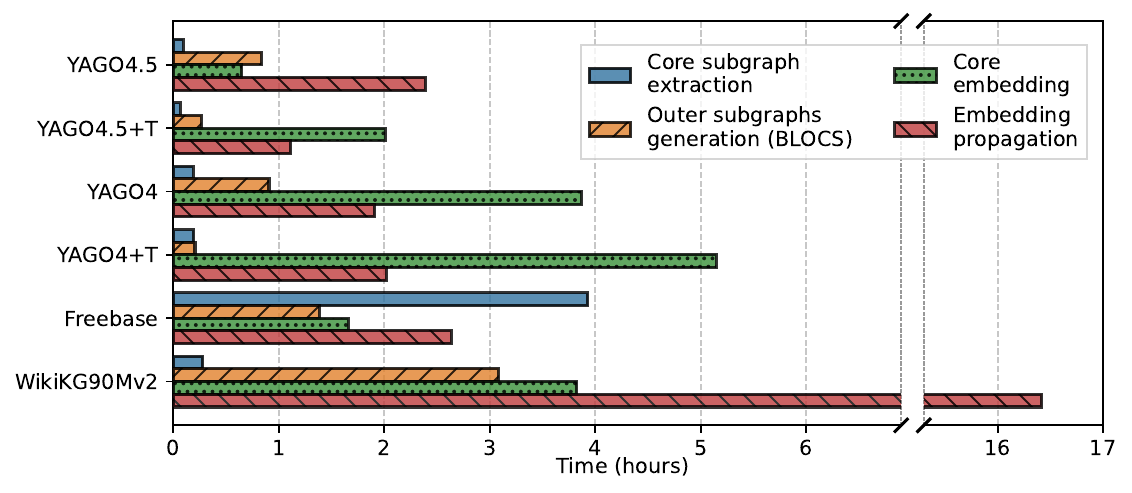}    
    \caption{\textbf{SEPAL's execution time breakdown.} Execution time of SEPAL's different components, for the best-performing configurations of SEPAL on downstream tasks (see \autoref{tab:hyperparams} for hyperparameters values).}%
    \label{fig:time-breakdown}%
\end{figure}

\autoref{fig:time-breakdown} shows the execution time of the different components of SEPAL. It includes six large-scale knowledge graphs, for which the execution times have the same order of magnitude. The results reveal that most of the execution time is due to the core embedding and embedding propagation phases, while the core extraction time is negligible.

Four key factors influence SEPAL's execution time during the four main steps of the pipeline:
\begin{enumerate}
    \item \textbf{The core selection strategy}: the degree-based selection is faster than the hybrid selection. For the hybrid selection, the factor that influences the speed the most is the number of distinct relations.
    \item \textbf{The diameter of the knowledge graph}: graphs with large diameters call for more dilation steps during BLOCS' subgraph generation, and dilation is more costly than diffusion because it requires checking node assignments. This explains why adding the taxonomies to YAGO4 and YAGO4.5 drastically reduces the time required to run BLOCS, as shown on \autoref{fig:time-breakdown}.
    \item \textbf{The core subgraph size}: the more triples in the core subgraph, the longer the core embedding. This explains the wide disparities between the core embedding times on \autoref{fig:time-breakdown}, despite all the core subgraphs having roughly the same number of entities: YAGO4 core subgraph is more dense (33M triples), compared to YAGO4.5 (7M triples), for instance. The core embedding time also depends on hyperparameters such as the number of training epochs.
    \item \textbf{The total number $N$ of entities in the graph}: this number determines the size of the embedding matrix. The communication cost of moving embedding matrices from CPU to GPU, and vice versa, accounts for most of the propagation time, and increases with $N$. It also increases with the amount of overlap between the outer subgraphs produced by BLOCS, explaining the differences in propagation time between YAGO4.5 and YAGO4.5 + taxonomy for instance.
\end{enumerate}

The number of propagation steps $T$ has little impact on the embedding propagation time. The reason for this is that much of this time stems from the communication cost of loading the embeddings onto the GPU, and not from performing the propagation itself.

\subsection{SEPAL's hyperparameters}
\label{app:hpp}
\subsubsection{List of SEPAL's hyperparameters}

Here, we list the hyperparameters for SEPAL, and discuss how they can be set. \autoref{tab:hyperparams} gives the values of those that depend on the dataset.
\begin{itemize}
    \item \textbf{Proportion of core nodes $\eta_n$}: the idea is to select it large enough to ensure good core embeddings, but not too large so that core embeddings fit in the GPU memory. \autoref{fig:core-prop} shows the experimental effect of varying this parameter;
    \item \textbf{Proportion of core edges $\eta_e$}: increasing it at the expense of $\eta_n$ (to keep the core size within GPU memory limits) improves the relational coverage, but reduces the density of the core. Sparser core subgraphs tend to deteriorate the quality of SEPAL's embeddings for feature enrichment (\autoref{fig:core-selection}). However a good relational coverage is essential for better link-prediction performance;
    \item \textbf{Stopping diffusion threshold $h$}: it depends on the graph structure, and tuning is done empirically by monitoring the proportion of unassigned entities during the BLOCS algorithm: $h$ is chosen equal to the proportion of assigned entities at which BLOCS starts to stagnate during its diffusion regime. In practice, \autoref{fig:blocs-hpp} and \autoref{fig:hpp_h} shows that as long as $h$ is chosen greater than the proportion of entities that are neighbors of super-spreaders, the algorithm is not too sensitive to this parameter (otherwise, it skips the diffusion phase, which can be detrimental);
    \item \textbf{Number of propagation steps $T$}: it is chosen high enough to ensure reaching the remote entities (otherwise, they will have zeros as embeddings). Taking $T$ equal to the graph's diameter guarantees that this condition is fulfilled. However, for graphs with long chains, this may slow down SEPAL too much. In practice, setting $T$ at 2–3 times the Mean Shortest Path Length (MSPL) usually embeds most entities effectively;
    \item \textbf{Propagation learning rate $\alpha$}: it controls the proportion of self-information relatively to neighbor-information during propagation updates. For the DistMult model, this parameter has no effect during the first propagation step, when an entity is reached for the first time because outer embeddings are initialized with zeros and the neighbors' message is normalized. In practice, the embedding of an outer entity can reach in one step a position very close to its fix point, and thus this parameter does not have much effect (\autoref{fig:hpp_alpha}). For our experiments we typically use $\alpha=1$;
    \item \textbf{Subgraph maximum size $m$}: the idea is to use the largest value for which it is possible to fit the subgraph's embeddings in the GPU memory. We use $4\cdot 10^4$ for Mini YAGO3, $4\cdot 10^5$ for YAGO3, $2\cdot 10^6$ for WikiKG90Mv2, and $4\cdot 10^6$ for the other knowledge graphs;
    \item \textbf{Embedding dimension $d$}: we use $d=100$ (except for complex embedings, where $d=50$ to keep the same number of parameters);
    \item \textbf{Number of epochs for core training} $n_{\text{epoch}}$: see \autoref{tab:hyperparams};
    \item \textbf{Batch size for core training $b$}: see \autoref{tab:hyperparams};
    \item \textbf{Optimizer for core training}: we use the Adam optimizer with learning rate $\text{lr} = 1\cdot 10^{-3}$;
    \item \textbf{Number $p$ of negative samples per positive for core training}: we use $p=100$ (\autoref{tab:hyperparams}).
\end{itemize}

\begin{table}[t]
\begin{minipage}{.28\linewidth}
\caption{\textbf{Hyperparameter search space} for SEPAL, and best values (in bold) for each knowledge graph on downstream tasks. The best values are those that gave the best average performance on the 4 validation tables and that were used to get the results in \autoref{fig:rw-pareto} and \autoref{fig:wkdb-pareto}.}
\label{tab:hyperparams}
\end{minipage}%
\hfill%
\begin{minipage}{.68\linewidth}
\begin{center}
\begin{small}
    \begin{tabular}{llr}
    \toprule
    \textbf{Dataset} & \textbf{Parameter} & \textbf{Grid (best in bold)} \\
    \midrule
    
    \multirow{6}{*}{Mini YAGO3}
      & $\eta_n$/$\eta_e$ & 0.05/\textemdash, 0.2/\textemdash, 0.025/0.005, \textbf{0.3/0.05} \\
      & $h$ & 0.6, 0.65, 0.7, 0.75, \textbf{0.8} \\
      & $T$ & 2, \textbf{5}, 10, 15, 25 \\
      & $n_{\text{epoch}}$ & 12, 25, 50, 60, \textbf{75} \\
      & $b$ & \textbf{512} \\
      & $p$ & 1, \textbf{100}, 1000 \\
    
    \midrule
    \multirow{6}{*}{YAGO3}
      & $\eta_n$/$\eta_e$ & 0.05/\textemdash, \textbf{0.1/\textemdash}, 0.025/0.015, 0.3/0.05 \\
      & $h$ & 0.55, 0.6, 0.65, 0.7, 0.75, \textbf{0.77} \\
      & $T$ & 2, 5, 10, \textbf{15}, 25 \\
      & $n_{\text{epoch}}$ & 16, \textbf{18}, 25, 45, 50, 75 \\
      & $b$ & \textbf{2048}, 4096, 8192 \\
      & $p$ & 1, \textbf{100}, 1000 \\
    
    \midrule
    \multirow{6}{*}{YAGO4.5}
      & $\eta_n$/$\eta_e$ & 0.03/\textemdash, \textbf{0.015/0.01}, 0.05/0.03 \\
      & $h$ & 0.4, 0.5, \textbf{0.6} \\
      & $T$ & 2, 5, 10, 15, 25, \textbf{50} \\
      & $n_{\text{epoch}}$ & 16, \textbf{24}, 75 \\
      & $b$ & \textbf{8192}, 16384 \\
      & $p$ & 1, \textbf{100} \\
    
    \midrule
    \multirow{6}{*}{YAGO4.5+T}
      & $\eta_n$/$\eta_e$ & \textbf{0.03/\textemdash}, 0.015/0.005, 0.04/0.015 \\
      & $h$ & \textbf{0.8} \\
      & $T$ & 2, 5, 10, 15, \textbf{20}, 25 \\
      & $n_{\text{epoch}}$ & \textbf{32}, 75 \\
      & $b$ & \textbf{8192} \\
      & $p$ & \textbf{100} \\
    
    \midrule
    \multirow{6}{*}{YAGO4}
      & $\eta_n$/$\eta_e$ & 0.03/\textemdash, \textbf{0.015/0.005}, 0.012/0.025 \\
      & $h$ & \textbf{0.55} \\
      & $T$ & 2, 5, 10, 15, \textbf{20}, 25 \\
      & $n_{\text{epoch}}$ & 4, \textbf{28}, 75 \\
      & $b$ & \textbf{8192}, 65536 \\
      & $p$ & 1, \textbf{100} \\
    
    \midrule
    \multirow{6}{*}{YAGO4+T}
      & $\eta_n$/$\eta_e$ & 0.02/\textemdash, \textbf{0.01/0.005} \\
      & $h$ & 0.45, \textbf{0.8} \\
      & $T$ & 10, \textbf{20} \\
      & $n_{\text{epoch}}$ & \textbf{32}, 75 \\
      & $b$ & \textbf{8192}, 65536 \\
      & $p$ & 1, \textbf{100} \\
    
    \midrule
    \multirow{6}{*}{Freebase}
      & $\eta_n$/$\eta_e$ & 0.02/\textemdash, \textbf{0.01/0.005} \\
      & $h$ & \textbf{0.55} \\
      & $T$ & \textbf{15} \\
      & $n_{\text{epoch}}$ & \textbf{24} \\
      & $b$ & \textbf{8192} \\
      & $p$ & \textbf{100} \\
    
    \midrule
    \multirow{6}{*}{WikiKG90Mv2}
      & $\eta_n$/$\eta_e$ & \textbf{0.02/\textemdash} \\
      & $h$ & \textbf{0.92} \\
      & $T$ & \textbf{10} \\
      & $n_{\text{epoch}}$ & \textbf{12} \\
      & $b$ & \textbf{8192} \\
      & $p$ & \textbf{100} \\
    
    \bottomrule
    \end{tabular}
\end{small}
\end{center}
\end{minipage}
\end{table}

\begin{figure}[tb]
    \centering%
    \includegraphics[width=0.7\textwidth]{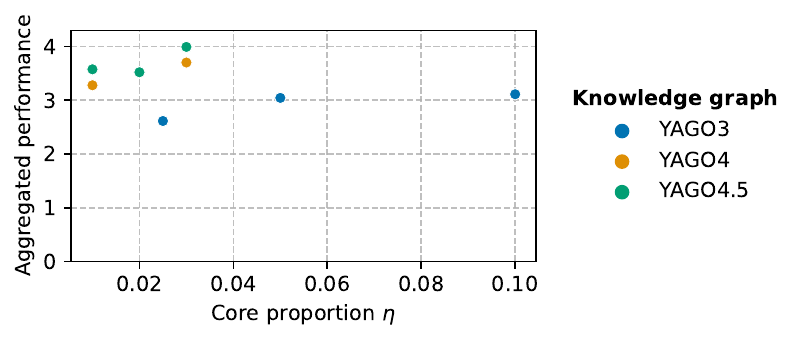}%
    \caption{Effect of core proportion $\eta_n$ on SEPAL's performance, with the degree-based core selection strategy.}%
    \label{fig:core-prop}%
\end{figure}

\begin{figure}[tb]
    \centering%
    \includegraphics[width=0.7\textwidth]{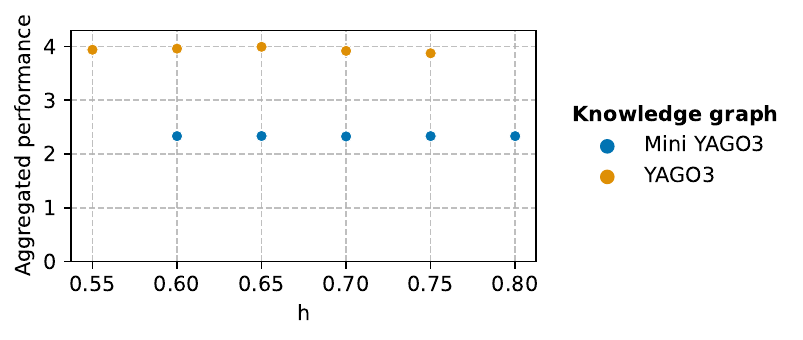}%
    \caption{Effect of stopping diffusion threshold $h$ on SEPAL's performance.}%
    \label{fig:hpp_h}%
\end{figure}

\begin{figure}[tb]
    \centering%
    \includegraphics[width=0.8\textwidth]{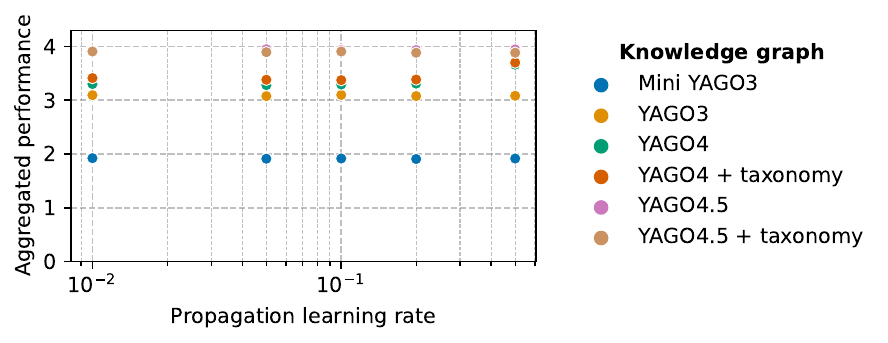}%
    \caption{Effect of propagation learning rate $\alpha$ on SEPAL's downstream performance.}%
    \label{fig:hpp_alpha}%
\end{figure}

\begin{figure}[tb]
    \centering%
    \includegraphics[width=0.7\textwidth]{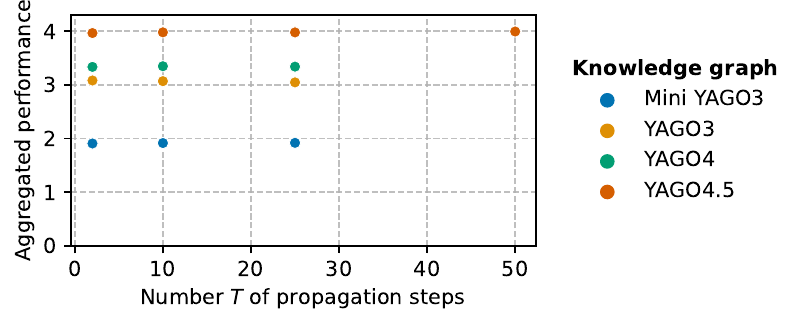}%
    \caption{Effect of number $T$ of propagation steps on SEPAL's downstream performance.}%
    \label{fig:hpp_n_prop}%
\end{figure}

\subsubsection{Experimental study of hyperparameter effect}\label{app:core-selection}

\autoref{fig:core-prop}, \autoref{fig:hpp_h}, \autoref{fig:hpp_alpha} and \autoref{fig:hpp_n_prop} investigate the sensitivity of SEPAL to different hyperparameters. The hyperparameter that seems to impact the most SEPAL's performance is the core proportion $\eta_n$.
Indeed, \autoref{fig:core-prop} shows that increasing $\eta_n$ tends to improve embedding quality for downstream tasks. However, the effect seems to be plateauing relatively fast for YAGO3 (not much improvement between $\eta_n=5\%$ and $\eta_n=10\%$). For other datasets (YAGO4.5, YAGO4), it is not possible to explore larger values of $\eta_n$ because the core subgraph would not fit in the GPU memory. Moreover, decreasing $\eta_n$ makes SEPAL run faster, as the core embedding phase accounts for a substantial share of the total execution time (\autoref{fig:time-breakdown}). There is, therefore, a trade-off between time and performance.

\paragraph{Core selection strategy}

\begin{table}[tb]
    \centering
    \small
    \caption{\textbf{Effect of core selection strategies.} Number of entities and triples inside the core subgraph and proportion of the full graph they represent (in parentheses). $\eta_{n}$ and $\eta_{e}$ are the hyperparameters for nodes and edges, respectively. Column \textit{\#Rel} gives the number of relation types present in the core compared to the total number of relation types in the knowledge graph. We highlight in \textcolor{red}{red} the cases where some relations are missing. Column \textit{Time} gives the measured computation time for core selection.}
    \begin{tabular}{llllrrrr}
        \toprule
          & \textbf{Strategy} & $\boldsymbol{\eta_{n}}$ & $\boldsymbol{\eta_{e}}$ & \textbf{\#Rel} & \textbf{\#Entities} & \textbf{\#Triples} & \textbf{Time} \\
        \midrule
         &  Degree & 5\% & - & 37/37 & 126k (4.9\%) & 1.0M (18.5\%) & 17s \\
        YAGO3 &  Hybrid & 2.5\% & 1.5\% & 37/37 & 121k (4.7\%) & 733k (13.1\%) & 20s \\
        \midrule
         &  Degree & 3\% & - & 62/62 & 932k (2.9\%) & 7.2M (9.6\%) & 2min \\
        YAGO4.5 &  Hybrid & 1.5\% & 1\% & 62/62 & 1.1M (3.3\%) & 5.7M (7.5\%) & 6min \\
        \midrule
         & Degree & 3\% & - & \textcolor{red}{61/76} & 1.1M (3.0\%) & 33M (13.4\%) & 8min \\
        YAGO4 &  Hybrid & 1.5\% & 0.5\% & 76/76 & 1.4M (3.8\%) & 28M (11.1\%) & 11min \\
        \midrule
         & Degree & 3\% & - & 64/64 & 1.5M (3.0\%) & 13M (9.9\%) & 4min \\
        YAGO4.5+T & Hybrid & 1.5\% & 0.5\% & 64/64 & 1.2M (2.5\%) & 8.3M (6.5\%) & 5min \\
        \midrule
         & Degree & 2\% & - & \textcolor{red}{64/78} & 1.3M (2.0\%) & 41M (12.8\%) & 9min \\
        YAGO4+T &  Hybrid & 1\% & 0.5\% & 78/78 & 1.5M (2.3\%) & 32M (10.1\%) & 12min \\
        \midrule
         & Degree & 2\% & - & \textcolor{red}{5,363/14,665} & 1.7M (2.0\%) & 15M (4.4\%) & 9min \\
        Freebase &  Hybrid & 1\% &  0.5\% & 14,665/14,665 & 1.9M (2.3\%) & 14M (4.1\%) & 4h \\
        \midrule
         & Degree & 2\% & - & \textcolor{red}{886/1,387} & 1.8M (2.0\%) & 62M (10.4\%) & 17min \\
        WikiKG90Mv2 &  Hybrid & 0.4\% & 0.7\% & 1,387/1,387 & 2.0M (2.2\%) & 37M (6.2\%) & 48min \\
        \bottomrule
    \end{tabular}\label{tab:core-selection}
\end{table}

\textit{Degree-based selection:} The simple degree-based core selection strategy is convenient for two reasons:
\begin{enumerate}
    \item Degree is inexpensive to compute, ensuring the core extraction phase to be fast (see \autoref{fig:time-breakdown});
    \item It yields very dense core subgraphs. Indeed, while they contain $\eta_n$\% of the entities of the full graph, they gather around $4\eta_n$\% of all the triples (\autoref{tab:core-selection}). This allows the training on the core to process a substantial portion of the knowledge-graph triples, resulting in richer representations.
\end{enumerate}

\textit{Hybrid selection:}
However, a problem with the degree-based selection is that some relation types may not be included in the core subgraph.
To deal with this issue, SEPAL also proposes a more complex \textit{hybrid} method for selecting the core subgraph, that incorporates the relations.
It proceeds in four main steps:
\begin{enumerate}
    \item \textbf{Degree selection}: Sample the nodes with the top $\eta_n$ degrees.
    \item \textbf{Relation selection}: Sample the edges with the top $\eta_e$ degrees (sum of degrees of head and tail) for each relation type, and keep the corresponding entities.
    \item \textbf{Merge}: Take the union of these two sets of entities.
    \item \textbf{Reconnect}: If the induced subgraph has several connected components, add entities to make it connected. This is done using a breadth-first search (BFS) with early stopping from the node with the highest degree of each given connected component (except the largest) to the largest connected component. For each connected component (except the largest), a path linking it to the largest connected component is added to the core subgraph.
\end{enumerate}

This way, each relation type is guaranteed to belong to the core subgraph, by design. \autoref{tab:core-selection} confirms this experimentally, even for Freebase and its 14,665 relation types.
This method features two hyperparameters $\eta_n$ and $\eta_e$, the proportions for node and edge selections, controlling the size of its output subgraph. The values we used are provided in \autoref{tab:core-selection} for each dataset.

Regarding performance, \autoref{fig:core-selection} shows that the hybrid strategy slightly underperforms the degree-based approach on four real-world downstream tasks. Indeed, the hybrid approach enhances relational coverage but, as a counterpart, yields a sparser core subgraph (see \textit{\#Triples} in \autoref{tab:core-selection}), which is detrimental to downstream performance. However, one can adjust the values of  $\eta_n$ and $\eta_e$ to control the trade-off between downstream performance and relation coverage, depending on the use case. For instance, for the link prediction task, a better relational coverage (greater $\eta_e$) improves the performance (\cref{app:link-prediction}).

\begin{figure}[t]
    \centering
    \includegraphics[width=0.9\textwidth]{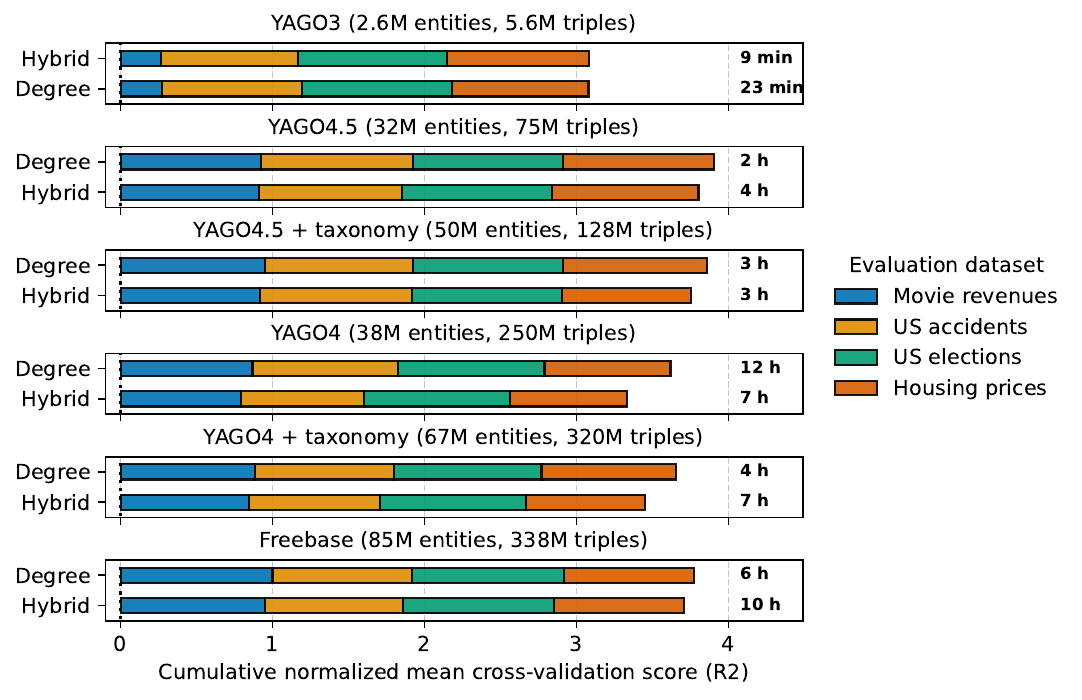}%
    \caption{Performance of SEPAL+DistMult for the two different core selection strategies. We use the hyperparameters of \autoref{tab:core-selection}. The simpler degree-based selection strategy runs faster and performs better on downstream tasks.}%
    \label{fig:core-selection}%
\end{figure}

\begin{table}[b]
    \centering
    \small
    \caption{\textbf{Effect of core selection strategy: PageRank vs Degree.} We compare the performance of SEPAL+DistMult on real-world downstream tasks when using either degree or PageRank as a centrality measure for core selection. We report the average normalized R2 score across the four downstream tasks, along with the total execution time (in parentheses).}
    \label{tab:pagerank_vs_degree}
    \begin{tabular}{lrr}
        \toprule
        \textbf{Dataset} & \textbf{Core defined by Degree} & \textbf{Core defined by PageRank} \\
        \midrule
        YAGO3 & \textbf{0.783} (11m 20s) & 0.742 (9m 23s) \\
        YAGO4 & 0.817 (6h 20m) & \textbf{0.861} (4h 53m) \\
        YAGO4+T & 0.815 (10h 9m) & \textbf{0.881} (8h 16m) \\
        YAGO4.5 & \textbf{0.949} (4h 11m) & 0.925 (4h 3m) \\
        YAGO4.5+T & \textbf{0.923} (2h 47m) & 0.912 (2h 40m) \\
        Freebase & 0.917 (5h 58m) & \textbf{0.919} (5h 58m) \\
        WikiKG90Mv2 & 0.898 (20h 31m) & \textbf{0.902} (23h 59m) \\
        \bottomrule
    \end{tabular}
\end{table}

\paragraph{Exploring other centrality measures}
Here, we compare degree with PageRank as a centrality measure for the core selection (keeping all other hyperparameters unchanged). Like the degree, the PageRank can be computed very efficiently on huge graphs using sparse matrix multiplications.

\autoref{tab:pagerank_vs_degree} reports results on the real-world downstream tasks. Regarding performance, it seems that the difference between degree and PageRank depends on the graph's structure. Specifically, we observe that for YAGO4, YAGO4+T, Freebase, and WikiKG90Mv2, PageRank improves the performance. This echoes the results of \autoref{fig:blocs-connectivity}, which show that in these four specific KGs, contrary to the others, the degree-based core selection yields cores that are connected through fewer edges to some outer subgraphs. We can therefore conjecture that for these KGs, PageRank improves information flow and mitigates issues like oversquashing during propagation, ultimately increasing performance. Regarding computational cost, we see that SEPAL with PageRank usually runs slightly faster than with degree. This is because PageRank yields slightly sparser cores, leading to faster core training.

\subsection{Ablation study: SEPAL without BLOCS}

Here, we study the effect of removing BLOCS from our proposed method.
On smaller knowledge graphs, SEPAL can be used with a simple core subgraph extraction and embedding followed by the embedding propagation. This ablation reveals the impact of BLOCS on the model's performance.
\autoref{fig:blocs_effect} shows that adding BLOCS to the pipeline on graphs that would not need it (because they are small enough for all the embeddings to fit in GPU memory) does not alter performance. Additionally, BLOCS brings scalability. By tuning the maximum subgraph size $m$ hyperparameter, one can move the blue points horizontally on \autoref{fig:blocs_effect} and choose a value within the GPU memory constraints.
There is a trade-off between decreasing GPU RAM usage (\textit{i.e.}, moving the blue points to the left) and increasing execution time, as fewer entities are processed at the same time.

\begin{figure}[tb]
    \includegraphics[width=\textwidth]{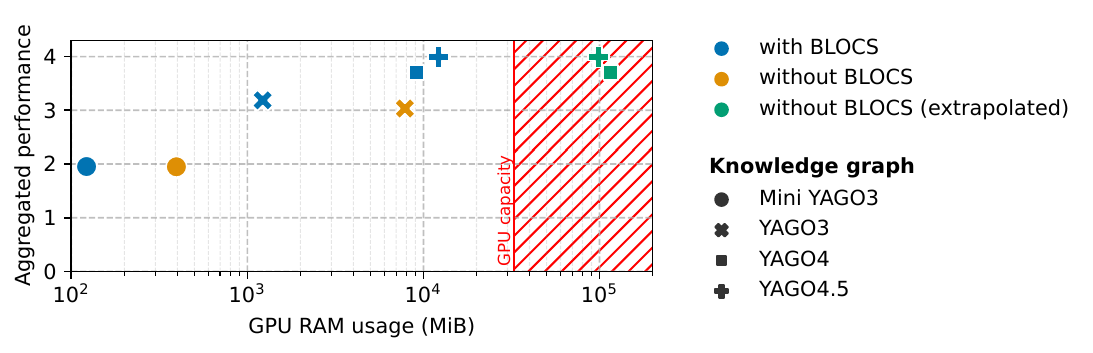}%
    \caption{\textbf{Ablation study: BLOCS scales SEPAL memory-wise.} Normalized R2 scores aggregated across evaluation datasets (movie revenues, US accidents, US elections, housing prices) for SEPAL with and without BLOCS are plotted against GPU RAM usage. BLOCS preserves performance for a given knowledge graph while drastically reducing memory pressure on GPU RAM. Without BLOCS, the GPU runs out of memory for YAGO4 and YAGO4.5.}
    \label{fig:blocs_effect}
\end{figure}

\begin{figure}[b]
    \centering
    \includegraphics[width=\textwidth]{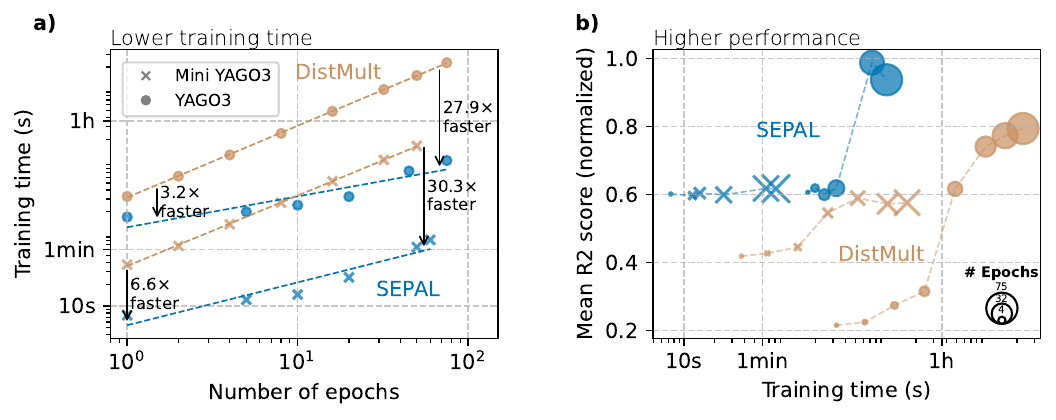}%
    \caption{\textbf{Comparison between SEPAL and its base embedding model. a) Computation time per training iteration.} For a given training configuration, SEPAL is up to 30 times faster than its base embedding algorithm DistMult. \textbf{b) Learning curves.} SEPAL achieves strong downstream performance much quicker than DistMult.
    For both plots, we only vary the number of epochs and fix the following parameters for DistMult's training and SEPAL's core training: $p=1$, $\text{lr} = 1\cdot 10^{-3}$, $b=512$ for Mini YAGO3 and $b=2048$ for YAGO3. For SEPAL, we use the degree-based core selection with $\eta_n = 5\%$.
    }%
    \label{fig:train-time-comparison}%
\end{figure}

\subsection{Speedup over base embedding model}

\autoref{fig:train-time-comparison} demonstrates that SEPAL can accelerate its base embedding model by more than a factor of 20, while also boosting its performance on downstream tasks.

Moreover, the speedup increases with the number of training epochs, as SEPAL’s constant-cost steps (core extraction, BLOCS, and propagation) are amortized when core training gets longer. Indeed, each additional epoch is cheaper with SEPAL than with DistMult, since training is done only on the smaller core rather than the full graph.

\section{Discussion}

\subsection{Comparison to prior work}

\subsubsection{Comparison to DistMult-ERAvg}
\citet{albooyeh2020out} propose an aggregation that is similar to SEPAL's aggregation during the propagation phase, however the two methods optimize the embeddings in two very distinct ways:
\begin{itemize}[itemsep=1pt, parsep=1pt, topsep=0pt]
    \item \citet{albooyeh2020out} introduce propagation \emph{within} the standard link prediction pipeline: during training, for each triple $(v,r,u)$, they occasionally (with probability $p$) replace one entity's embedding with an aggregated version (e.g., $\theta_u\cdot\theta_r$) before computing the plausibility score. However, training still relies on negative sampling and a classic link prediction loss, and thus optimizes for local triple-level contrasts like traditional KGE methods.
    \item SEPAL, in contrast, explicitly separates the optimization objective: embeddings for a small core are trained with a classic KGE objective, and then relation-aware propagation is used across the rest of the graph, without negative sampling. This distinction is crucial: SEPAL removes the need for negative sampling on typically 95 to 99\% of the graph, enabling both improved scalability and alignment properties beneficial for downstream tasks.
\end{itemize}

Moreover, DistMult-ERAvg is not designed for very large graphs. On the datasets considered in this paper, it fails with out-of-memory errors on all the graphs except Mini YAGO3 (129k entities, 1.1M triples).

\begin{table}[t]
    \caption{\textbf{Comparison between SEPAL and DistMult-ERAvg} on Mini YAGO3. We report the normalized mean cross-validation score (R2) across the four real-world downstream tasks, along with the total training time. SEPAL outperforms DistMult-ERAvg while being significantly faster.}
    \label{tab:sepal_vs_distmult_eravg}%
    \centering
    \small
    \begin{tabular}{lrrrrr}
        \toprule
        \textbf{Method} & \textbf{Housing prices} & \textbf{Movie revenues} & \textbf{US accidents} & \textbf{US elections} & \textbf{Time} \\
        \midrule
        DistMult-ERAvg & 0.149 & 0.124 & 0.444 & 0.916 & 2h 41m \\
        SEPAL & \textbf{0.276} & \textbf{0.159} & \textbf{0.548} & \textbf{0.929} & \textbf{5m} \\
        \bottomrule
    \end{tabular}
\end{table}

\autoref{tab:sepal_vs_distmult_eravg} reports performance and runtime on Mini YAGO3, showing that SEPAL is 32 times faster than DistMult-ERAvg while achieving consistently better scores. This is expected since DistMult-ERAvg follows the classic optimization loop with negative sampling, gradient computations, and parameter updates, and therefore inherits the limitations of traditional KGE methods. The strength of DistMult-ERAvg lies in out-of-sample embedding computation, which SEPAL also supports (see \cref{app:continual_learning}), but with greater scalability.

\subsubsection{Comparison to NodePiece}
\label{app:nodepiece}

SEPAL shares with NodePiece the fact that it embeds a subset of entities. Parallels can be drawn between: a) the anchors of NodePiece and the core entities of SEPAL; b) the encoder function of NodePiece and the embedding propagation of SEPAL.
Yet, our approach differs from NodePiece in several ways.

\paragraph{Neighborhood context handling} Both methods handle completely differently the neighborhood of entities. NodePiece tokenizes each node into a sequence of $k$ anchors and $m$ relation types, where $k$ and $m$ are fixed hyperparameters shared by all nodes.
If the node degree is greater than $m$, NodePiece downsamples randomly the relation tokens, and if it is lower than $m$, [PAD] tokens are appended; both seem sub-optimal. In contrast, SEPAL accommodates any node degree and uses all the neighborhood information, thanks to the message-passing approach that handles the neighborhood context naturally.

Additionally, NodePiece’s tokenization relies on an expensive BFS anchor search, unsuitable for huge graphs. On our hardware, we could not run the vanilla NodePiece (PyKEEN implementation) on graphs bigger than Mini YAGO3 (129k entities). For YAGO3 and YAGO4.5, we had to run an ablated version where nodes are tokenized only from their relational context (i.e., $k=0$, studied in the NodePiece paper with good results), to skip the anchor search step.

\paragraph{Training procedure} At train time, NodePiece goes through the full set of triples at each epoch to optimize both the anchors’ embeddings and the encoder function parameters, necessitating many gradient computes and resulting in long training times for large graphs.
On the contrary, SEPAL performs mini-batch gradient descent only on the triples of the core subgraph, which provides significant time savings.
To illustrate this, \autoref{fig:sepal-vs-nodepiece} compares the performance of SEPAL and vanilla NodePiece on Mini YAGO3, showing that SEPAL outperforms NodePiece on downstream tasks while being nearly two times quicker.

\begin{figure}[tb]
    \centering
    \includegraphics[width=0.9\textwidth]{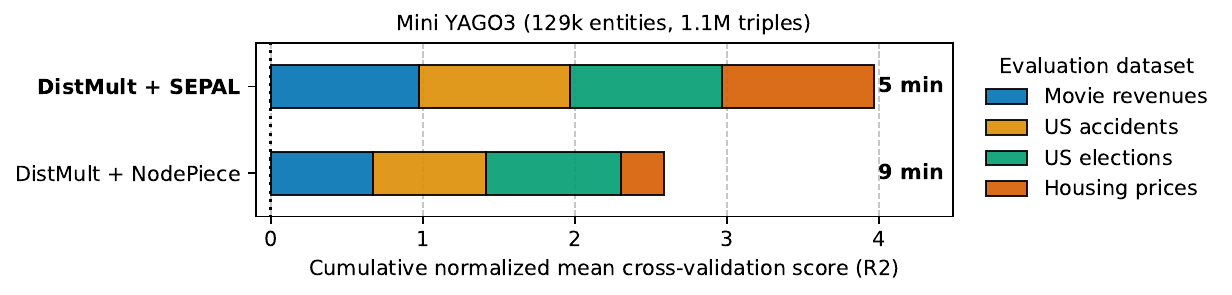}%
    \caption{Comparing SEPAL with NodePiece on Mini YAGO3.}%
    \label{fig:sepal-vs-nodepiece}%
\end{figure}

\paragraph{Embedding propagation to non-anchor/non-core entities} To propagate to non-anchor entities, NodePiece uses an encoder function (MLP or Transformer) that has no prior knowledge of the relational structure of the embedding space, and has to learn it through gradient descent. On the contrary, SEPAL leverages the model-specific relational structure to compute the outer embeddings with no further training needed.

\subsection{Communication costs of SEPAL}
\label{app:communication}
\paragraph{SEPAL optimizes data movement}
In modern computing architectures, memory transfer costs are high, amounting to much computation, and the key to achieve high operation efficiency is to reduce data movement \citep{mutlu2022modern}.

For distributed methods such as PBG or DGL-KE, parallelization incurs additional communication costs due to two factors: \emph{1)} Learned parameters shared across workers (relation embeddings) require frequent synchronization \emph{2)} Entities occurring in several triples belonging to different buckets have their embeddings moved several times from CPU to GPU, and this for every epoch. This latter effect can be mitigated by better partitioning, but remains significant.

\begin{wraptable}{r}{0.27\textwidth}
    \vspace{-1.8em}
    \caption{Empirical average number of memory transfers between CPU and GPU for outer embeddings across datasets.}
    \label{tab:memory-bf}
    \centering
    \small
    \begin{tabular}{lr}
        \toprule
        \textbf{Dataset} & $\boldsymbol{x_{\mathrm{outer}}}$ \\
        \midrule
        Mini YAGO3 & 1.20 \\
        YAGO3 & 3.07 \\
        YAGO4.5 & 7.47 \\
        YAGO4.5+T & 1.94 \\
        YAGO4 & 2.29 \\
        YAGO4+T & 1.27 \\
        Freebase & 2.09 \\
        WikiKG90Mv2 & 1.84 \\
        \midrule
        \textbf{Average} & \textbf{2.65} \\
        \bottomrule
    \end{tabular}
    \vspace{-2em}
\end{wraptable}

SEPAL avoids much of the communication costs by keeping the core embeddings on GPU memory throughout the process. These core embeddings are the ones that would move the most in distributed settings, because they correspond to high-degree entities involved in many triples.
Moreover, SEPAL's propagation loads each outer subgraph only once on the GPU, contrary to other methods that perform several epochs. This significantly reduces data movement for outer embeddings: empirically, they only cross the CPU/GPU boundary twice on average (see \autoref{tab:memory-bf}).

\paragraph{Estimating I/O}
Communication costs occur in SEPAL during the propagation phase, where the embeddings of each of the subgraphs generated by BLOCS have to be loaded on the GPU, subgraph after subgraph. We analyze the number $x$ of back-and-forth of a given embedding between CPU and GPU memory. The optimal value is $x=1$, meaning the embeddings are transferred only once. A detailed breakdown follows:

\begin{itemize}[itemsep=1pt, parsep=1pt, topsep=0pt]
    \item \textbf{For core embeddings:} core embeddings remain at all times on the GPU memory. They are only moved to the CPU once at the end, to be saved on disk with the rest of the embeddings. Therefore, the data movement of core embeddings is optimal: $x_{\mathrm{core}}=1$.
    \item \textbf{For outer embeddings:} SEPAL loads each outer subgraph only once to the GPU. Consequently, the average number $x_{\mathrm{outer}}$ of memory transfers for outer embeddings corresponds to the average number of subgraphs in which a given outer entity appears. \autoref{tab:memory-bf} reports empirical values of $x_{\mathrm{outer}}$ across datasets, ranging from 1.20 to 7.47, with an overall average of 2.65. This redundancy arises from subgraph overlap, which is directly influenced by the $h$ hyperparameter in BLOCS: smaller values of $h$ yield fewer diffusion steps (and more dilation steps), leading to reduced subgraph overlap and, hence, fewer memory transfers per embedding.
\end{itemize}

The optimal value of $x=1$ can only be achieved in the case where the entire graph fits in GPU memory. Every approach that scales beyond GPU RAM limits has its communication overheads, i.e., $x>1$.

For comparison, distributed training schemes that load buckets of triples to GPU iteratively exhibit significantly higher communication costs. In such settings, the number of memory transfers for the embedding of an entity $u$ is given by $x=n_{\mathrm{buckets}}(u) \times n_{\mathrm{epochs}}$ where $n_{\mathrm{buckets}}(u)$ is the number of buckets that contain a triple featuring entity $u$, and $n_{\mathrm{epochs}}$ is the number of epochs. That is at least one or two orders of magnitude greater than what SEPAL achieves in terms of data movement. Indeed, knowledge-graph embedding methods are usually trained for a few tens if not a few hundreds of epochs, so $x$ is much bigger than 10, not even taking into account $n_{\mathrm{buckets}}(u)$ that can be large, depending on the graph structure and on the quality of the graph partitioning.

\subsection{Outlook on continual learning}\label{app:continual_learning}

\begin{wraptable}{r}{0.55\textwidth}
    \vspace{-1.8em}
    \caption{Statistics of the two versions of YAGO3 used to illustrate SEPAL's suitability for continual learning.}
    \label{tab:yago3-versions}
    \centering
    \small
    \begin{tabular}{lrrr}
        \toprule
        \textbf{Version} & \textbf{\#Entities} & \textbf{\#Relations} & \textbf{\#Triples} \\
        \midrule
        YAGO3-2014 & 2,570,716 & 37 & 5,585,004 \\
        YAGO3-2022 & 4,546,966 & 37 & 14,691,781 \\
        \bottomrule
    \end{tabular}
    \vspace{-1em}
\end{wraptable}

The modular nature of SEPAL makes it well-suited for continual learning scenarios, where new entities are added to the knowledge graph over time. Indeed, new embeddings can be computed without retraining from scratch, via a few additional propagation steps, as long as the relations remain unchanged.
To demonstrate this, we use two versions of the YAGO3 knowledge graph: the original 2014 release, and the 2022 revived version. \autoref{tab:yago3-versions} gives the statistics of these two datasets.

\begin{table}[b]
    \caption{\textbf{Continual learning experiment on YAGO3.} Average normalized R2 scores across the four real-world downstream regression tasks (movie revenues, US accidents, US elections, housing prices) and total runtime for different methods. SEPAL-CL denotes SEPAL in the continual learning setting, where new entities are embedded via propagation.}
    \label{tab:continual-results}
    \centering
    \small
    \begin{tabular}{llrr}
        \toprule
        \textbf{YAGO3 version} & \textbf{Method} & \textbf{Average performance} & \textbf{Runtime} \\
        \midrule
        2014 & SEPAL & 0.836 & 0h 11m 20s \\
        2022 & DistMult & 0.884 & 11h 05m 36s \\
        2022 & SEPAL & 0.988 & 1h 05m 07s \\
        2022 & SEPAL-CL & 0.962 & 0h 01m 08s \\
        \bottomrule
    \end{tabular}
\end{table}

We adapt SEPAL to this continual learning setting by: (1) initializing the embeddings of YAGO3-2022 using embeddings precomputed on YAGO3-2014, (2) propagating embeddings for 5 additional steps to update existing nodes and embed new entities. We denote this method SEPAL-CL in the results.

\autoref{tab:continual-results} shows that:

\begin{itemize}[itemsep=1pt, parsep=1pt, topsep=0pt]
    \item For downstream applications, YAGO3-2022 brings value compared to YAGO3-2014. Indeed, for all the methods considered, the embeddings learned on the 2022 dataset score higher than the strongest method on YAGO3-2014, SEPAL.
    \item SEPAL trained from scratch is 10 $\times$ faster than DistMult on YAGO3-2022.
    \item SEPAL-CL is 57 $\times$ faster than SEPAL trained from scratch on YAGO3-2022, and 587 $\times$ faster than DistMult.
    \item SEPAL-CL clearly outperforms DistMult, and almost matches the performance of SEPAL trained from scratch, even in this very challenging scenario (8 years between the two versions of the graph), where the size of the graph has doubled in terms of entities, and tripled in terms of triples. In a real-life application, we could imagine embeddings recomputed monthly at very low cost.
\end{itemize}

\subsection{Broader impacts}\label{app:impact}

SEPAL may reflect the biases present in the training data. For instance, Wikipedia, from which YAGO is derived, under-represents women \citep{reagle2011gender}. We did not evaluate how much our method captures such biases. We note that the abstract nature of embeddings may make the biases less apparent to the user; however, this problem is related to embeddings and not specific to our method. It may be addressed by debiasing techniques \citep{bolukbasi2016man,fisher2020debiasing} for which SEPAL could be adapted.

\end{document}